\newcommand{\ip}[2] {\ensuremath{\langle #1 , #2 \rangle}}
\newcommand{\id}{\ensuremath{\mathrm{Id}}}
\newcommand{\norm}[1]{\ensuremath{\lVert #1 \rVert}}
\newcommand{\EE}{\mathbb{E}}
\newcommand{\RR}{\mathbb{R}}
\newcommand{\calD}{\mathcal{D}}
\newcommand{\calU}{\mathcal{U}}
\newcommand{\eps}{\epsilon}
\newcommand{\al}{\alpha}
\newcommand{\lda}{\lambda}
\newcommand{\gam}{\gamma}
\newcommand{\calR}{\mathcal{R}}
\newcommand{\indistribution}{\overset{\mathcal{D}}{=}}
\newcommand{\diag}{\mathrm{Diag}}
\newcommand{\mc}[1]{\mathcal{#1}}
\DeclareMathOperator{\Div}{Div}
\newtheorem{theorem}{Theorem}
\newtheorem{corollary}{Corollary}
\newtheorem{lemma}{Lemma}
\newtheorem{definition}{Definition}
\newtheorem{fact}{Fact}
\newtheorem{remk}{Remark}
\newtheorem{assumption}{Assumption}
\def\showauthornotes{1}
\newcommand{\Authornote}[2]{{\sf\small\color{blue}{[#1: #2]}}}
\newcommand{\Authornote}[2]{}
\newcommand{\tgt}[1]{t_{#1}}
\newcommand{\tgtalt}[1]{\alt{t}_{#1}}
\newcommand{\alt}[1]{\widetilde{#1}}
\newcommand{\rowint}[1]{r^{({#1})}}
\newcommand{\rowobs}[1]{s^{({#1})}}
\newcommand{\rowintalt}[1]{\alt{r}^{({#1})}}
\newcommand{\rowobsalt}[1]{\alt{s}^{({#1})}}
\newcommand{\dimx}{d'}
\title{Learning Linear Causal Representations from Interventions under General Nonlinear Mixing}
\author[1]{Simon Buchholz$^*$}
\author[2]{Goutham Rajendran\thanks{Equal Contribution}}
\author[2]{Elan Rosenfeld}
\author[3]{\authorcr Bryon Aragam}
\author[1]{Bernhard Sch\"{o}lkopf}
\author[2]{Pradeep Ravikumar}
\affil[1]{Max Planck Institute for Intelligent Systems, T\"ubingen, Germany}
\affil[2]{Carnegie Mellon University, Pittsburgh, USA}
\affil[3]{University of Chicago, Chicago, USA}
\begin{document}

\maketitle

\begin{abstract}
    We study the problem of learning causal representations from unknown, latent interventions in a general setting, where the latent distribution is Gaussian but the mixing function is completely general. We prove strong identifiability results given unknown single-node interventions, i.e., without having access to the intervention targets. This generalizes prior works which have focused on weaker classes, such as linear maps or paired counterfactual data. This is also the first instance of causal identifiability from non-paired interventions for deep neural network embeddings. Our proof relies on carefully uncovering the high-dimensional geometric structure present in the data distribution after a non-linear density transformation, which we capture by analyzing quadratic forms of precision matrices of the latent distributions. Finally, we propose a contrastive algorithm to identify the latent variables in practice and evaluate its performance on various tasks.
\end{abstract}

\section{Introduction}
Modern generative models such as GPT-4 \cite{openai2023gpt4} or DDPMs \cite{ho2020denoising} achieve tremendous performance for a wide variety of tasks \cite{bubeck2023sparks}. They do this by effectively learning high-level representations which map to raw data through complicated \textit{non-linear maps}, such as transformers \cite{vaswani2017attention} or diffusion processes \cite{sohl2015deep}.
However, we are unable to reason about the specific representations they learn, in particular they are not necessarily related to the true underlying data generating process. Besides their susceptibility  to bias \cite{ferrara2023should}, they often fail to generalize to out-of-distribution settings \cite{bommasani2021opportunities}, rendering them problematic in safety-critical domains. In order to gain a deeper understanding of what representations deep generative models learn, one line of work has pursued the goal of causal representation learning (CRL) \cite{scholkopf2021towards}. CRL aims to learn high-level representations of data while simultaneously recovering the rich causal structure embedded in them, allowing us to reason about them and use them for higher-level cognitive tasks such as systematic generalization and planning. This has been used effectively in many application domains including genomics \cite{lotfollahi2021compositional} and robotics \cite{lee2021causal, weichwald2022learning}.

A crucial primitive in representation learning is the fundamental notion of identifiability \cite{khemakhem2020ice, roeder2021linear}, i.e., the question whether a unique (up to tolerable ambiguities) model can explain the observed data.
It is well known that
because of non-identifiability,
CRL is impossible in general settings
in the absence of inductive biases or additional information \cite{hyvarinen1999nonlinear, locatello2019challenging}. 
In this work, we consider additional information in the form of interventional data \cite{scholkopf2021towards}.
It is common to have  access to such data in many application domains such as genomics and robotics stated above (e.g. \cite{dixit2016perturb, norman2019exploring, nejatbakhsh2020predicting}). 
Moreover, there is a pressing need in such safety-critical domains to build reliable and trustworthy systems, making identifiable CRL particularly important. 
Therefore, it's important to study whether we can and also how to perform identifiable CRL from raw observational and interventional data.
Here, identifiability 
opens the possibility to provably recover the true representations with formal guarantees. 
Meaningfully learning such representations with causal structure enables better interpretability, allows us to reason about fairness, and  helps with performing high-level tasks such as planning and reasoning.

In this work, we study precisely this problem of causal representation learning in the presence of interventions.
While prior work has studied simpler settings of linear or polynomial mixing \cite{seigal2022linear, varici2023score, ahuja2022interventional, rosenfeld2021risks, chen2022iterative}, we allow for general non-linear mixing, which means our identifiability results apply 
to complex real-world systems and datasets which are used in practice.
With our results, we make progress on fundamental questions on interventional learning raised by \cite{scholkopf2021towards}. 

Concretely, we consider a general model with latent variables $Z$ and observed data $X$ generated as $X = f(Z)$ where $f:\RR^{d}\to\RR^{\dimx}$ is an arbitrary non-linear mixing function. We assume $Z$ satisfies a Gaussian structural equation model (SEM) which is unknown and unobserved. 
A Gaussian prior is commonly used in practice (which implies a linear SEM over $Z$) and further, having a simple model for $Z$ allows us to learn useful representations, generate data efficiently, and explore the latent causal relationships, while the non-linearity of $f$ ensures universal approximability of the model.
We additionally assume access to interventional data $X^{(i)} = f(Z^{(i)})$ for $i \in I$ where $Z^{(i)}$ is the latent under an intervention on a single node $Z_{\tgt{i}}$.
Notably, we allow for various kinds of interventions,
we don't require knowledge of the targets $\tgt{i}$, and we don't require paired data, (i.e., we don't need counterfactual samples from the joint distribution $(X, X^{(i)})$, but only their marginals). 
Having targets or counterfactual data
is unrealistic in many practical settings but many prior identifiability results require them, so eliminating this dependence is a crucial step towards CRL in the real world.

\textbf{Contributions} \hspace{.2cm}
Our main contribution is a general solution to the identifiability problem that captures practical settings with non-linear mixing functions (e.g. deep neural networks) and unknown interventions (since the targets are latent).
We study both perfect and imperfect interventions and additionally allow for shift interventions, where 
perfect interventions remove the dependence of the target variable from its parents, while imperfect interventions (also known as soft interventions) modify the dependencies.
Below, we summarize our main contributions:
\begin{enumerate}
    \item We show identifiability of the full latent variable model, 
    given non-paired interventional data with unknown intervention targets. 
    In particular, we learn the mixing, the targets and the latent causal structure.
    \item Compared to prior works that have focused on linear/polynomial $f$, we allow non-linear $f$ which encompasses representations learned by e.g. deep neural networks and captures complex real-world datasets. Moreover, we study both imperfect (also called soft) and perfect interventions, and always allow shift interventions.
    \item We construct illustrative counterexamples to probe tightness of our assumptions which suggest directions for future work. 
    \item We propose a novel algorithm based on contrastive learning to learn causal representations from interventional data, and we run experiments to validate our theory. 
    Our experiments suggest that a contrastive approach, which so far has been unexplored in interventional CRL, is a promising
    technique going forward.    
\end{enumerate}

\section{Related work}
\label{sec:related}

Causal representation learning \cite{scholkopf2021towards, scholkopf2022statistical} has seen much recent progress and applications since it generalizes and connects the fields of Independent Component Analysis \cite{comon1994independent, hyvarinen2000independent, hyvarinen2002independent}, causal inference \cite{spirtes2000causation, pearl2009causality, peters2017elements, rajendran2021structure, squires2022causal} and latent variable modeling \cite{kivva2021learning, arora2013practical, silva2006learning, xie2020generalized, hyvarinen2023identifiability}. Fundamental to this approach is the notion of identifiability \cite{khemakhem2020variational, d2022underspecification, wang2021posterior}. Due to non-identifiability in general settings without inductive biases \cite{hyvarinen1999nonlinear, locatello2019challenging}, prior works have approached this problem from various angles --- using additional auxiliary labels \cite{khemakhem2020variational, hyvarinen2016unsupervised, brehmer2022weakly, shen2022weakly}; by imposing sparsity \cite{scholkopf2021towards,lachapelle2022disentanglement, moran2022identifiable, zheng2022identifiability}; or by restricting the function classes \cite{kivva2022identifiability, buchholz2022function, zimmermann2021contrastive, gresele2021independent}. See also the survey \cite{hyvarinen2023identifiability}.
Moreover, a long line of works have proposed practical methods for CRL (which includes causal disentanglement as a special case), \cite{falck2021multi, willetts2021don, dilokthanakul2016deep, Yang_2021_CVPR, li2019identifying, cuiaggnce} to name a few. It's worth noting that most of these approaches are essentially variants of the Variational Autoencoder framework \cite{kingma2013auto, rezende2014stochastic}.

Of particular relevance to this work is the setting when interventional data is available. 
We first remark that the much simpler case of fully observed variables (i.e., no latent variables), has been studied in e.g. \cite{hauser2012characterization, peters2015causal, squires2020permutation, jaber2020causal, eberhardt2012number} (see also the survey \cite{squires2022causal}).
In this work, we consider the more difficult setting of structure learning over latent variables, which have been explored in
\cite{lippe2022citris, lachapelle2022disentanglement, brehmer2022weakly, zimmermann2021contrastive, ahuja2022weakly, seigal2022linear, varici2023score, ahuja2022interventional, rosenfeld2021risks, chen2022iterative, rosenfeld2022domain}. Among these, \cite{lippe2022citris} assumes that the intervention targets are known, \cite{lachapelle2022disentanglement} specifically consider instance-level pre- and post- interventions for time-series data and \cite{brehmer2022weakly, zimmermann2021contrastive, ahuja2022weakly} assumes access to paired counterfactual data. In contrast, we assume unknown targets and work in general settings with non-paired interventional data, which is important in various real-world applications \cite{stark2020scim, zhang2021matching}. 
The work \cite{liu2022identifying} require several graphical restrictions on the causal graph and also require $2d$ interventions, while we make no graphical restrictions and only require $d$ interventions (which we also show cannot be improved under our assumptions). 
\cite{rosenfeld2021risks, chen2022iterative, seigal2022linear, varici2023score} consider linear mixing functions $f$ whereas we study general non-linear $f$. Finally, \cite{ahuja2022interventional} consider polynomial mixing in the more restricted class of deterministic do-interventions.
Several concurrent works study related settings \cite{zhang2023identifiability, jiang2023learning, rajendran2023interventional, liang2023causal, von2023nonparametric}.
For a more detailed comparison to the related works, we refer to Appendix~\ref{sec:comparisons}.

Our proposed algorithm is based on contrastive learning.
Contrastive learning has been used in other contexts in this domain \cite{hyvarinen2019nonlinear, hyvarinen2016unsupervised, morioka2023connectivity, zimmermann2021contrastive,von2021self} (either in the setting of time-series data or paired counterfactual data), however the application to non-paired interventional settings is new to the best of our knowledge.

\textbf{Notation}\hspace{.2cm}
In this work, we will almost always work with vectors and matrices in $d$ dimensions where $d$ is the latent dimension, and we will disambiguate when necessary.
For a vector $v$, we denote by $v_i$ its $i$-th entry.
Let $\id$ denote the $d \times d$ identity matrix with columns as the standard basis vectors $e_1, \ldots, e_d$.
We denote by $N(\mu, \Sigma)$ the multivariate normal distribution with mean $\mu$ and covariance $\Sigma$. For a set $C$, let $\calU(C)$ denote the uniform distribution on $C$.
For two random variables $X, X'$, we write $X\indistribution X'$ if their distributions are the same.
We denote the set $\{1, \ldots, d\}$ by $[d]$.
For permutation matrices we use the convention that $(P_\omega)_{ij}=\boldsymbol{1}_{j=\omega(i)}$
for $\omega\in S_d$.
We use standard directed graph terminology, e.g. edges, parents.
When we use the term ``non-linear mixing'' in this work, we also include linear mixing as a special case.

\section{Setting: Interventional Causal Representation Learning}\label{sec:setting}

\begin{figure}
\begin{subfigure}{0.32\textwidth}
    \centering
    \includegraphics[scale = 0.4]{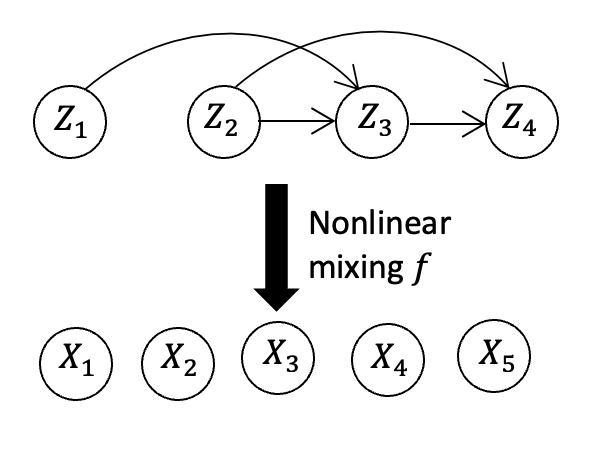}
    \caption{No interventions}
    \label{fig:obs}
\end{subfigure}
\begin{subfigure}{0.32\textwidth}
    \centering
    \includegraphics[scale = 0.4]{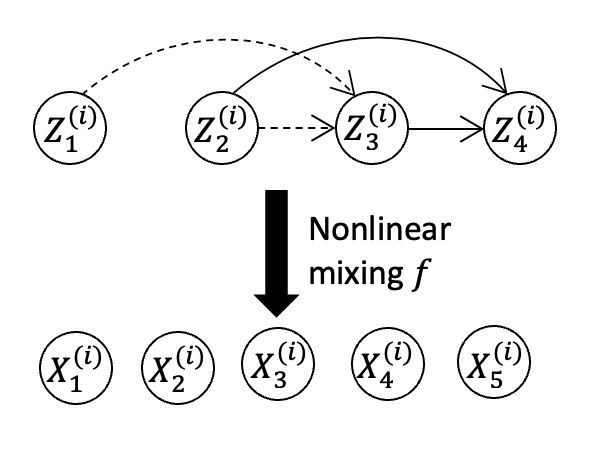}
    \caption{An imperfect intervention}
    \label{fig:soft_itv}
\end{subfigure}
\begin{subfigure}{0.32\textwidth}
    \centering
    \includegraphics[scale = 0.4]{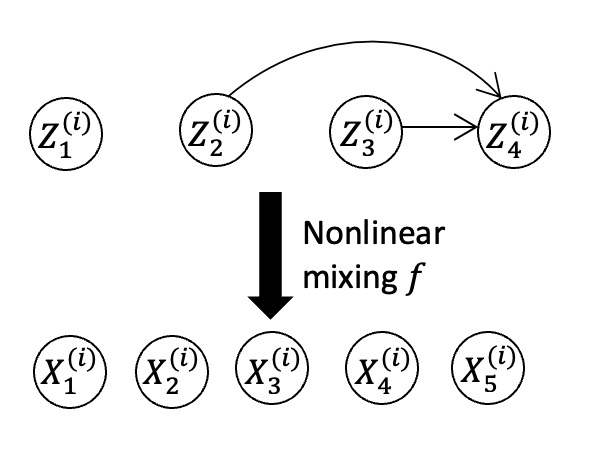}
    \caption{A perfect intervention}
    \label{fig:hard_itv}
\end{subfigure}
\caption{Illustration of an example latent variable model under interventions. $(a)$ No interventions. $(b)$ An imperfect intervention on node $t_i =3$. Dashed edges indicate the weights could have potentially been modified. $(c)$ A perfect intervention on node $t_i = 3$.}\label{fig:setup}
\end{figure}

We now formally introduce our main settings of interest and the required assumptions.
We assume that there is a latent variable distribution $Z$ on $\RR^d$ and an observational distribution $X$ on $\RR^{\dimx}$ given by $X = f(Z)$ where $d\leq \dimx$ and $f$ is a non-linear mixing.
This encapsulates the most general definition of a latent variable model. As the terminology suggests, we observe $X$ in real-life, e.g. images of cats and $Z$ encodes high-level latent information we wish to learn and model, e.g. $Z$ could indicate orientation and size.

\begin{assumption}[Nonlinear $f$]
\label{as:1}
    The non-linear mixing $f$ is  injective, differentiable
   and embeds into $\RR^{\dimx}$.
\end{assumption}
Such an assumption is standard in the literature.
Injectivity is needed in order to identify $Z$ from $X$ because otherwise we may have learning ambiguity if multiple $Z$s map to the same $X$. Differentiability is needed to transfer densities.
Note that \cref{as:1} allows for a large class of complicated nonlinearities
and we discuss in Appendix~\ref{sec:approximation} why proving results for such a large class of mixing functions is important.

Next, we assume that the latent variables $Z$ encode causal structure  which can be expressed  through a Structural Causal Model (SCM) on a Directed Acyclic Graph (DAG).
We focus on linear SCMs with an underlying causal graph $G$ on vertex set $[d]$, encoded by a matrix $A$ through its non-zero entries, i.e., there is an edge $i\to j$ in $G$ iff $A_{ji}\neq 0$.
\begin{assumption}[Linear Gaussian SCM on Latent Variables]\label{as:2}
    The latent variables $Z$ follow a linear SCM with Gaussian noise, so 
    \begin{align}\label{eq:linear_dag}
        Z = AZ + D^{1/2}\eps
    \end{align}
    where $D$ is a diagonal matrix with positive entries, $A$ encodes a DAG $G$ and $\eps \sim N(0, \id)$.
\end{assumption}
For an illustration of the model, see \cref{fig:obs}.
It is convenient to encode the coefficients of linear SCMs
through the matrix $B=D^{-1/2}(\id - A)$. 
Then $Z = B^{-1} \eps$ and  both $D$ and $A$ can be recovered from $B$, indeed the diagonal entries
of $B$ agree with the entries of $D^{-1/2}$. Note that we can and will always assume that the entries of $D$ are positive since $\eps\indistribution -\eps$.

Assuming that the latent variables follow a linear Gaussian SCM is certainly restrictive but nevertheless a reasonable approximation of the underlying data generating process in many settings.
The Gaussian prior assumption is also standard in latent variable modeling and enables efficient inference for downstream tasks, among other advantages. Importantly, under the above assumptions, our model has infinite modeling capacity \cite{lu2020universal,teshima2020coupling, ishikawa2022universal}, so they're able to model complex datasets such as images and there's no loss in representational power. 

The goal of representation learning 
is to learn the non-linear mixing $f$ and the high-level latent distribution $Z$ from observational data $X$. In causal representation learning, we wish to go a step further and model $Z$ as well by learning the parameters $B$ which then lets us easily recover $A, D$ and the causal graph $G$.
For general non-linear mixing, this model is not identifiable and hence we cannot hope to recover $Z$, but in this work we use additional interventional data, similar to the setups in recent works \cite{ahuja2022interventional, seigal2022linear, varici2023score}.

\begin{assumption}[Single node interventions]
\label{as:3}
    We consider interventional distributions $Z^{(i)}$ for $i\in I$ which are single node interventions of the latent distribution $Z$, i.e., for each intervention $i$ there is a target node $\tgt{i}$ and we change the assigning equation of $Z^{(i)}_{\tgt{i}}$ to 
    \begin{align}
        Z^{(i)}_{\tgt{i}}= (A^{(i)} Z^{(i)})_{\tgt{i}} + (D^{(i)})^{1/2}_{\tgt{i},\tgt{i}}(\eps_{\tgt{i}}+\eta^{(i)})
    \end{align}
    while leaving all other equations unchanged.
    We assume that $A^{(i)}_{\tgt{i}, k}\neq 0$ only if 
    $k\to \tgt{i}$ in $G$, i.e., no parents are added and
    $\eta^{(i)}$ denotes a shift of the mean.
\end{assumption}
For an illustration, see \cref{fig:soft_itv}.
An intervention on node $\tgt{i}$ has no effect on the non-descendants of $\tgt{i}$, but will have a downstream effect on the descendants of $\tgt{i}$. 
In particular, $A$ is modified so that the weight of the edge $k \to \tgt{i}$ could potentially be changed if it exists already but no new incoming edges to $\tgt{i}$ may be added. Also, the noise variable $\eps_{t_i}$ is also allowed to be modified via a scale intervention as well as a shift intervention. Note that \cite{seigal2022linear, ahuja2022interventional} do not allow shift interventions, whereas we do.

Again, this can be written concisely as 
$Z^{(i)}=(B^{(i)})^{-1}(\eps + \eta^{(i)}e_{\tgt{i}})$
where 
$B^{(i)} = (D^{(i)})^{-1/2}(\id - A^{(i)})$.
Let $\rowint{i}$ denote the row $\tgt{i}$ of $B^{(i)}$, then we can write
$B^{(i)} = B - e_{\tgt{i}} (B^{\top} e_{\tgt{i}}-\rowint{i})^\top$.
We assume that interventions are non-trivial, i.e., $Z^{(i)}\overset{\mathcal{D}}{\neq} Z$.
In our model, we observe interventional distributions $X^{(i)} = f(Z^{(i)})$ for various interventions $i \in I$.%

\begin{definition}[Intervention Types]
\label{def:intervention_type}
We call an intervention perfect (or stochastic hard)
if $A^{(i)}_{\tgt{i} \cdot} = 0$, i.e., 
we remove all connections to former parents and potentially change variance and mean of the noise variable.  Note that for nodes without parents, either $D^{(i)}_{\tgt{i}\tgt{i}}\neq D_{\tgt{i}\tgt{i}}$
or $\eta^{(i)}\neq 0$ so that the intervention is non-trivial.
We call an intervention a pure noise intervention if $A^{(i)}_{\tgt{i} \cdot} = A_{\tgt{i} \cdot}$
and $A_{\tgt{i} \cdot}\neq 0$ (i.e., node $\tgt{i}$ has at least one parent).
In other words, a pure noise intervention   targets a node with parents by changing only the noise distribution through a change of variance (encoded in $D$) or a mean shift (encoded in $\eta$).
\end{definition}

We call a single-node intervention imperfect if it is not perfect (some prior works call it a soft intervention). Note that perfect interventions are never of pure noise type because they necessarily change the relation to the parents.
For an illustration of perfect and imperfect interventions, see \cref{fig:hard_itv}, \cref{fig:soft_itv} respectively.
Finally, we require an exhaustive set of interventions.
\begin{assumption}[(Coverage of Interventions)]
\label{as:4}
    All nodes are intervened upon by at least one intervention, i.e.,
    $\{\tgt{i}: \, i\in I\}=[d]$
\end{assumption}

This assumption was also made in the prior works \cite{seigal2022linear, ahuja2022interventional, varici2023score}. 
When the interventions don't cover all the nodes, we have non-identifiability as described in Section \ref{par:counterexamples} and \cref{sec:counterexamples}. We also extensively discuss that none  of our other assumptions can simply be dropped in these sections.

\begin{remk}
\begin{itemize}
    \item 
    Our theory also readily extends to noisy observations, i.e., $X = f(Z) + \nu$ where $\nu$ is independent noise. In this case, we first denoise via a standard deconvolution argument \cite{khemakhem2020variational, lachapelle2022disentanglement} and then apply our theory.
    \item 
    Similar to the results in \cite{seigal2022linear} we can assume completely unknown intervention targets, i.e., there might be multiple interventions targeting the same node, and we do not need to know the partition. We  only require coverage of all nodes. In contrast to their work we assume that we know which dataset corresponds to the observational distribution, but we expect that this restriction can be removed. 
\end{itemize}
\end{remk}

To simplify the notation, it is convenient to use
$B^{(0)}=B$, $\eta^{(0)}=0$ for the observational distribution. We also define $\bar{I}=I\cup \{0\}$.
Then, all information about the latent variable distributions $Z^{(i)}$ and observed distributions
$X^{(i)}$ are contained in
$((B^{(i)}, \eta^{(i)}, \tgt{i})_{i\in\bar{I}}, f)$.
\section{Main Results}
\label{sec:main_results}
We can now state our main results for the setting introduced above.
\begin{restatable}{theorem}{Main}\label{th:main}
    Suppose we are given distributions $X^{(i)}$
    generated using a model $((B^{(i)}, \eta^{(i)}, \tgt{i})_{i\in\bar{I}}, f)$ such that
    Assumptions \ref{as:1}-\ref{as:4} hold
    and such that all interventions $i$ are perfect.
    Then the model is identifiable up to permutation and scaling, i.e., for any model $((\alt{B}^{(i)}, \alt{\eta}^{(i)}, \tgtalt{i})_{i\in\bar{I}}, \alt{f})$ that generates the same data,
    the latent dimension $d$ agrees and there is a permutation $\omega\in S_d$ (and associated permutation matrix $P_{\omega}$) 
    and an invertible pointwise scaling matrix $\Lambda\in \diag(d)$ such that
    \begin{align}
        \tgtalt{i}=\omega(\tgt{i}), 
        \quad \alt{B}^{(i)}=P_\omega^\top B^{(i)}\Lambda^{-1}P_\omega ,
        \quad \alt{f}=f\circ \Lambda^{-1}P_\omega,
        \quad \alt{\eta}^{(i)}=\eta^{(i)}.
    \end{align}
    This in particular implies that
    \begin{align}
        \alt{Z}^{(i)}
        \indistribution P_\omega^\top \Lambda Z^{(i)}
    \end{align}
    and we can identify the causal graph $G$ up to permutation of the labels.
\end{restatable}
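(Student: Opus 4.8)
The plan is to reduce the nonlinear problem to a linear one and then solve the resulting matrix identifiability problem using the low-rank structure of perfect interventions. Since $f,\alt f$ are injective differentiable embeddings (\cref{as:1}) with the same image (both generate the data $X^{(i)}$), the map $h:=f^{-1}\circ\alt f$ is a diffeomorphism of $\RR^d$ satisfying $Z^{(i)}\indistribution h(\alt Z^{(i)})$ for every $i\in\bar I$. Transferring densities gives, for each $i$,
\[
\log p_{\alt Z^{(i)}}(\alt z)=\log p_{Z^{(i)}}(h(\alt z))+\log\lvert\det Dh(\alt z)\rvert ,
\]
where the geometric Jacobian of $f$, fixed by the shared image manifold, has already been absorbed. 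The decisive move is to subtract the observational ($i=0$) equation from the interventional one, so that the $i$-independent term $\log\lvert\det Dh\rvert$ cancels and leaves
\[
q_i(\alt z):=\log p_{\alt Z^{(i)}}(\alt z)-\log p_{\alt Z^{(0)}}(\alt z)=g_i(h(\alt z))+\mathrm{const},
\]
with $q_i,g_i$ quadratic polynomials whose leading parts are governed by the precision differences $\alt\Delta_i=\alt\Theta^{(i)}-\alt\Theta^{(0)}$ and $\Delta_i=\Theta^{(i)}-\Theta^{(0)}$, where $\Theta^{(i)}=(B^{(i)})^\top B^{(i)}$. For perfect interventions this update has rank at most two and carries a distinguished coordinate direction, $\Delta_i=\lda_i^2\,e_{\tgt i}e_{\tgt i}^\top-(B^\top e_{\tgt i})(B^\top e_{\tgt i})^\top$.

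Phase 1 (linearization) is the heart of the argument: I would show $h$ is affine. Differentiating $q_i=g_i\circ h+\mathrm{const}$ yields the score relation $Dh(\alt z)^\top\nabla g_i(h(\alt z))=\nabla q_i(\alt z)$, in which both gradients are affine in their arguments. Differentiating once more produces a Hessian identity in which the second derivatives of $h$ appear linearly, contracted against the affine scores $\nabla g_i(h(\alt z))$, while the purely quadratic contributions assemble into $Dh^\top\Delta_i Dh$ and $\alt\Delta_i$. No single intervention suffices here, since each $\Delta_i$ is only rank two; the point is to combine these identities over all $i$. By \cref{as:4} the targets $\tgt i$ exhaust $[d]$, so the rank-one pieces $\lda_i^2 e_{\tgt i}e_{\tgt i}^\top$ collectively span every coordinate direction, which is exactly the nondegeneracy needed to force all second derivatives of $h$ to vanish. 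This is the high-dimensional geometric structure that the precision quadratic forms encode, and I expect this step — extracting affineness of an a priori arbitrary diffeomorphism from finitely many low-rank quadratic constraints, using the Jacobian cancellation plus coverage — to be the main obstacle.

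Phase 2 (linear identifiability): once $h(\alt z)=H\alt z+c$, matching covariances gives $\alt\Theta^{(i)}=H^\top\Theta^{(i)}H$ for all $i\in\bar I$, hence $\alt B^{(i)}=Q_i B^{(i)}H$ for orthogonal matrices $Q_i$, while matching means forces $c=0$ (both observational distributions are centered). Subtracting the observational case yields the congruence $\alt\Delta_i=H^\top\Delta_i H$ of the rank-$\le 2$ perfect-intervention updates. I would then exploit the distinguished role of $e_{\tgt i}$ inside the range of $\Delta_i$: its range meets the coordinate axes only in directions tied to $\tgt i$ and its parents, and since the parents are themselves targets of other interventions (coverage), a simultaneous analysis of the congruences, together with the scale-change condition $\lda_i^2\neq D_{\tgt i,\tgt i}$ keeping each update nondegenerate, forces $H=\Lda^{-1}P_\omega$ for a permutation $\omega\in S_d$ and an invertible $\Lda\in\diag(d)$, with $\tgtalt i=\omega(\tgt i)$.

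Finally I would substitute back. From $\alt f=f\circ h$ we immediately get $\alt f=f\circ\Lda^{-1}P_\omega$; from $\alt B^{(i)}=Q_iB^{(i)}H$, resolving the orthogonal factor $Q_i$ via the DAG/triangular structure of the SCM factorization (a precision matrix determines its SCM matrix once the topological order is fixed) gives $\alt B^{(i)}=P_\omega^\top B^{(i)}\Lda^{-1}P_\omega$ and $\alt\eta^{(i)}=\eta^{(i)}$. Reading off the support pattern of $\alt B$ then recovers the graph $G$ up to relabeling by $\omega$, and the distributional identity $\alt Z^{(i)}\indistribution P_\omega^\top\Lda Z^{(i)}$ follows directly from $Z^{(i)}\indistribution H\alt Z^{(i)}$.
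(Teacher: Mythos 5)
Your overall architecture --- cancel the Jacobian determinant by subtracting the observational log-density, reduce to quadratic relations governed by the rank-$\le 2$ precision differences, prove $h$ is affine, then finish with the linear-mixing identifiability of \cite{seigal2022linear} --- is exactly the paper's (Lemma~\ref{le:workhorse}, Lemma~\ref{le:precision}, Theorem~\ref{th:linear_ident}, Theorem~\ref{th:linear_setting}). Phases 2 and 3 are essentially fine modulo citing the linear result. The problem is Phase 1, which is where all of the paper's actual work lies, and your proposed mechanism for it does not close. Differentiating $q_i = g_i\circ h + \mathrm{const}$ twice gives
\begin{align}
\alt\Delta_i \;=\; Dh(\alt z)^\top \Delta_i\, Dh(\alt z) \;+\; \sum_{k}\bigl[\nabla g_i(h(\alt z))\bigr]_k\, \nabla^2 h_k(\alt z),
\end{align}
and here the Hessians of $h$ are contracted against the scores $\nabla g_i(h(\alt z))$, which vary with $\alt z$ and depend on the unknown $h$, while the first term involves the unknown, non-constant Jacobian $Dh(\alt z)$. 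The fact that the pieces $\lda_i^2 e_{\tgt i}e_{\tgt i}^\top$ span all coordinate directions does not decouple this system: each $\Delta_i$ also carries the negative rank-one part $-(B^\top e_{\tgt i})(B^\top e_{\tgt i})^\top$, and no amount of summing over $i$ isolates $\nabla^2 h_k$ from $Dh$. Indeed, the paper's own counterexamples (Lemma~\ref{le:d-1_interventions}, Fact~\ref{fact:linear_ident_optimality}) exhibit genuinely nonlinear measure-preserving $h$ for Gaussian families with full-rank or sign-definite precision differences, so any argument that only uses "rank-two quadratic constraints plus coverage" locally is doomed; the specific sign and support structure of perfect interventions must enter.

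What the paper actually does in the linearization step is global and inductive, and neither ingredient appears in your sketch. One orders the $\alt Z$ variables topologically so that $\rowintalt{k+1}$ and $\rowobsalt{k+1}$ are supported on the first $k+1$ coordinates; assuming $h_1,\dots,h_k$ are already linear, the relation \eqref{eq:log_density} collapses to a scalar equation of the form $g(z)^2 = z^\top Qz + w\cdot z+\alpha$ (or $g(z)(w\cdot z+\beta)=\dots$ in the degenerate case $\delta=0$) for $g$ an affine perturbation of $h_{k+1}$. The conclusion that $Q$ has rank one and $g$ is affine (Lemma~\ref{le:quad}, Corollaries~\ref{co:quad} and~\ref{co:lin}) rests on surjectivity of $g$ and the \emph{connectedness of the level sets} of the quadratic form when its rank exceeds one --- a topological argument with no local/differential counterpart. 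There is also a separate case analysis ($\delta\neq 0$; $\delta=0,b\neq 0$; $\delta=0,b=0$, the last forcing a pure shift intervention and requiring Lemma~\ref{le:complete_square}) that your proposal does not anticipate. So the gap is concrete: the step you flag as "the main obstacle" is the theorem, and the Hessian-plus-spanning route you propose for it is not a viable substitute for the induction along the causal order combined with the global rank-one/connectedness argument.
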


This result says that for the interventional model as described in the previous section, we can identify the non-linear map $f$, the intervention targets $\tgt{i}$, the parameter matrices $B, D, A$ up to permutations $P_{\omega}$ and diagonal scaling $\Lambda$. Moreover, we can recover the shifts $\eta^{(i)}$ exactly and also the underlying causal graph $G$ up to permutations.

\begin{remk}
    Identifiability and recovery up to permutation and scaling is the best possible for our setting. This is because the latent variables $Z$ are not actually observed, which means we cannot (and in fact don't need to) resolve permutation and scaling ambiguity without further information about $Z$. See \cite[Proposition 1]{seigal2022linear} for the short proof.
\end{remk}

When we drop the assumption that the interventions are perfect, we can still obtain a weaker identifiability result.
Define $\prec_G$ to be the minimal partial order on $[d]$ such that $i \prec_G j$ if $(i, j)$ is an edge in $G$, i.e., 
$i\prec_G j$ if and only if $i$ is an ancestor of $j$ in $G$.
Note that any topological ordering of $G$ is compatible with the partial order $\prec_G$.
Then, our next result shows that under imperfect interventions, we can still recover the partial order $\prec_G$.

\begin{restatable}{theorem}{Partial}\label{th:partial}
  Suppose we are given the distributions $X^{(i)}$
    generated using a model $((B^{(i)}, \eta^{(i)}, \tgt{i})_{i\in\bar{I}}, f)$
    with causal graph $G$ such that
    the Assumptions \ref{as:1}-\ref{as:4} hold
    and none of the interventions is a pure noise intervention. 
    Then for any other model $((\alt{B}^{(i)}, \alt{\eta}^{(i)}, \tgtalt{i})_{i\in\bar{I}}, \alt{f})$
    with causal graph $\alt{G}$ generating the same observations the latent dimension $d$ agrees and there is a permutation $\omega \in S_d$ such that 
    $    \tgtalt{i}=\omega(\tgt{i})$
    and $i\prec_{\alt{G}} j$ iff $\omega(i)\prec_G \omega(j)$, i.e.,
    $\prec_G$ can be identified up to a permutation of the labels.
\end{restatable}
\begin{remk}
    We emphasize that neither the full graph nor the coefficients $B^{(i)}$ or the latent variables $Z_i$ are identifiable in this setting, even for linear mixing functions. 
\end{remk}
\cref{th:main} and \cref{th:partial} generalize the main results of \cite{seigal2022linear} which assume linear $f$ while we allow for  non-linear $f$.
The key new ingredient of our work is the following theorem, which shows identifiability of $f$
up to linear maps.

\begin{restatable}{theorem}{LinearIdent}\label{th:linear_ident}
    Assume that $X^{(i)}$ is generated    according to a model $((B^{(i)}, \eta^{(i)}, \tgt{i})_{i\in\bar{I}}, f)$ such that
    the Assumptions \ref{as:1}-\ref{as:4} hold.
    Then we have identifiability up to linear transformations, i.e.,
    if $((\alt{B}^{(i)}, \alt{\eta}^{(i)}, \alt{\tgt{i}}),\alt{f})$ generates the same observed distributions
    $\alt{f}(\alt{Z}^{(i)})\indistribution X^{(i)}$,
    then their latent dimensions $d$ agree and there is an invertible linear map $T$ such that 
    \begin{align}
        \alt{f} =f\circ T^{-1},
        \quad \alt{Z}^{(i)}=T Z^{(i)}.
    \end{align}
\end{restatable}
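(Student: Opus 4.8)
The plan is to reduce the statement to a rigidity property of a single diffeomorphism between the two latent spaces, and then to exploit the rank-two structure of differences of precision matrices. First I would pass from the observed distributions to the latents. Since a nondegenerate Gaussian has full support, the support of each observed law $X^{(i)}$ is the closure of the image manifold $f(\RR^d)$ (respectively $\alt{f}(\RR^d)$). Matching supports of the equal distributions $\alt{f}(\alt{Z}^{(i)})\indistribution X^{(i)}$ forces the two image manifolds to coincide, so by the embedding part of \cref{as:1} the map $h:=f^{-1}\circ\alt{f}:\RR^d\to\RR^d$ is a well-defined diffeomorphism. From $f(h(\alt{Z}^{(i)}))=\alt{f}(\alt{Z}^{(i)})\indistribution f(Z^{(i)})$ and injectivity of $f$ I get $h(\alt{Z}^{(i)})\indistribution Z^{(i)}$, i.e.\ $h$ pushes the Gaussian law of $\alt{Z}^{(i)}$ to that of $Z^{(i)}$ for every $i$. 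It then suffices to prove $h$ is linear: setting $T=h^{-1}$ yields $\alt{f}=f\circ h=f\circ T^{-1}$ and $\alt{Z}^{(i)}\indistribution T Z^{(i)}$, as claimed.

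Next I would extract quadratic functional equations. Writing the pushforward densities on the common manifold via the area formula, the Jacobian factor $\sqrt{\det(Df^\top Df)}$ depends only on the base point, so it cancels in any density ratio $p_{X^{(i)}}/p_{X^{(0)}}$; comparing the two models at $f(z)=\alt{f}(z')$, so that $z=h(z')$, gives the identity of log-density-ratios $\bar{r}_i(h(z'))=\alt{r}_i(z')$, where $\bar{r}_i(y)=\log p_{Z^{(i)}}(y)-\log p_{Z^{(0)}}(y)$ and $\alt{r}_i$ is the analogue for the alternative model. Both are quadratics, and the Hessian of $\bar{r}_i$ is exactly the precision difference $N_i:=B^\top B-(B^{(i)})^\top B^{(i)}$. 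Because the intervention changes only row $\tgt{i}$, this equals $(B^\top e_{\tgt{i}})(B^\top e_{\tgt{i}})^\top-\rowint{i}(\rowint{i})^\top$, a symmetric matrix of rank at most two (this holds for imperfect as well as perfect interventions); this is the ``quadratic form of precision matrices'' driving the argument, and \cref{as:4} guarantees that as $i$ ranges over one intervention per target the vectors $B^\top e_{\tgt{i}}$ run through a basis of $\RR^d$.

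The crux is to show $h$ is affine. Differentiating $\bar{r}_i\circ h=\alt{r}_i$ once gives the gradient identity $J_h(z)^\top(N_i h(z)+m_i)=M_i z+\ell_i$, and differentiating again gives the pointwise matrix identity
\[
M_i = J_h(z)^\top N_i J_h(z) + \sum_{b=1}^d (N_i h(z)+m_i)_b\,\mathrm{Hess}(h_b)(z),
\]
where $M_i$ is the Hessian of $\alt{r}_i$, itself a rank-$\le 2$ difference of the alternative model's precisions. I expect the main obstacle to be killing the Hessian terms, i.e.\ proving each component $h_b$ is affine: the relations are quadratic and only finitely many, so one cannot simply invert a linear system in the monomials. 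My approach would use the rank-two factored form of $N_i$ together with the basis property from \cref{as:4}: varying $z$ sweeps $N_i h(z)+m_i$ across the two-dimensional range of $N_i$, and the requirement that $h$ carry the quadric family $\{\alt{r}_i=\text{const}\}$ onto $\{\bar{r}_i=\text{const}\}$ simultaneously for all targets should force the third-order contributions to cancel. Once the Hessian terms vanish, the identity collapses to the constant congruence $J_h(z)^\top N_i J_h(z)=M_i$ for all $i$; since the $N_i$ encode a basis of rows, this pins down $J_h(z)$ up to a discrete ambiguity, and continuity of $z\mapsto J_h(z)$ on the connected domain $\RR^d$ upgrades this to $J_h$ being globally constant, so $h$ is affine.

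Finally I would remove the translation. The observational distributions are centered, since $\eta^{(0)}=\alt{\eta}^{(0)}=0$ makes $Z^{(0)}=B^{-1}\eps$ and $\alt{Z}^{(0)}$ both mean zero. An affine map $h(z)=Sz+s$ with $h(\alt{Z}^{(0)})\indistribution Z^{(0)}$ thus sends mean zero to mean zero, forcing $s=0$; hence $h$ is linear and $T=h^{-1}=S^{-1}$ is the desired invertible linear map. To summarize, the two reductions (Paragraphs 1 and 2) and the final de-translation (Paragraph 4) are routine, while the affine-rigidity step in Paragraph 3 — deducing linearity of $h$ from only $d$ rank-two quadratic constraints — is the genuine technical core.
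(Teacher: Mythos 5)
Your reductions in Paragraphs 1, 2 and 4 are correct and coincide with the paper's (they are the content of \cref{le:workhorse} and \cref{le:precision}: existence of the diffeomorphism $h$, cancellation of the Jacobian factor in the log-odds, the rank-$\le 2$ factorization of the precision differences, and removal of the translation via the centered observational distribution). The gap is in Paragraph 3, which you yourself identify as the technical core but for which you give no argument. The proposed mechanism --- differentiate the identity $\bar r_i\circ h=\alt r_i$ twice and argue that ``varying $z$ sweeps $N_i h(z)+m_i$ across the two-dimensional range of $N_i$'' so the Hessian terms ``should'' cancel --- is not a proof, and it is not clear it can be made into one: the second-derivative identity only constrains the single weighted combination $\sum_b (N_i h(z)+m_i)_b\,\mathrm{Hess}(h_b)(z)$, with both the weights and the Hessians varying with $z$, and for a pure shift intervention (which Theorem~\ref{th:linear_ident} permits, since it only excludes nothing beyond Assumptions~\ref{as:1}--\ref{as:4}) one has $N_i=0$, so the identity degenerates to a single fixed linear combination of Hessians being constant, which kills nothing. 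Your approach also never uses the DAG structure beyond the basis property, whereas the acyclicity (interventions add no parents, so the rows $\rowobsalt{i},\rowintalt{i}$ are supported on ancestors) is what the paper's proof is built on.

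Concretely, the paper proves the affine-rigidity step by induction along a topological order of the \emph{alternative} model's DAG: after relabeling so that $\tgtalt{i}=i$ respects a causal order, the quadratic form $h(z)^\top(\alt\Theta^{(k+1)}-\alt\Theta^{(0)})h(z)$ involves only $h_1,\dots,h_{k+1}$, so once $h_1,\dots,h_k$ are known to be linear the functional equation becomes a scalar equation in the single unknown $h_{k+1}$, of the form $g(z)^2=z^\top Qz+w\cdot z+\alpha$ or $g(z)(w\cdot z+\beta)=z^\top Qz+w'\cdot z+\alpha$. These are resolved by \emph{global} arguments (\cref{le:quad}, \cref{le:lin} and \cref{co:quad}, \cref{co:lin}): surjectivity of $g$ plus connectedness of the level sets of a rank-$\ge 2$ quadratic form forces $Q$ to have rank one and $g$ to be affine. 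This topological step has no obvious local-differential analogue, and the degenerate cases ($\delta=0$ with $b\neq 0$, and the pure-shift case $\delta=0$, $b=0$, which needs \cref{le:complete_square} and the invertibility of $\nabla h$) each require separate treatment that your sketch does not anticipate. As written, the proposal establishes the framing but not the theorem.
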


The proof of this theorem is deferred to Appendix~\ref{app:linear_ident}. 
In Appendix~\ref{app:full_identifiability}
we then review the results 
of \cite{seigal2022linear}
and show how their results can be extended to obtain our main results, Theorem~\ref{th:main} and Theorem~\ref{th:partial}.
Now we provide some intuition for the proofs of the main theorems.

\textbf{Proof intuition}\hspace{.2cm}
    The recent work \cite{seigal2022linear} studies the special case when $f$ is linear, and the proof is linear algebraic. In particular, they consider row spans of the precision matrices of $X^{(i)}$, project them to certain linear subspaces and use those subspaces to construct a generalized RQ decomposition of the pseudoinverse of the linear mixing matrix $f$.
    However, once we are in the setting of non-linear $f$, such an approach is not feasible because we can no longer reason about row spans of the precision matrices of $X$, which have been transformed non-linearly thereby losing all linear algebraic structure. Instead, we take a statistical approach and look at the log densities of the $X^{(i)}$. 
    By choosing a Gaussian prior, the log-odds $\ln p^{(i)}_X(x)-\ln p^{(0)}_X(x)$ of $X^{(i)}$ with respect to $X^{(0)}$ can be written as a quadratic form of difference of precision matrices, evaluated at non-linear functions of $x$. 
    For simplicity of this exposition, ignore terms arising from shift interventions and determinants of covariance matrices.
    Then, the log-odds looks like $\theta(x) = f^{-1}(x)^\top (\Theta^{(i)} - \Theta^{(0)}) f^{-1}(x)$, where $\Theta^{(i)}$ is the precision matrix of $Z^{(i)}$.
    
    At this stage, we again shift our viewpoint to geometric and observe that $\Theta^{(i)} - \Theta^{(0)}$ has a certain structure. In particular, for single-node interventions, it has rank at most $2$ and for source node targets, it has rank $1$. This implies that the level set manifolds of the quadratic forms $\theta(x)$ also have a certain geometric structure in them, i.e., the DAG leaves a geometric signature on the data likelihood. We exploit this carefully and proceed by induction on the topological ordering until we end up showing that $f$ can be identified up to a linear transformation, which is our main \cref{th:linear_ident}.
    Here, the presence of shift interventions adds additional complexities, and we have to generalize all of our intermediate lemmas to handle these. Once we identify $f$ up to a linear transformation, we can apply the results of \cite{seigal2022linear} to conclude Theorems \ref{th:main} and \ref{th:partial}. 

\textbf{On the optimality and limitations of the assumptions}
\hspace{.2cm}
\label{par:counterexamples}
We make a few brief remarks on our assumptions, deferring to \cref{sec:counterexamples} a full discussion and the technical construction of several illustrative counterexamples.
\begin{enumerate}
    \item \emph{Number of interventions:} 
    For our main results, Theorems \ref{th:main} and \ref{th:partial}, we assume that there are at least $d$ interventions
    (Assumption~\ref{as:4}). This cannot be weakened even for linear mixing functions \cite{seigal2022linear}. 
    In addition, we also show in \cref{fact:linear_ident_optimality}
    that for the linear identifiability proved in Theorem~\ref{th:linear_ident}, $d-2$ interventions
    are not sufficient. Thus, the required number of interventions is tight in Theorems \ref{th:main}, \ref{th:partial}  and is tight up to at most one intervention in Theorem~\ref{th:linear_ident}. 
    \item \emph{Intervention type:} In the setting of imperfect interventions,
    the weaker identifiability guarantees in \cref{th:partial} cannot be improved even when the mixing is linear \cite[Appendix C]{seigal2022linear}. We also show in \cref{fact:shift_interventions} that if we drop the condition that interventions are not pure noise interventions, we have non-identifiability.
     Concretely, we show that when all interventions are of pure noise type, any causal graph is compatible with the observations.
     Finally, in contrast to the special case of linear mixing, we show in \cref{le:do_interventions} that we need to exclude non-stochastic hard interventions (i.e., 
    $\mathrm{do}(Z_i=z_i)$) for identifiability up to linear maps.
    \item \emph{Distributional assumptions:} We assume Gaussian latent variables, while allowing for very flexible mixing $f$. This model has universal approximation guarantees and is moreover used ubiquitously in practice. While the result can potentially be extended to more general latent distributions (e.g., exponential families), we additionally show in \cref{le:uniform_dist} that the result is not true when making no assumption on the  distribution of $\eps$.
\end{enumerate}

\section{Experimental methodology}

In this section, we explain our experimental methodology 
and the theoretical underpinning of our approach.
Our main experiments for interventional causal representation learning focus on a method based on contrastive learning. We train a deep neural network to learn to distinguish observational samples $x \sim X^{(0)}$ from interventional samples $x \sim X^{(i)}$. 
Additionally, we design the last layer of the model to model the log-likelihood of a linear Gaussian SCM.
Due to representational flexibility of deep neural networks, we will in principle learn the Bayes optimal classifier after optimal training, which we show is related to the underlying causal model parameters. Accordingly, with careful design of the last layer parametric form, we indirectly learn the parameters of the underlying causal model.
Similar methods have been used for time-series data or multimodal data \cite{hyvarinen2016unsupervised, hyvarinen2023identifiability} but to the best of our knowledge, the contrastive learning approach to interventional learning is novel. 

Denote the probability density of $x \sim X^{(i)}$ (resp. $z \sim Z^{(i)}$) by $p_X^{(i)}(x)$ (resp. $p_Z^{(i)}(z)$). The next lemma describes the log-odds of a sample $x$ coming from the interventional distribution $X^{(i)}$ as opposed to the observational distribution $X^{(0)}$. 
We focus on the identifiable case (see Theorem~\ref{th:main}) and therefore only consider perfect interventions.
As per the notation in Section~\ref{sec:setting} and \cref{app:notation}, let $s^{(i)}$ denote the row $\tgt{i}$ of $B^{(0)}$ and let $\eta^{(i)}, \lda_i$ denote the magnitude of the shift and scale intervention respectively.

\begin{restatable}{lemma}{Logodds}\label{lem: log_odds}
    When we have perfect interventions, the log-odds of a sample $x$ coming from $X^{(i)}$ over coming from $X^{(0)}$ is given by
    \begin{align}\label{eq:log_odds}
    \ln p_X^{(i)}(x) - \ln p_X^{(0)}(x) &= c_i - \frac{1}{2} \lda_i^2(((f^{-1}(x))_{\tgt{i}})^2 + \eta^{(i)}\lda_i \cdot (f^{-1}(x))_{\tgt{i}}) + \frac{1}{2} \ip{f^{-1}(x)}{\rowobs{i}}^2
    \end{align}
    for a constant $c_i$ independent of $x$.
\end{restatable}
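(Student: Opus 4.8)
The plan is to push the known Gaussian densities of the latents through the mixing $f$ and exploit that the Jacobian contribution of $f$ is common to all interventions, so it drops out of the log-odds; this is exactly the mechanism that neutralizes the nonlinearity. Concretely, since $f$ is injective and differentiable (Assumption~\ref{as:1}) and $X^{(i)} = f(Z^{(i)})$, the law of $X^{(i)}$ on the image manifold $f(\RR^d)$ has density $p_X^{(i)}(x) = p_Z^{(i)}(f^{-1}(x)) \, J_f(f^{-1}(x))$ with respect to the $d$-dimensional Hausdorff measure on $f(\RR^d)$, where $J_f(z) = (\det(Df(z)^\top Df(z)))^{-1/2}$ is the volume-distortion factor of the embedding. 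The key point is that $J_f$ depends only on $f$ and $x$, never on the intervention index $i$, so it cancels:
\begin{align}
\ln p_X^{(i)}(x) - \ln p_X^{(0)}(x) = \ln p_Z^{(i)}(f^{-1}(x)) - \ln p_Z^{(0)}(f^{-1}(x)).
\end{align}
Setting $z = f^{-1}(x)$ reduces the claim to a finite-dimensional Gaussian computation.

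Next I would write the two latent log-densities explicitly. From $Z^{(0)} = B^{-1}\eps$ and $Z^{(i)} = (B^{(i)})^{-1}(\eps + \eta^{(i)} e_{\tgt{i}})$ with $\eps \sim N(0,\id)$, the change of variables gives $\ln p_Z^{(0)}(z) = \ln|\det B| - \tfrac12 \|Bz\|^2 - \tfrac d2 \ln(2\pi)$ and $\ln p_Z^{(i)}(z) = \ln|\det B^{(i)}| - \tfrac12\|B^{(i)} z - \eta^{(i)} e_{\tgt{i}}\|^2 - \tfrac d2\ln(2\pi)$. Subtracting, all $x$-independent pieces (the log-determinants, the $(\eta^{(i)})^2$ term, and the normalizer) collect into the constant $c_i$, leaving the $x$-dependent remainder
\begin{align}
-\tfrac12\, z^\top\big((B^{(i)})^\top B^{(i)} - B^\top B\big) z + \eta^{(i)}\, e_{\tgt{i}}^\top B^{(i)} z.
\end{align}

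Then I would evaluate the two coefficient objects using the perfect-intervention structure. For a perfect intervention $\rowint{i} = \lda_i e_{\tgt{i}}$, so the earlier identity $B^{(i)} = B - e_{\tgt{i}}(B^\top e_{\tgt{i}} - \rowint{i})^\top$ becomes a rank-one update $B^{(i)} = B - e_{\tgt{i}} w^\top$ with $w = \rowobs{i} - \lda_i e_{\tgt{i}}$, where $\rowobs{i} = B^\top e_{\tgt{i}}$ is row $\tgt{i}$ of $B$. Expanding $(B - e_{\tgt{i}} w^\top)^\top(B - e_{\tgt{i}} w^\top)$ and using $e_{\tgt{i}}^\top e_{\tgt{i}} = 1$ and $B^\top e_{\tgt{i}} = \rowobs{i}$, the mixed $\rowobs{i}e_{\tgt{i}}^\top$ and $e_{\tgt{i}}(\rowobs{i})^\top$ contributions cancel and I expect the clean rank-two expression
\begin{align}
(B^{(i)})^\top B^{(i)} - B^\top B = \lda_i^2\, e_{\tgt{i}} e_{\tgt{i}}^\top - \rowobs{i}(\rowobs{i})^\top.
\end{align}
Meanwhile the linear term simplifies because $e_{\tgt{i}}^\top B^{(i)}$ is exactly row $\tgt{i}$ of $B^{(i)}$, which is $\lda_i e_{\tgt{i}}^\top$, so $\eta^{(i)} e_{\tgt{i}}^\top B^{(i)} z = \eta^{(i)}\lda_i\, e_{\tgt{i}}^\top z$. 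Substituting both, together with $z = f^{-1}(x)$ and $(f^{-1}(x))_{\tgt{i}} = e_{\tgt{i}}^\top f^{-1}(x)$, collects the remainder into the quadratic term $-\tfrac12\lda_i^2((f^{-1}(x))_{\tgt{i}})^2$, the shift term in $\eta^{(i)}\lda_i (f^{-1}(x))_{\tgt{i}}$, and $+\tfrac12\ip{f^{-1}(x)}{\rowobs{i}}^2$, giving \eqref{eq:log_odds}.

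The only genuinely delicate step is the density transform in the embedded regime $d \le \dimx$: since $X$ is supported on a lower-dimensional manifold, $p_X^{(i)}$ must be read as a density against the $d$-dimensional Hausdorff measure on $f(\RR^d)$, and one must verify that the volume factor $J_f$ is truly independent of $i$ so that it cancels in the log-odds. Once that is established, the remaining work is the routine Gaussian log-density bookkeeping and the rank-one-update algebra sketched above.
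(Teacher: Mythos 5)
Your proposal is correct and follows essentially the same route as the paper's proof: change of variables to cancel the ($i$-independent) Jacobian factor, subtract the Gaussian log-densities, and use the rank-two structure $(B^{(i)})^\top B^{(i)} - B^\top B = \lda_i^2 e_{\tgt{i}}e_{\tgt{i}}^\top - \rowobs{i}(\rowobs{i})^\top$ together with $e_{\tgt{i}}^\top B^{(i)} = \lda_i e_{\tgt{i}}^\top$ for perfect interventions (you derive the precision identity by a direct rank-one-update expansion, while the paper cites its Lemma~\ref{le:precision}; your care about the Hausdorff-measure reading of $p_X^{(i)}$ when $d<\dimx$ is a minor refinement the paper glosses over). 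Note that your final expression, with $\eta^{(i)}\lda_i (f^{-1}(x))_{\tgt{i}}$ as a standalone additive term, matches the penultimate display of the paper's own derivation; the parenthesization in the lemma's displayed equation \eqref{eq:log_odds} appears to be a typo.
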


The proof is deferred to \cref{sec:expts_theory}. The form of the log-odds suggests considering the following functions
\begin{align}\label{eq:net_param}
    g_i(x, \alpha_i, \beta_i, \gamma_i, w^{(i)}, \theta) = \alpha_i-
    \beta_i h^2_{\tgt{i}}(x,\theta)
    +\gamma_i h_{\tgt{i}}(x,\theta) +\langle h(x,\theta), w^{(i)}\rangle^2 
\end{align}
where $h(\cdot, \theta)$ denotes a neural net parametrized by $\theta$,
 parameters $w^{(i)}$ are the rows of a matrix $W$ and $\alpha_i$, $\beta_i$, $\gamma_i$ are learnable parameters. %
Note that the ground truth parameters minimize the following cross entropy loss
\begin{align}
    \mc{L}_{\mathrm{CE}}^{(i)}
    =\mathbb{E}_{j\sim \mc{U}(\{0, i\})}
 \mathbb{E}_{x\sim X^{(j)}} \mathrm{CE}(\boldsymbol{1}_{j=i}, g_i(x)) =
- \mathbb{E}_{j\sim \mc{U}(\{0, i\})}
 \mathbb{E}_{x\sim X^{(j)}}\ln \left( \frac{e^{\boldsymbol{1}_{j=i}g_i(x)}}{e^{g_i(x)}+1} \right).
\end{align}
Note that (compare \eqref{eq:log_odds}
and \eqref{eq:net_param}) 
$W$ should learn $B=D^{-1/2}(\id-A)$
thus its off-diagonal entries should form a DAG.
To enforce this we add the NOTEARS regularizer \cite{zheng2018dags} given by $\calR_{NOTEARS}(W) = \text{tr}\exp (W_0 \circ W_0) - d$ (see \cref{sec:notears})
where $W_0$ equals $W$ with the main diagonal zeroed out.
We also promote sparsity by adding
the $l_1$ regularization term
$\calR_{REG}(W) = \norm{W_0}_1$.
Thus, the total loss is given by
\begin{align}\label{eq:loss_final}
    \mc{L}(\alpha, \beta, \gamma, W, \theta)
    =
    \sum_{i\in I}\mc{L}_{\mathrm{CE}}^{(i)}
    +\tau_1 \calR_{NOTEARS}(W)
    +\tau_2 \calR_{REG}(W)  
\end{align}
for hyperparameters $\tau_1$ and $\tau_2$. 
Our identifiability result Theorem~\ref{th:main} implies that 
when we assume that the neural network has infinite capacity, the loss in \eqref{eq:loss_final} 
is minimized, $\tau_1$ is large and $\tau_2$ small, and we learn Gaussian latent variables $h(X^{(i)},\theta)$, then we recover the ground truth latent variables up to the tolerable ambiguities of labeling and scale, i.e.,
$h$ recovers $f^{-1}$ and $W$ recovers $B$ up to permutation and scale.
Thus, we estimate the latent variables using $\hat{Z}=h(X, \theta)$ and estimate the DAG using $W_0$. Full details of the practical implementation of our approach are given in Appendix~\ref{sec:expts_pract}.

Our experimental setup is similar to \cite{seigal2022linear, ahuja2022interventional}, we consider $d$ interventions with different targets (our theory holds in full generality) and therefore, we can arbitrarily assign $\tgt{i}=i$ based on the intervention index $i$ which removes the permutation ambiguity.
We focus on non-zero shifts because the cross entropy loss
together with the quadratic expression for the log-odds results in a non-convex output layer which makes it hard to find the global
loss minimizer,
as we will describe in  more detail in \cref{sec:limitations}.
We emphasize that this is no contradiction to the theoretical results stated above. Even when there is no shift intervention
the latent variables are identifiable, but 
our algorithm often fails to find the global minimizer of the loss \eqref{eq:loss_final} due to the non-convex loss landscape.
For the sake of exposition and to set the stage for future work, we also briefly describe an approach via Variational Autoencoders (VAE). VAEs have been widely used in causal representation learning and while feasible in interventional settings, they are accompanied by certain difficulties, which we detail and suggest how to overcome in \cref{sec:vaes}.

\section{Experiments}
\label{sec:expts}
In this section, we implement our approach on synthetic data and image data. Complete details (architectures, hyperparameters) are deferred to \cref{sec:expts_pract}. Additional experiments investigating the effect of varsortability \cite{weichwald2021beware} and the noise distribution can be found in Appendix~\ref{app:ablation}.
\begin{figure}
    \centering
      \includegraphics[width=\textwidth,
      ]{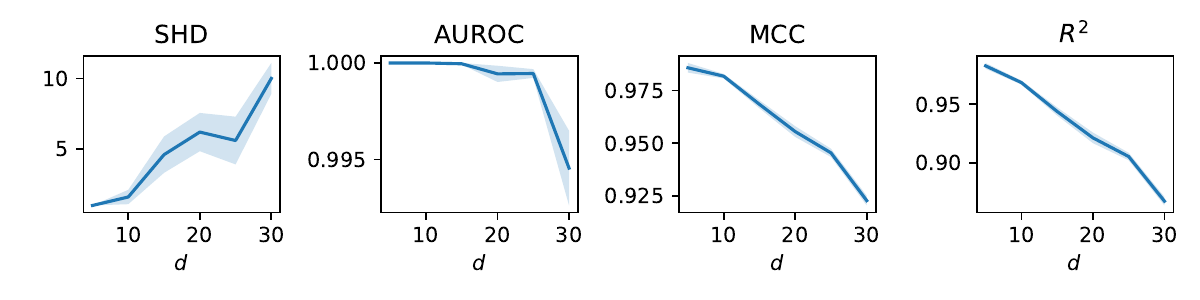}
    \caption{Dependence of performance metrics for $ER(d,2)$ graphs with $d'=100$ and nonlinear mixing $f$ on the dimension $d$.
    }
    \label{fig:experiments}
\end{figure}

\textbf{Data generation}\hspace{.2cm}
For all our experiments we use Erd\"os-R\'{e}nyi graphs, i.e., we 
add each undirected edge with equal probability $p$ to the graph and
then finally orient them according to some random order of the nodes. We write 
$\mathrm{ER}(d, k)$ for the Erd\"os-R\'{e}nyi graph distribution on $d$ nodes with $kd$ expected edges.
For a given graph $G$ we then sample
edge weights from 
$\mc{U}(\pm[0.25, 1.0])$ and a scale matrix $D$. For simplicity we assume that we have $n$
samples from each environment $i\in\bar{I}$. We only consider the setting where each node is intervened upon once and thus the latent dimension is also known.
We consider three types of mixing functions. First, we consider linear mixing functions where we sample all matrix entries i.i.d. from a Gaussian distribution. 
Then, we consider non-linear mixing functions that are parametrized by MLPs with three hidden layers which are randomly initialized, and have Leaky ReLU activations.
Finally, we consider image data 
as described in \cite{ahuja2022interventional}. Pairs of latent variables $(z_{2i+1},z_{2i+2})$
describe the coordinates of 
balls in an image and the non-linearity $f$ is the rendering of the image.
The image generation is based on pygame
\cite{shinners2011pygame}.
A sample image can be found in Figure~\ref{fig:samples}. \begin{wrapfigure}{r}{0.20\textwidth}
  \centering
  \vspace{-.1cm}
  \includegraphics[width=0.2\textwidth]{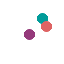}
  \caption{\label{fig:samples}
Sample image with 3 balls.}
\vspace{-.5cm}
\end{wrapfigure}

\textbf{Evaluation Metrics}\hspace{.2cm}
We evaluate how well we can recover the ground truth latent variables 
and the underlying DAG. 
For the recovery of the ground truth latents we use the Mean Correlation Coefficient (MCC) \cite[Appendix A.2]{khemakhem2020ice}. For the evaluation of the learned graph structure we use the Structural Hamming Distance (SHD) where we follow the convention
that we count the number of edge differences of the directed graphs (i.e., an edge with wrong orientation counts as two errors).
Since the scale of the variables is not fixed the selection of the edge selection threshold is slightly delicate. Thus, we use the heuristic where we match the number of selected edges to the expected number of edges.
As a metric that is independent of this thresholding procedure, we also report the Area Under the Receiver Operating Curve (AUROC) for the edge selection.

\textbf{Methods} \hspace{.2cm} We implement our contrastive algorithm as explained in Section~\ref{sec:expts_theory} where we use an MLP (with Leaky ReLU nonlinearities) for the function $h$
for the linear and nonlinear mixing functions and a very small convolutional network for the image dataset, which are termed ``Contrastive''.
For linear mixing function we also consider a version of the contrastive algorithm where $h$ is a linear function, termed ``Contrastive Linear''.
As baselines, we consider a variational autoencoder with latent dimension $d$ and also the algorithm for linear disentanglement introduced in \cite{seigal2022linear}.  
Since the variational autoencoder does not output a causal graph structure we report the result for the empty graph here which serves as a baseline. 

\textbf{Results for linear $f$}
\hspace{.2cm}
First, for the sake of comparison, we replicate exactly the setting from \cite{seigal2022linear}, i.e.,
we consider initial noise variances sampled uniformly from $[2, 4]$ and we consider perfect interventions where the new variance is sampled uniformly from
$[6, 8]$. 
We set
$d=5$, $d'=10$, $k=3/2$, and $n=50000$.
Results can be found in Table~\ref{tab:1}. The linear contrastive method identifies the ground truth latent variables up to scaling.
The failure of the nonlinear method to recover the ground truth latents will be explained and further analyzed in Appendix~\ref{sec:limitations}. Note that the nonlinear contrastive and the linear contrastive method recover the underlying graph better than the baseline. 

\begin{table}[!ht]
  \centering
\caption{Results for linear $f$ with $d = 5, \dimx= 10$, $k=3/2$, $n=50000$.}
\label{tab:1}
\begin{tabular}{lllll}
\toprule
Method &SHD $\downarrow$ & AUROC $\uparrow$ & MCC $\uparrow$ & $R^2$ $\uparrow$  \\
\midrule
\textbf{Contrastive}  & $4.6 \pm 0.5$ & $0.84 \pm 0.02$ & $0.05 \pm 0.02$ & $0.02 \pm 0.00$ \\
\textbf{Contrastive Linear} & $5.4 \pm 1.6$ & $0.80 \pm 0.07$ & $0.90 \pm 0.03$ & $1.00 \pm 0.00$ \\
Linear baseline & $7.0 \pm 0.5$ & $0.64 \pm 0.05$ & $0.83 \pm 0.04$ & $1.00 \pm 0.00$ \\
\bottomrule
\end{tabular}
\end{table}

\textbf{Results for nonlinear $f$}
\hspace{.2cm}
We sample all variances from $\mc{U}([1,2])$
(initial variance and resampling for the perfect interventions)
and the shift parameters $\eta^{(i)}$ of the interventions from $\mc{U}([1,2])$. Results can 
be found in Table~\ref{tab:2}.
We find that our contrastive method 
can recover the ground truth latents and the causal structure almost perfectly, while the baseline for linear disentanglement cannot recover the graph or the latent variables (which is not surprising as the mixing is highly nonlinear). Also, training a vanilla VAE does not recover the latent variables up to a linear map as indicated by the $R^2$ scores. In Figure~\ref{fig:experiments} we illustrate the dependence 
on the dimension $d$ of our algorithm in this setting. 
\begin{table}[!ht]
  \centering
\caption{Results for nonlinear synthetic data  with $n=10000$.}\label{tab:2}

\begin{tabular}{llllll}
\toprule
Setting & Method & SHD $\downarrow$ & AUROC $\uparrow$ & MCC $\uparrow$ & $R^2$ $\uparrow$ \\
\midrule
\multirow{3}{*}{ER(5, 2),  $d'= 20 $} & \textbf{Contrastive} & $1.8 \pm 0.5$ & $0.97 \pm 0.01$ & $0.97 \pm 0.00$ & $0.96 \pm 0.00$ \\
& VAE & $10.0 \pm 0.0$ & $0.50 \pm 0.00$ & $0.48 \pm 0.03$ & $0.57 \pm 0.07$ \\
& Linear baseline & $10.6 \pm 1.9$ & $0.48 \pm 0.11$ & $0.32 \pm 0.03$ & $0.34 \pm 0.06$ \\
\midrule
\multirow{3}{*}{ER(5, 2),  $d'= 100 $} & \textbf{Contrastive} & $1.0 \pm 0.0$ & $1.00 \pm 0.00$ & $0.99 \pm 0.00$ & $0.98 \pm 0.00$ \\
& VAE & $10.0 \pm 0.0$ & $0.50 \pm 0.00$ & $0.59 \pm 0.02$ & $0.68 \pm 0.04$ \\
& Linear baseline & $3.4 \pm 1.2$ & $0.85 \pm 0.07$ & $0.18 \pm 0.04$ & $0.11 \pm 0.04$ \\
\midrule
\multirow{3}{*}{ER(10, 2),  $d'= 20 $} & \textbf{Contrastive} & $3.6 \pm 1.3$ & $0.98 \pm 0.01$ & $0.93 \pm 0.00$ & $0.87 \pm 0.01$ \\
& VAE & $18.6 \pm 0.9$ & $0.50 \pm 0.00$ & $0.59 \pm 0.02$ & $0.72 \pm 0.02$ \\
& Linear baseline & $29.6 \pm 2.5$ & $0.49 \pm 0.02$ & $0.44 \pm 0.02$ & $0.51 \pm 0.02$ \\
\midrule
\multirow{3}{*}{ER(10, 2),  $d'= 100 $} & \textbf{Contrastive} & $1.6 \pm 0.5$ & $1.00 \pm 0.00$ & $0.98 \pm 0.00$ & $0.97 \pm 0.00$ \\
& VAE & $18.6 \pm 0.9$ & $0.50 \pm 0.00$ & $0.62 \pm 0.02$ & $0.78 \pm 0.01$ \\
& Linear baseline & $28.4 \pm 2.1$ & $0.51 \pm 0.04$ & $0.17 \pm 0.03$ & $0.13 \pm 0.03$ \\
\bottomrule
\end{tabular}
\end{table}

\textbf{Results for image data}\hspace{.2cm}
Finally, we report the results for image data. Here, we generate the graph as before and consider variances sampled
from $\sigma^2\sim \mc{U}([0.01, 0.02])$ and shifts $\eta^{(i)}$ from $\mc{U}([0.1, 0.2])$ (i.e., the shifts are still of order $\sigma$).
We exclude samples where one of the balls is not contained in the image which generates a slight model misspecification compared to our theory. 
Again we find that we recover the latent graph and the latent variables as detailed in Table~\ref{tab:3}.

\begin{table}[!ht]
\centering
\caption{Results for image data with ER($d$, 2) graphs with $d= 2\cdot \# \text{balls}$ 
and $n_{\mathrm{int}}=25000$ (per environment), $n_{\mathrm{obs}}=n_{\mathrm{int}}\cdot d$.}
\label{tab:3}
\vspace{.25cm}
\begin{tabular}{llllll}
\toprule
\# Balls & Method &  SHD $\downarrow$ & AUROC $\uparrow$ & MCC $\uparrow$ & $R^2$ $\uparrow$ \\
\midrule
\multirow{2}{*}{2} & Contrastive Learning & $1.4 \pm 0.4$ & $0.95 \pm 0.03$ & $0.87 \pm 0.03$ & $0.84 \pm 0.03$ \\
 & VAE & $6.0 \pm 0.0$ & $0.50 \pm 0.00$ & $0.19 \pm 0.06$ & $0.16 \pm 0.08$ \\
 \midrule
\multirow{2}{*}{5} & Contrastive Learning & $2.0 \pm 0.3$ & $1.00 \pm 0.00$ & $0.94 \pm 0.01$ & $0.91 \pm 0.01$ \\
 & VAE & $18.6 \pm 0.9$ & $0.50 \pm 0.00$ & $0.31 \pm 0.02$ & $0.36 \pm 0.03$ \\
 \midrule
\multirow{2}{*}{10} & Contrastive Learning & $11.0 \pm 3.3$ & $0.98 \pm 0.02$ & $0.89 \pm 0.01$ & $0.83 \pm 0.01$ \\
 & VAE & $37.2 \pm 3.1$ & $0.50 \pm 0.00$ & $0.22 \pm 0.01$ & $0.33 \pm 0.02$ \\
\bottomrule
\end{tabular}
\end{table}

\section{Conclusion}

In this work, we extend several prior works and show identifiability for a widely used class of linear latent variable models with non-linear mixing, from interventional data with unknown targets. Counterexamples show that our assumptions are tight in this setting and could only potentially be relaxed under other additional assumptions.
We leave to future work to extend our results for other classes of priors, such as non-parametric distribution families, and also to study sample complexity and robustness of our results.
We also proposed a contrastive approach to learn such models in practice
and showed that it can recover the latent structure in various settings.
 Finally, 
 we highlight that the results of our experiments are very promising  and it would be interesting to scale up our algorithms to large-scale datasets such as \cite{liu2023causal}.

\section*{Acknowledgments}

We thank anonymous reviewers for useful comments and suggestions.
We acknowledge the support of AFRL and DARPA via FA8750-23-2-1015, ONR via N00014-23-1-2368, and NSF via IIS-1909816, IIS-1955532. We also acknowledge the support of JPMorgan Chase \& Co. AI Research.
This work was also supported by the T\"ubingen AI Center.

\nocite{weichwald2021beware,reisach2023simple}

\bibliography{main.bib}
\bibliographystyle{abbrvnat}

\clearpage

\appendix

\section{Proof of Identifiability up to Linear Maps}\label{app:linear_ident}
In this section we give a full proof 
of Theorem~\ref{th:linear_ident}.
We try to make this essential self-contained. To make this easy to read, we first introduce and recall the required notation in Section~\ref{app:notation}, then we prove two important key relations required for the proof in
Section~\ref{app:auxiliary}. Finally, we prove first a simpler version of \cref{th:linear_ident} with more assumptions in Section~\ref{app:warmup} and then give the full proof of Theorem~\ref{th:linear_ident} in Section~\ref{app:proof_linear_ident}.
\subsection{General Notation}\label{app:notation}

Let us introduce some notation for our identifiability proofs. For a full list of assumptions we refer to the main text, here we just recall the relevant notation concisely.
We will assume that there are two representations that generate the same observations, i.e., we assume that there are two sets of Gaussian SCMs 
\begin{align}
    Z^{(i)} &= A^{(i)} Z^{(i)} + (D^{(i)})^{\frac12}(\eps + 
    \eta^{(i)}e_{\tgt{i}})
    \\
    \alt{Z}^{(i)} &= \alt{A}^{(i)} \alt{Z}^{(i)} + (\alt{D}^{(i)})^{\frac12}(\alt{\eps} + 
    \alt{\eta}^{(i)}e_{\tgtalt{i}})
\end{align}
where $\eps, \alt{\eps} \sim N(0, \id)$ and 
$i=0$ corresponds to the observational distribution while
$i>0$ correspond to interventional settings where we intervene on a single node $\tgt{i}$ (or $\tgtalt{i}$)
by adding a shift and changing the relation to its parents (without adding new parents).
Moreover, there are
two functions $f$, $\alt{f}$ such that $X^{(i)}\indistribution f(Z^{(i)})$ and
$X^{(i)}\indistribution \alt{f}(\alt{Z}^{(i)})$.
It is convenient to define
\begin{align}
    B^{(i)} = (D^{(i)})^{-\frac12}(\id - A^{(i)})\\
    \label{eq:expression_mu}
    \mu^{(i)} = \eta^{(i)}(B^{(i)})^{-1}e_{\tgt{i}}
\end{align}
so that $Z^{(i)}=(B^{(i)})^{-1}\eps + \mu^{(i)}$ with $\eps \sim N(0, \id)$.
Note that all model information can be collected in
$((B^{(i)}, \eta^{(i)}, \tgt{i}), f)$.

As in the main text, it is convenient
to use the shorthand $B=B^{(0)}$,
$A=A^{(0)}$.

We denote the row $\tgt{i}$ of $B^{(i)}$ by
$\rowint{i}$, i.e., 
\begin{align}
    \rowint{i} = (B^{(i)})^\top e_{\tgt{i}}.
\end{align}
Note that for perfect interventions where we cut the relation to all parents of node $\tgt{i}$ we have 
$\rowint{i}=\lambda_i e_{\tgt{i}}$ for some $\lambda_i\neq 0$.
Similarly, for the observational distribution we refer to the row $\tgt{i}$ of $B^{(0)}$ by $\rowobs{i}$, i.e.,
\begin{align}
\rowobs{i} = (B^{(0)})^\top e_{\tgt{i}}.
\end{align}
We thus find
\begin{align}
    B^{(0)} - B^{(i)}=
    e_{\tgt{i}} (e_{\tgt{i}}^\top
    B^{(0)} - e_{\tgt{i}}^\top
    B^{(i)})
    =  e_{\tgt{i}} (\rowobs{i}-\rowint{i})^\top.
\end{align}
We also consider the covariance matrix $\Sigma^{(i)}$
and the precision matrix $\Theta^{(i)}$ of $Z^{(i)}$
which are given by
\begin{align}\label{eq:precision_covariance}
    \Theta^{(i)}=(B^{(i)})^\top B^{(i)}, \quad
    \Sigma^{(i)} = (\Theta^{(i)})^{-1}
\end{align}
Indeed,
\begin{align*}
    \Sigma^{(i)} = \EE[(B^{(i)})^{-1} \eps \eps^\top (B^{(i)})^{-\top}] = (B^{(i)})^{-1} \EE[\eps \eps^\top] (B^{(i)})^{-\top} = (B^{(i)})^{-1} (B^{(i)})^{-\top}
\end{align*}
Finally, the mean $\mu^{(i)}$ of $Z^{(i)}$ given by
\begin{align}\label{eq:mean}
   \mu^{(i)} = \EE Z^{(i)} = \eta^{(i)}(B^{(i)})^{-1}e_{\tgt{i}}.
\end{align}
We also define similar quantities for $\alt{Z}$.
To simplify the notation, we will frequently use the shorthand
$v\otimes w=v\cdot w^\top\in \RR^{d\times d}$ for $v,w\in \RR^d$.

\subsection{Two auxiliary lemmas}\label{app:auxiliary}
The key to our identifiability results is the relation shown in the following lemma.

\begin{lemma}\label{le:workhorse}
Assume that there are two latent variable models
$((B^{(i)}, \eta^{(i)}, \tgt{i})_{i\in I}, f)$ and 
$((\alt{B}^{(i)}, \alt{\eta}^{(i)}, \tgtalt{i})_{i\in I}, \alt{f})$
as defined above such that
$f(Z^{(i)})\indistribution \alt{f}(\alt{Z}^{(i)})$.
Then their latent dimension $d$ agree and
 $h=\alt{f}^{-1}\circ f$ exists 
and is a diffeomorphism, i.e.,
bijective and differentiable with differentiable inverse.
Moreover, it satisfies the relation
\begin{align}
    \begin{split}\label{eq:log_density}
              \frac{1}{2}& z^\top  (\Theta^{(i)}-\Theta^{(0)})z
            - \eta^{(i)}(\rowint{i})^\top z
           \\
           &=
            \frac{1}{2} h(z)^\top  (\alt{\Theta}^{(i)}-\alt{\Theta}^{(0)})h(z)
             - \alt{\eta}^{(i)}(\rowintalt{i})^\top h(z)
            +c^{(i)}
    \end{split}
\end{align}
 for all $z$ and $i$ and some constant $c^{(i)}$.
If $\eta^{(i)}=\alt{\eta}^{(i)}=0$ this simplifies to 
\begin{align}\label{eq:log_density_no_mean}
    \frac{1}{2} z^\top  (\Theta^{(i)}-\Theta^{(0)})z
    =
  \frac{1}{2} h(z)^\top  (\alt{\Theta}^{(i)}-\alt{\Theta}^{(0)})h(z)
            +c^{(i)} . 
\end{align}
\end{lemma}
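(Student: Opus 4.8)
The plan is to descend from the observed laws to the latent Gaussian densities via a change of variables, and to exploit that the mixing map contributes a Jacobian factor that is identical across all interventions. First I would handle the observational index $i=0$ to pin down $h$: since $Z^{(0)}$ and $\alt Z^{(0)}$ are nondegenerate Gaussians with full support on $\RR^d$, the supports of $f(Z^{(0)})$ and $\alt f(\alt Z^{(0)})$ coincide, forcing $f(\RR^d)=\alt f(\RR^d)=:M$, a common $d$-dimensional image manifold in $\RR^{\dimx}$. Because $f$ and $\alt f$ are injective, differentiable embeddings (Assumption~\ref{as:1}), each is a diffeomorphism onto $M$, so $h=\alt f^{-1}\circ f:\RR^d\to\RR^d$ is well defined and a diffeomorphism. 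This is essentially the only place the embedding hypothesis enters.

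Next I would transfer each law $X^{(i)}$ to a density on $M$ with respect to $d$-dimensional Hausdorff measure. Using the chart $f$, the area formula gives $q^{(i)}(f(z)) = p_Z^{(i)}(z)/J_f(z)$ with Gram--Jacobian factor $J_f(z)=\sqrt{\det(Df(z)^\top Df(z))}$; using the chart $\alt f$ at the same point $x=f(z)=\alt f(h(z))$ gives $q^{(i)}(x)=\alt p_Z^{(i)}(h(z))/J_{\alt f}(h(z))$. Equating these and dividing the $i$-expression by the $i=0$-expression, the key observation is that $J_f(z)$ and $J_{\alt f}(h(z))$ are independent of $i$ (the mixing maps are fixed across interventions) and cancel. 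Taking logarithms yields the clean identity
\begin{align*}
\ln p_Z^{(i)}(z)-\ln p_Z^{(0)}(z)=\ln \alt p_Z^{(i)}(h(z))-\ln \alt p_Z^{(0)}(h(z)).
\end{align*}

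Finally I would substitute the explicit Gaussian log-densities. Writing $\ln p_Z^{(i)}(z)=-\tfrac12 (z-\mu^{(i)})^\top\Theta^{(i)}(z-\mu^{(i)})+\text{const}$ and using $\mu^{(0)}=0$, expanding the quadratic gives
\begin{align*}
\ln p_Z^{(i)}(z)-\ln p_Z^{(0)}(z)=-\tfrac12 z^\top(\Theta^{(i)}-\Theta^{(0)})z+(\mu^{(i)})^\top\Theta^{(i)} z+\text{const}.
\end{align*}
Here I would simplify the linear coefficient via $\Theta^{(i)}=(B^{(i)})^\top B^{(i)}$ and $\mu^{(i)}=\eta^{(i)}(B^{(i)})^{-1}e_{\tgt{i}}$, giving $(\mu^{(i)})^\top\Theta^{(i)}=\eta^{(i)} e_{\tgt{i}}^\top B^{(i)}=\eta^{(i)}(\rowint{i})^\top$. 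Carrying out the identical expansion for the tilde model evaluated at $h(z)$, equating, and multiplying through by $-1$ produces exactly \eqref{eq:log_density}, with $c^{(i)}$ absorbing all $z$-independent constants; the zero-mean case $\mu^{(i)}=\alt\mu^{(i)}=0$ simply drops the linear terms and gives \eqref{eq:log_density_no_mean}.

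I expect the main obstacle to be the rigorous density-transfer step on a lower-dimensional embedded manifold: the laws of $X^{(i)}$ are singular in $\RR^{\dimx}$, so one must work with densities against Hausdorff measure, justify the area formula for the injective immersions $f,\alt f$, and argue carefully that the common image is genuinely one manifold so that $h$ is globally defined. Once that is in place, the intervention-independent Jacobian cancellation and the algebraic identification of the quadratic and linear terms are routine.
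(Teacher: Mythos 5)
Your proposal is correct and follows essentially the same route as the paper: cancel the environment-independent Jacobian contribution by taking the log-density difference between environment $i$ and environment $0$, then expand the Gaussian quadratics and identify the linear term via $\Theta^{(i)}\mu^{(i)}=\eta^{(i)}\rowint{i}$. The only real difference is that the obstacle you anticipate (densities against Hausdorff measure on the embedded image manifold) does not arise in the paper's version: injectivity of $\alt{f}$ gives $h_\ast Z^{(i)}=\alt{Z}^{(i)}$ directly, so the ordinary change-of-variables formula $p^{(i)}(z)=\alt{p}^{(i)}(h(z))\,|\det J_h(z)|$ on $\RR^d$ applies, and $|\det J_h(z)|$ is the single $i$-independent term that cancels in place of your two Gram--Jacobians.
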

\begin{proof}
    Note first that $X^{(0)}\indistribution f(Z^{(0)}$ implies since $f$ is an embedding by assumption that the dimension of the support of $X^{(0)}$ is the same as the latent dimension and this dimension is identifiable from the observations, e.g., by considering the tangent space.
    As $Z^{(0)}$ and $\alt{Z}^{(0)}$ have full support and $f$ and $\tilde{f}$ are by assumption embeddings, the equality $f(Z^{(0)})\indistribution \alt{f}(\alt{Z}^{(0)})$
    implies that the two image manifolds agree and 
    $h = \alt{f}^{-1}\circ f$ is a differentiable map.
    Moreover, we find that $h_\ast Z^{(i)}=\alt{Z}^{(i)}$ where $h_\ast$ denotes the pushforward map.
    The variables $Z^{(i)}$ are Gaussian with distribution
    $Z^{(i)}\sim N(\mu^{(i)}, \Sigma^{(i)})$. This implies
    that its density $p^{(i)}$ can be written as
    \begin{align}
      \ln(  p^{(i)}(z)) = -\frac{n}{2}\ln(2\pi) - \frac{1}{2} \ln |\Sigma^{(i)}| - \frac{1}{2} (z-\mu^{(i)})^\top  \Theta^{(i)}(z - \mu^{(i)}).
    \end{align}
    A similar relation holds for $\alt{p}^{(i)}(z)$, the density of $\alt{Z}^{(i)}$. Finally, we have the relations
    \begin{align}
    p^{(i)}(z)=\alt{p}^{(i)}(h(z)) |\det J_h(z)|
    \end{align}
    which is a consequence of $h_\ast Z^{(i)}=\alt{Z}^{(i)}$. Here $J_h$ denotes the Jacobian matrix of partial derivatives.
    Thus we conclude that
    \begin{align}
    \begin{split}
       -\frac{n}{2}\ln(2\pi)& - \frac{1}{2} \ln |\Sigma^{(i)}| - \frac{1}{2} (z-\mu^{(i)})^\top  \Theta^{(i)}(z - \mu^{(i)})      
       \\
       &=
       -\frac{n}{2}\ln(2\pi) - \frac{1}{2} \ln |\alt{\Sigma}^{(i)}| - \frac{1}{2} (h(z)-\alt{\mu}^{(i)})^\top  \alt{\Theta}^{(i)}(h(z) - \alt{\mu}^{(i)}) + \ln |\det J_h(z)|.
       \end{split}
    \end{align}
    Calling this relation $E^i$, we consider the difference
    $E^0-E^i$.
    Then 
    the determinant term cancels, and we obtain
    for some constant $c^{(i)}$
    \begin{align}
        \begin{split}
            \frac{1}{2}& (z-\mu^{(i)})^\top  \Theta^{(i)}(z - \mu^{(i)}) 
            -
            \frac{1}{2} z^\top  \Theta^{(0)}z
           \\
           &=
             \frac{1}{2} (h(z)-\alt{\mu}^{(i)})^\top  \alt{\Theta}^{(i)}(h(z) - \alt{\mu}^{(i)})
             - \frac{1}{2} h(z)^\top  \alt{\Theta}^{(0)}h(z)
            +c'^{(i)}.
        \end{split}
    \end{align}
    Expanding the quadratic expression on the left-hand side
    we obtain 
    \begin{align}
           \frac{1}{2} (z-\mu^{(i)})^\top  \Theta^{(i)}(z - \mu^{(i)}) 
            -
            \frac{1}{2} z^\top  \Theta^{(0)}z
            =
              \frac{1}{2} z^\top  (\Theta^{(i)}-\Theta)z
            - z^\top \Theta^{(i)} \mu^{(i)}
            + \frac{1}{2}(\mu^{(i)})^\top \Theta^{(i)} \mu^{(i)}.
    \end{align}
    Finally we simplify the last term using \eqref{eq:mean}
    \begin{align}
        \Theta^{(i)} \mu^{(i)}
        =
     \eta^{(i)}   (B^{(i)})^\top B^{(i)} (B^{(i)})^{-1} e_{t_i}=  \eta^{(i)}   (B^{(i)})^\top e_{t_i}
     = \eta^{(i)} \rowint{i}
    \end{align}
    Apply the same simplification on the right hand side. Finally, we collect the constant terms independent of $z$ in $c^{(i)}$, to obtain
    \begin{align}
    \begin{split}
              \frac{1}{2}& z^\top  (\Theta^{(i)}-\Theta^{(0)})z
            - \eta^{(i)}(\rowint{i})^\top z
           \\
           &=
            \frac{1}{2} h(z)^\top  (\alt{\Theta}^{(i)}-\alt{\Theta}^{(0)})h(z)
             - \alt{\eta}^{(i)}(\rowintalt{i})^\top h(z)
            +c^{(i)}
    \end{split}
\end{align}
which is \eqref{eq:log_density} as desired.
\end{proof}
A second key ingredient for our identifiability results is
that  the specific structure of the interventions that each target a single node which implies that the difference 
of precision matrices $\Theta^{(i)}-\Theta^{(0)}$ has rank at most 2. This is also the key ingredient used (in a very different manner) in the proof of the main result in \cite{seigal2022linear}. Here we  highlight this simple result because we make use of it frequently.
\begin{lemma}\label{le:precision}
    The precision matrices satisfy the relation
    \begin{align}
        \Theta^{(i)}-\Theta^{(0)}
        = \rowint{i}\otimes \rowint{i} - \rowobs{i}\otimes \rowobs{i}.
    \end{align}
\end{lemma}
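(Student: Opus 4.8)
The plan is to exploit the elementary identity $M^\top M = \sum_{k=1}^d (M^\top e_k)(M^\top e_k)^\top$, which expresses any Gram matrix as a sum of rank-one outer products of its rows. Applying this to $\Theta^{(0)}=(B^{(0)})^\top B^{(0)}$ and to $\Theta^{(i)}=(B^{(i)})^\top B^{(i)}$ (using the relation \eqref{eq:precision_covariance}) reduces the claim to comparing these two sums term by term.

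First I would recall that a single-node intervention on $\tgt{i}$ alters only the structural equation for $Z_{\tgt{i}}$, so $B^{(i)}$ and $B^{(0)}$ coincide in every row except row $\tgt{i}$; this is exactly the content of the relation $B^{(0)}-B^{(i)}=e_{\tgt{i}}(\rowobs{i}-\rowint{i})^\top$ recorded in Section~\ref{app:notation}. Writing $(B^{(0)})^\top e_k$ and $(B^{(i)})^\top e_k$ for the respective $k$-th rows, these agree for all $k\neq \tgt{i}$, while for $k=\tgt{i}$ they equal $\rowobs{i}$ and $\rowint{i}$ by definition. Substituting into the two rank-one expansions and subtracting, every term with $k\neq\tgt{i}$ cancels, leaving only $\rowint{i}\otimes\rowint{i}-\rowobs{i}\otimes\rowobs{i}$, which is the assertion.

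Alternatively, one can substitute $B^{(i)}=B^{(0)}-e_{\tgt{i}}(\rowobs{i}-\rowint{i})^\top$ directly into $\Theta^{(i)}-\Theta^{(0)}$ and expand, using $(B^{(0)})^\top e_{\tgt{i}}=\rowobs{i}$ and $e_{\tgt{i}}^\top e_{\tgt{i}}=1$; the cross terms then recombine into the same rank-two difference. I expect no genuine obstacle: the lemma is essentially a one-line telescoping computation, and the only point requiring care is the observation that the intervention leaves all but one row of $B$ unchanged, which is immediate from Assumption~\ref{as:3}. The row-decomposition route is the cleaner of the two, and is the one I would write out in full.
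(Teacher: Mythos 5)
Your proposal is correct and follows essentially the same route as the paper: the paper's proof expands $\Theta^{(i)}-\Theta^{(0)}=(B^{(i)})^\top\bigl(\sum_k e_k e_k^\top\bigr)B^{(i)}-B^\top\bigl(\sum_k e_k e_k^\top\bigr)B$ into sums of rank-one outer products of rows and cancels all terms with $k\neq \tgt{i}$, exactly as in your primary (row-decomposition) argument. No gaps.
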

\begin{proof}
    Using \eqref{eq:precision_covariance}
    we find
    \begin{align}
        \begin{split}
            \Theta^{(i)} - \Theta^{(0)} 
            &= 
            (B^{(i)})^\top B^{(i)} - B^\top B
            \\
            &=
            (B^{(i)})^\top\left(\sum_{k=1}^d e_k e_k^\top\right) B^{(i)} - B^\top\left(\sum_{k=1}^d e_k e_k^\top\right) B
            \\
            &= 
            \sum_{k=1}^d  (B^{(i)})^\top e_k ((B^{(i)})^\top e_k)^\top
            -
            \sum_{k=1}^d  B^\top e_k (B^\top e_k)^\top
            \\
            &=
            ((B^{(i)})^\top e_{\tgt{i}})\otimes ((B^{(i)})^\top e_{\tgt{i}})
            -
            (B^\top e_{\tgt{i}})\otimes (B^\top e_{\tgt{i}})\\
            &= \rowint{i}\otimes\rowint{i}- \rowobs{i}\otimes\rowobs{i}.
        \end{split}
    \end{align}
    Here we used the fact that all rows except row $\tgt{i}$ of $B^{(i)}$ and $B^{(0)}$ agree as the intervention targets a single node, i.e., $(B^{(i)})^\top e_k
    = (B^{(0)})^\top e_k$ for $k\neq \tgt{i}$.
\end{proof}
\subsection{Warm-up}\label{app:warmup}
To provide some intuition how identifiability arises, we first provide the proof for a simpler result with a much stronger assumption on the set of intervention targets and the type of interventions. But note importantly that we don't relax the assumption of non-linearity of $f$.
\begin{theorem}
    Let $f(Z^{(i)})\indistribution \alt{f}(\alt{Z}^{(i)})$ for two sets of parameters 
    $((B^{(i)}, \eta^{(i)}, \tgt{i})_{i\in I},f)$ 
    and $((\alt{B}^{(i)}, \alt{\eta}^{(i)}, \tgtalt{i})_{i\in {I}},\alt{f})$
    as in the above model. 
    We assume that
    there are
    $2n$ perfect pairwise different interventions (i.e.,
    $Z^{(i)} \overset{\mathcal{D}}{\neq} Z^{(i')}$ for $i\neq i'$), without shifts (i.e., $\eta^{(i)}=\alt{\eta}^{(i)}=0$).
    Finally, we assume that there are 2 interventions for each node, and
    we know the intervention targets of the interventions, i.e., we assume that $\tgt{i}=\tgtalt{i}$ for all $i$. Then, $Z^{(i)}$ and $\tilde{Z}^{(i)}$ agree up to scaling, i.e., $Z^{(i)}=D\tilde{Z}^{(i)}$
    for a diagonal matrix $D\in \diag(n)$.
\end{theorem}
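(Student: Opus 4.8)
The plan is to specialize the two lemmas just proved to the shift-free, perfect, target-matched regime and thereby reduce the whole problem to understanding each scalar map $z\mapsto \ip{h(z)}{e_k}$, and then to upgrade these scalar maps to linear ones using that $h$ is a diffeomorphism. First I would record the simplifications. Since all interventions are perfect we have $\rowint{i}=\lambda_i e_{\tgt{i}}$ and $\rowintalt{i}=\alt\lambda_i e_{\tgtalt{i}}$, and since there are no shifts, $\mu^{(i)}=\alt\mu^{(i)}=0$, so \cref{le:workhorse} applies in its mean-free form \eqref{eq:log_density_no_mean}. Combining with \cref{le:precision} gives, for the intervention $i$ on node $k=\tgt{i}=\tgtalt{i}$,
\begin{align}
\Theta^{(i)}-\Theta^{(0)} = \lambda_i^2\, e_k\otimes e_k - \rowobs{i}\otimes\rowobs{i},
\qquad
\alt\Theta^{(i)}-\alt\Theta^{(0)} = \alt\lambda_i^2\, e_k\otimes e_k - \rowobsalt{i}\otimes\rowobsalt{i},
\end{align}
where $\rowobs{i}=B^\top e_k$ and $\rowobsalt{i}=\alt B^\top e_k$ depend only on the node $k$, not on which of the two interventions at $k$ we use.

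The key manoeuvre is to difference the two interventions $i,i'$ that target the same node $k$. Their observational rows coincide, $\rowobs{i}=\rowobs{i'}$ and $\rowobsalt{i}=\rowobsalt{i'}$, so subtracting the two copies of \eqref{eq:log_density_no_mean} cancels the $\rowobs{}\otimes\rowobs{}$ and $\rowobsalt{}\otimes\rowobsalt{}$ contributions on each side, leaving the rank-one identity
\begin{align}
a_k\, \ip{z}{e_k}^2 = \alt a_k\, \ip{h(z)}{e_k}^2 + \mathrm{const},
\qquad a_k=\lambda_i^2-\lambda_{i'}^2,\ \ \alt a_k=\alt\lambda_i^2-\alt\lambda_{i'}^2 .
\end{align}
I would then argue $a_k\neq 0$ (the two interventions are distinct, $Z^{(i)}\overset{\mathcal{D}}{\neq}Z^{(i')}$, and both are perfect shift-free interventions on $k$ with positive scales, so $\lambda_i\neq\lambda_{i'}$) and $\alt a_k\neq 0$ (otherwise the right side is constant while the left is not). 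Writing $\ip{h(z)}{e_k}^2=\alpha_k\ip{z}{e_k}^2+\beta_k$ with $\alpha_k=a_k/\alt a_k$, nonnegativity of the left side as $\ip{z}{e_k}\to\pm\infty$ forces $\alpha_k>0$, and evaluating at $\ip{z}{e_k}=0$ gives $\beta_k\ge 0$. Crucially, the right side depends on $z$ only through $\ip{z}{e_k}=z_k$.

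Next I would promote this to a coordinatewise statement about $h$ itself. On each hyperplane $\{z_k=c\}$, which is connected, $\ip{h(z)}{e_k}^2$ is the constant $\alpha_k c^2+\beta_k$; by continuity $\ip{h(z)}{e_k}$ has constant sign there, hence is itself constant, so $\ip{h(z)}{e_k}=\phi_k(z_k)$ for a differentiable $\phi_k$ with $\phi_k(t)^2=\alpha_k t^2+\beta_k$. Doing this for every $k$ shows $h$ has the product form $h=(\phi_1,\dots,\phi_d)$, and a product map is a bijection of $\RR^d$ iff each factor is a bijection of $\RR$; bijectivity of $h$ (it is a diffeomorphism by \cref{le:workhorse}) therefore forces each $\phi_k$ to be bijective. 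I expect this final step to be the main obstacle, since it is exactly where the ambiguities left by the quadratic identities must be eliminated: if $\beta_k>0$ then $\phi_k$ is bounded away from $0$ and its range omits a neighbourhood of $0$, contradicting surjectivity, so $\beta_k=0$; and among the solutions of $\phi_k(t)^2=\alpha_k t^2$ the absolute-value branches $\pm\sqrt{\alpha_k}\lvert t\rvert$ are not injective, leaving only the linear branches $\phi_k(t)=\pm\sqrt{\alpha_k}\,t$. Hence $h(z)=Mz$ with $M$ an invertible diagonal matrix with entries $\pm\sqrt{\alpha_k}$, and since $\alt Z^{(i)}\indistribution h(Z^{(i)})=M Z^{(i)}$ we conclude $Z^{(i)}\indistribution M^{-1}\alt Z^{(i)}$, i.e. the claim with $D=M^{-1}\in\diag(d)$.
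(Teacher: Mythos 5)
Your proposal is correct and follows essentially the same route as the paper's proof: the same differencing of the two perfect interventions at each node $k$ to cancel the observational-row terms and reduce to an identity of the form $a_k z_k^2 = \alt{a}_k h_k(z)^2 + \mathrm{const}$, followed by ruling out the constant and the non-linear branches. The only (minor) difference is in the endgame, where you make the argument explicit via the separable product structure of $h$ and injectivity/surjectivity of each factor, whereas the paper argues directly from the range of the surjective map $h_k$; both are valid.
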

\begin{remk}
    Actually this result can be generalized to a setting where the latent SCM is non-parametric
    \cite{von2023nonparametric}.
\end{remk}
\begin{proof}
We consider a node $k$ and two interventions $i_1$ and $i_2$ which target node $k=\tgt{i_1}=\tgt{i_2}$ (by assumption those interventions exist and $i_1$ and $i_2$ agree for both representations).
Since we assumed that the interventions are perfect,
we have in our notation that
\begin{align}\label{eq:row_lambda}
    \rowint{i_1}=\lambda_{i_1}e_k,
    \quad
\rowint{i_2} = \lambda_{i_2}e_k
\end{align}
for two constants $\lambda_{i_1}\neq\lambda_{i_2}>0$ and similarly for
${\rowintalt{i_j}}$. Indeed, 
assuming positivity is not a restriction
since $\eps$ follows a symmetric distribution (this is also contained in the parametrisation $(D^{(i)})^{\frac12}$) and $\lambda_{i_1}\neq\lambda_{i_2}$
is necessary to ensure that the interventional distributions differ.
Combining \eqref{eq:log_density_no_mean} from Lemma~\ref{le:workhorse} with Lemma~\ref{le:precision} we obtain the relation
\begin{align}
\begin{split}
    z^\top \left(\rowint{i_j}
    \otimes
    \rowint{i_j} - \rowobs{i_j}
    \otimes\rowobs{i_j}\right) z
    &=
    z^\top(\Theta^{(i_j)}-\Theta^{(0)})z\\
    &=h(z)^\top(\alt{\Theta}^{(i_j)}-\alt{\Theta}^{(0)})h(z)+2c^{(i_j)}
    \\
    &=  h(z)^\top \left({\rowintalt{i_j}}
    \otimes {\rowintalt{i_j}} -
     {\rowobsalt{i_j}}
    \otimes{\rowobsalt{i_j}}\right) h(z)
    + 2c^{(i_j)}.
    \end{split}
\end{align}
for $j = 1, 2$.

Note that since $k=\tgt{i_1}=\tgt{i_2}$, we have $\rowobs{i_1}=\rowobs{i_2}=B^\top e_k$ by definition.
Thus, we subtract the relation in the  last display for $i_1$ from the relation for $i_2$ and find
using \eqref{eq:row_lambda},
\begin{align}
\begin{split}
    (\lambda_{i_1}^2-\lambda_{i_2}^2) z_k^2
    &=
    (z^\top \rowint{i_1})^2
    -(z^\top \rowint{i_2})^2
    \\
    &=
    (h(z)^\top \rowintalt{i_1})^2
    -(h(z)^\top \rowintalt{i_2})^2+2(c^{(i_1)}-c^{(i_2)})
    \\
    &=
    (\alt{\lambda}_{i_1}^2-\alt{\lambda}_{i_2}^2) h_k(z)^2
    +2(c^{(i_1)}-c^{(i_2)}).
    \end{split}
\end{align}
for all $z$.
Since $\alt{\lambda}_{i_1}\neq
\alt{\lambda}_{i_2}>0$ we can divide and obtain
\begin{align}
    \frac{\lambda_{i_1}^2-\lambda_{i_2}^2}{\alt{\lambda}_{i_1}^2-\alt{\lambda}_{i_2}^2} z_k^2
    = h_k(z)^2 +2\frac{c^{(i_1)}-c^{(i_2)}}{\alt{\lambda}_{i_1}^2-\alt{\lambda}_{i_2}^2}.
\end{align}
Since $h$ is surjective, we have that $h_k$ is also surjective which implies that the range of the right hand side is $
[2(c^{(i_1)}-c^{(i_2)})/(\alt{\lambda}_{i_1}^2-\alt{\lambda}_{i_2}^2),\infty)$.
The range of the left hand side is depending on the sign of the prefactor and is either
$(-\infty,0]$ or $[0,\infty)$.
Since the ranges have to agree we conclude that $c^{(i_1)}=c^{(i_2)}$
and
\begin{align}
    h_k(z)= \pm \sqrt{\frac{\lambda_{i_1}^2-\lambda_{i_2}^2}{\alt{\lambda}_{i_1}^2-\alt{\lambda}_{i_2}^2}} z_k.
\end{align}
Since $k$ was arbitrary, this ends the proof.
\end{proof}

\subsection{Proof of identifiability up to linear maps}\label{app:proof_linear_ident}
The proof of our main result relies on two essentially geometric lemmas and a slight extension of Lemma~\ref{le:workhorse} which we discuss now.
The first lemma alone is sufficient to handle the case where interventions change the scaling matrix $D$, while the second is required to handle interventions that change $B$ and the third allows us to consider pure shift interventions.

\begin{lemma}\label{le:quad}
Let $g:\RR^d\to \RR$ be a continuous and surjective function, $Q$ a quadratic form on $\RR^d$ and $\alpha\in \RR$ a constant such that
\begin{align}\label{eq:g_quadratic}
    g(z)^2 =  z\cdot Q z + \alpha.
\end{align}
Then   $Q$ has rank 1 and $g$ is linear, i.e., $g(z)=v\cdot z$ for some $v\in \RR^d$.
\end{lemma}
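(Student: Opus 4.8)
The plan is to exploit that the left-hand side $g(z)^2$ is a \emph{square}, hence nonnegative, together with the surjectivity of $g$, to first pin down both the constant $\alpha$ and the signature of $Q$, and then to run a connectedness argument to force $\operatorname{rank} Q = 1$. Throughout I may assume $Q$ is symmetric, since a quadratic form only sees the symmetric part.

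First I would observe that since $g(z)^2 \ge 0$ for all $z$, the polynomial $\langle z, Qz\rangle + \alpha$ is nonnegative everywhere. Taking $z$ along an eigenvector of $Q$ corresponding to a would-be negative eigenvalue and scaling it shows $\langle z, Qz\rangle$ is unbounded below unless $Q \succeq 0$; hence $Q$ is positive semidefinite. Evaluating at $z = 0$ gives $\alpha = g(0)^2 \ge 0$, and since $Q \succeq 0$ the minimum of $\langle z, Qz\rangle + \alpha$ over $z$ equals $\alpha$ (attained at $z=0$). But surjectivity of $g$ means $g$ attains the value $0$, so $\min_z g(z)^2 = 0$, forcing $\alpha = 0$. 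Thus the identity reduces to $g(z)^2 = \langle z, Qz\rangle$ with $Q \succeq 0$.

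Next I would diagonalize. Writing $Q = \sum_{j=1}^r \mu_j v_j v_j^\top$ with $\mu_j > 0$ and $\{v_j\}$ orthonormal, where $r = \operatorname{rank} Q$, and setting $Lz = (\sqrt{\mu_j}\,\langle v_j, z\rangle)_{j=1}^r \in \RR^r$, which is a rank-$r$ linear map, the identity becomes $g(z)^2 = \norm{Lz}^2$, i.e.\ $|g(z)| = \norm{Lz}$. On the open set $U = \{z : Lz \ne 0\} = \RR^d \setminus \ker L$ the continuous function $g$ never vanishes, so by the intermediate value theorem $g$ has constant sign on each connected component of $U$.

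The crux — and the step I expect to be the main obstacle — is the connectedness dichotomy for $U$. Since $\ker L$ has codimension $r$, the set $U$ is \emph{connected} when $r \ge 2$ and has exactly two components (the two open half-spaces) when $r = 1$. If $r \ge 2$, then $g$ carries a single constant sign on all of the connected set $U$, and by continuity $g$ is one-signed on $\RR^d$, contradicting surjectivity; and $r = 0$ forces $g \equiv 0$, again not surjective. Hence $r = 1$, so $Q = \mu_1 v_1 v_1^\top$ has rank $1$. Finally, with $w = \sqrt{\mu_1}\,v_1$ we have $|g(z)| = |\langle w, z\rangle|$, and $g$ has constant sign on each of the two half-spaces $\{\langle w, z\rangle > 0\}$ and $\{\langle w, z\rangle < 0\}$; the only sign choices compatible both with continuity across the hyperplane $\{\langle w, z\rangle = 0\}$ and with surjectivity (the alternatives yield $g = |\langle w, z\rangle|$ or $g = -|\langle w, z\rangle|$, neither of which is onto $\RR$) are the two giving $g(z) = \langle w, z\rangle$ or $g(z) = -\langle w, z\rangle$. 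Either way $g(z) = v \cdot z$ is linear, which completes the proof.
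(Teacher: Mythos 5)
Your proof is correct and follows essentially the same strategy as the paper's: use nonnegativity of $g^2$ together with surjectivity to force $Q\succeq 0$ and $\alpha=0$, then a connectedness argument to rule out $\operatorname{rank}Q\geq 2$, then a sign analysis on the two half-spaces to conclude $g(z)=\pm\langle w,z\rangle$. The only (cosmetic) difference is that you apply connectedness to $\RR^d\setminus\ker L$, which is connected when the kernel has codimension $\geq 2$, whereas the paper applies it to the level sets $\{z:\langle z,Qz\rangle=c\}$ (an ellipsoid times $\RR^{d-r}$, connected for $r\geq 2$); both yield the same contradiction with surjectivity.
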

\begin{proof}
Assume that $Q$ has the eigendecomposition $Q = \sum_{i=1}^r \lambda_i v_iv_i^\top$ where $r$ is the rank of $Q$
and $\lambda_i\neq 0$ and $v_i$ has unit norm.
Clearly $g$ surjective implies $r>0$. 
Since $g$ is surjective we find that the range of the left hand side is $[0,\infty)$ and therefore $ z\cdot  Q z\geq  -\al$. 
This implies that $\lambda_i\geq 0$ for all $i$ and therefore, the range of the right hand side is $[\alpha,\infty)$ which implies $\alpha=0$.
Now we fix any $c>0$ and consider $E_c=\{z\in \RR^d|\,  z\cdot Q z=c\} $.
It is easy to see that $E_c$ is the product of an ellipsoid and $\RR^{d-r}$ and thus is connected if $r\geq 2$. 
We find  by \eqref{eq:g_quadratic} that $g(z)=\pm \sqrt{c}$ for $z\in E_c$ and if $g(z)\in \{-\sqrt{c},\sqrt{c}\}$ then $z\in E_c$. By continuity of $g$ and since
$E_c$ is connected this implies that $g(z)=\sqrt{c}$ for all $z\in E_c$ or $g(z)=-\sqrt{c}$ for all $z\in E_c$ contradicting the surjectivity of $g$. We conclude that $r=1$, i.e., $Q = \lambda_1 v_1v_1^\top=\tilde{v}_1\tilde{v}_1^\top$ with
$\tilde{v}_1=\sqrt{\lambda_1}v_1$. Thus we find that 
\begin{align}
    g(z)^2=|\tilde{v}_1\cdot z|^2.
\end{align}
Since $g$ is continuous, this implies that $g(z)=\pm \tilde{v}_1\cdot z$ or $g(z)=\pm |\tilde{v}_1\cdot z|$.
As only the first functions are surjective we conclude that $g(z)=\pm \tilde{v}_1\cdot z$ and the claim follows
with $w=\pm \tilde{v}_1$.
\end{proof}
To handle shift interventions we need a slightly stronger version of the lemma above which contains an additional linear term.
\begin{corollary}\label{co:quad}
Let $g:\RR^d\to \RR$ be a continuous and surjective function, $Q$ a quadratic form on $\RR^d$,  $w\in \RR^d$
a vector, and $\alpha\in \RR$ a constant such that
\begin{align}\label{eq:g_quadratic2}
    g(z)^2 =  z\cdot Q z +w\cdot z + \alpha.
\end{align}
Then   $Q$ has rank 1 and $g$ is affine, i.e., $g(z)=v\cdot z+c$ for some $v\in \RR^d$ and $c\in \RR$.
\end{corollary}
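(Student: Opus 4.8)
The plan is to reduce Corollary~\ref{co:quad} to Lemma~\ref{le:quad} by completing the square and absorbing the linear term $w\cdot z$ into a translation of the argument. The only genuine subtlety is that completing the square requires $w$ to lie in the range of $Q$, so I would first rule out the possibility that $w$ has a component along the kernel of $Q$.

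First, I would exploit the nonnegativity of $g^2$. Since $Q$ is symmetric we have the orthogonal decomposition $\RR^d=\operatorname{range}(Q)\oplus\ker(Q)$; write $w=w_\parallel+w_\perp$ accordingly. Suppose $w_\perp\neq 0$ and restrict \eqref{eq:g_quadratic2} to the line $z=t\,w_\perp$. Because $w_\perp\in\ker(Q)$ we get $\langle z,Qz\rangle=0$ and $w\cdot z=t\,\norm{w_\perp}^2$, so $g(t w_\perp)^2=t\,\norm{w_\perp}^2+\alpha$. Letting $t\to-\infty$ makes the right-hand side negative, contradicting $g^2\ge 0$. Hence $w_\perp=0$ and $w\in\operatorname{range}(Q)$.

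Next, I would pick $z_0$ with $Qz_0=-w/2$ (possible since $w\in\operatorname{range}(Q)$) and set $\tilde g(\tilde z)=g(\tilde z+z_0)$. Expanding \eqref{eq:g_quadratic2} and using the symmetry of $Q$, the choice $2Qz_0+w=0$ cancels the linear term, yielding
\begin{align*}
\tilde g(\tilde z)^2 = \langle \tilde z, Q\tilde z\rangle + (2Qz_0+w)\cdot\tilde z + \alpha' = \langle \tilde z, Q\tilde z\rangle + \alpha'
\end{align*}
with $\alpha'=\langle z_0,Qz_0\rangle+w\cdot z_0+\alpha$. Since translation by $z_0$ is a homeomorphism of $\RR^d$, the function $\tilde g$ is again continuous and surjective, so Lemma~\ref{le:quad} applies and gives that $Q$ has rank $1$ and $\tilde g(\tilde z)=v\cdot\tilde z$ for some $v\in\RR^d$. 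Undoing the translation then yields $g(z)=v\cdot(z-z_0)=v\cdot z+c$ with $c=-v\cdot z_0$, which is exactly the claimed affine form.

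The main obstacle I anticipate is precisely the first step: one must verify that no part of the linear term escapes the range of $Q$, since otherwise the square cannot be completed and, as the ray argument shows, $g^2$ would fail to be bounded below and surjectivity onto $[0,\infty)$ would be impossible. Once $w\in\operatorname{range}(Q)$ is established the remainder is a routine translation that reduces the statement to the already-proven Lemma~\ref{le:quad}.
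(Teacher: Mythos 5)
Your proof is correct and follows essentially the same route as the paper: first showing $w\in\operatorname{range}(Q)$ by sending $z$ to infinity along a kernel direction where the right-hand side would become negative, then completing the square via a translation and invoking Lemma~\ref{le:quad}. The only difference is a sign convention in the shift vector ($Qz_0=-w/2$ versus the paper's $Qw'=w/2$ with the translation applied in the opposite direction), which is immaterial.
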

\begin{proof}
Assume as before  that $Q$ has the eigendecomposition $Q = \sum_{i=1}^r \lambda_i v_iv_i^\top$ where $r$ is the rank of $Q$
and $\lambda_i\neq 0$ and $|v_i|=1$.
    First, we show that $w \in \langle v_1,\ldots, v_r\rangle$ where $\langle . \rangle$ denotes the span.
    Indeed, suppose that this is not the case.
Then we could find $z_0$ such that $z_0\cdot w< 0$
and $z_0 v_i=0$ for $1\leq i\leq r$, and therefore $ z_0\cdot Qz_0 =0$. Plugging  $z=\lambda z_0$ for $\lambda\to \infty$ in \eqref{eq:g_quadratic2}
the right-hand side tends to $-\infty$ while the left-hand side is non-negative. This is a contradiction and therefore
$w \in \langle v_1,\ldots, v_r\rangle$ holds.
This implies that there is $w'$ such that $Qw' = w/2$ and thus
\begin{align}
     z\cdot Q z +w\cdot z + \alpha
    =  (z+w')\cdot Q (z+w')  + \alpha -  w'\cdot Qw'.
\end{align}
Then we consider $\alt{g}(\alt{z})=g(\alt{z}-w')$,
$\alt{\alpha}=\alpha -  w'\cdot Qw'$ which satisfy
\begin{align}
   \alt{g}^2(\alt{z}) &= g^2(\alt{z}-w')\\
   &=  (\alt{z}-w'+w')\cdot Q (\alt{z}-w'+w')  + \alpha -  w'\cdot Qw'\\
   &=  \alt{z}\cdot Q \alt{z}+\alt{\alpha}.
\end{align}
Using Lemma~\ref{le:quad} we conclude that $Q$ has rank 1
and $\alt{g}(\alt{z})=v\cdot z$ for some $v$. But then
$g(z)=\alt{g}(z+w')=z\cdot v + w'\cdot v$ which concludes the proof.
\end{proof}

The second lemma is similar, but slightly simpler.
\begin{lemma}\label{le:lin}
    Assume that there is a quadratic form $Q$,  a  vector $0\neq w\in \RR^d$, $w'\in \RR^d$, $\alpha \in \RR$ and a continuous function $g:\RR^d\to\RR$ such that
    \begin{align}
        g(z) (w\cdot z)=z\cdot Q z + w' \cdot z +\alpha.
    \end{align}
    Then $g(z)=v\cdot z+c$ for some $v\in \RR^d$, $c\in \RR$, i.e., $g$ is an affine function.
\end{lemma}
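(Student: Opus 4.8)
The plan is to treat the right-hand side as a genuine polynomial and to exploit the fact that the left-hand side vanishes on the hyperplane $H = \{z : w\cdot z = 0\}$. Write $p(z) = z\cdot Q z + w'\cdot z + \alpha$, which is a polynomial of degree at most $2$. Since the identity $g(z)(w\cdot z) = p(z)$ is assumed to hold for every $z$, evaluating it at any $z \in H$ makes the left-hand side $g(z)\cdot 0 = 0$, so $p$ vanishes identically on $H$. Note that this step is purely algebraic and does not yet use continuity of $g$.

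Next I would argue that a degree-$2$ polynomial vanishing on the zero set of the linear form $w\cdot z$ is divisible by that form. Concretely, after an orthogonal change of coordinates sending $w/|w|$ to $e_1$ (so that $w\cdot z$ becomes a nonzero multiple of the first coordinate), the vanishing condition reads $p = 0$ whenever the first coordinate is zero; collecting $p$ according to powers of that coordinate then shows that the part of $p$ not involving it must vanish, giving a factorization $p(z) = (w\cdot z)\,r(z)$ for some polynomial $r$. Because $\deg p \le 2$ while $\deg(w\cdot z) = 1$, the quotient $r$ has degree at most $1$, i.e. $r(z) = v\cdot z + c$ is affine (the degenerate case $p\equiv 0$ simply gives $r\equiv 0$).

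Finally, on the open dense set $\{z : w\cdot z \neq 0\}$ I can cancel the factor $w\cdot z$ from both sides of $g(z)(w\cdot z) = (w\cdot z)\,r(z)$ to conclude $g(z) = r(z)$ there. The functions $g$ and $r$ are both continuous and agree on a dense subset of $\RR^d$, so they agree everywhere; hence $g(z) = v\cdot z + c$ is affine, as claimed.

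I expect the only delicate point to be the divisibility step, namely justifying that vanishing on $H$ forces the factor $w\cdot z$. For a general variety this would require the Nullstellensatz, but here the form is linear, so the argument is elementary one-variable polynomial division after rotating $w$ to a coordinate axis. Continuity of $g$ enters exactly once, at the very end, to pin down the value of $g$ on the exceptional hyperplane where the algebraic identity alone leaves it undetermined.
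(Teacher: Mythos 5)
Your proof is correct. Both your argument and the paper's share the same skeleton: evaluate the identity on the hyperplane $\{z: w\cdot z=0\}$ to kill the left-hand side, deduce that the right-hand side is constrained there, divide out the linear factor on the complement, and invoke continuity of $g$ once at the very end to extend across the hyperplane. Where you differ is the middle step. The paper works with the orthogonal decomposition $z=\Pi_w z+\Pi_w^\perp z$, derives the separate vanishing conditions $(\Pi_w^\perp z)\cdot Q\,\Pi_w^\perp z=0$ and $(\Pi_w^\perp z)\cdot w'=0$ by a scaling argument, and then expands the quadratic form explicitly to exhibit the quotient as $w\cdot Q(\Pi_w z+2\Pi_w^\perp z)+w\cdot w'$. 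You instead treat the right-hand side as a polynomial of degree at most $2$ vanishing on a hyperplane and factor out the linear form by elementary one-variable division after rotating $w$ to a coordinate axis; the degree count then forces the quotient to be affine without ever computing it. Your route is shorter and more conceptual (it makes clear that the only facts used are $\deg p\le 2$ and vanishing on a hyperplane), at the cost of not producing the explicit formula for $v$ and $c$; the paper's computation yields the quotient in closed form, which is occasionally convenient but not needed for the identifiability argument. Your divisibility step is correctly justified --- collecting $p$ by powers of the distinguished coordinate and noting that a polynomial vanishing identically on $\RR^{d-1}$ is the zero polynomial is all that is required, with no appeal to the Nullstellensatz --- and the final density-plus-continuity argument is sound.
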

\begin{proof}
    By rescaling we can assume that $w$ has unit norm. Plugging in $z=0$ we find that $\alpha=0$. 
    Denote by $\Pi_w$ the orthogonal projection on $w$, i.e., $\Pi_w v=(w\cdot v) w$ and by
    $\Pi^\perp_w$ the projection on the orthogonal complement of $w$, i.e., $z=\Pi_w z+\Pi_w^\perp z$ for all
    $z\in \RR^d$. Then we find for all $z$
    \begin{align}
        0 = g(\Pi^\perp_w z) w\cdot \Pi^\perp_w z
        = (\Pi^\perp_w z ) \cdot Q \Pi^\perp_w z
        + (\Pi^\perp_w z)\cdot w'.
    \end{align}
By scaling $z$ we find that both terms on the right hand side vanish and thus  for all $z$
\begin{align}
    (\Pi^\perp_w z ) \cdot Q \Pi^\perp_w z&=0,\\
    (\Pi^\perp_w z)\cdot w'&=0.
\end{align}
    
    This implies (using also the symmetry of $Q$)
    \begin{align}
    \begin{split}
        g(z) w\cdot \Pi_w z
        &= g(z) w\cdot  z\\
        &= z\cdot Q z + w'\cdot z
        \\
        &= (\Pi_w z+\Pi_w^\perp z)\cdot Q (\Pi_w z+\Pi_w^\perp z) + w'\cdot (\Pi_w z+\Pi_w^\perp z)
        \\
        &= (\Pi_w z) \cdot Q \Pi_w z + 2 (\Pi_w z) Q \Pi_w^\perp z+(\Pi^\perp_w z ) \cdot Q \Pi^\perp_w z+\Pi_w z \cdot w' 
        \\
        &= (\Pi_w z)\cdot (Q(\Pi_w z +2\Pi^\perp_w z)+w').
    \end{split}
    \end{align}
    Now we use $\Pi_w z= (w\cdot \Pi_w z) w$ and find 
    \begin{align}
        g(z) w\cdot \Pi_w z=(w\cdot \Pi_w z) w\cdot (Q(\Pi_w z +2\Pi^\perp_w z)+w').
    \end{align}
    This implies 
    \begin{align}
        g(z)=w\cdot Q(\Pi_w z +2\Pi^\perp_w z)+w\cdot w'
    \end{align} 
    for all $z$ such that $\Pi_w z\neq 0$.
    By continuity we conclude that this holds for all $z$, i.e., $g$ is affine.
\end{proof}
Again we need a slight generalization of this lemma.
\begin{corollary}\label{co:lin}
  Assume that there is a quadratic form $Q$,  a  vector $0\neq w\in \RR^d$, $w'\in \RR^d$, $\alpha,\beta \in \RR$ and a continuous function $g:\RR^d\to\RR$ such that
    \begin{align}
        g(z) (w\cdot z+\beta)=z\cdot Q z + w' \cdot z +\alpha.
    \end{align}
    Then $g(z)=v\cdot z+c$ for some $v\in \RR^d$, $c\in \RR$, i.e., $g$ is an affine function.
\end{corollary}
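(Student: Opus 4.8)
The plan is to reduce Corollary~\ref{co:lin} to \cref{le:lin} by translating the argument of $g$ so as to absorb the constant $\beta$ into the linear factor, exactly parallel to the way Corollary~\ref{co:quad} was obtained from \cref{le:quad}. Since $w\neq 0$, I would first pick a shift vector $z_0$ with $w\cdot z_0=\beta$---concretely $z_0=\beta w/\norm{w}^2$---so that $w\cdot z+\beta=w\cdot(z+z_0)$ for every $z$. Setting $\tilde z=z+z_0$ and defining $\tilde g(\tilde z)=g(\tilde z-z_0)$, the hypothesis becomes $\tilde g(\tilde z)\,(w\cdot\tilde z)=(\tilde z-z_0)\cdot Q(\tilde z-z_0)+w'\cdot(\tilde z-z_0)+\alpha$.

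The second step is to expand the right-hand side and re-collect it as a quadratic form in $\tilde z$ plus linear and constant terms. Using the symmetry of $Q$, the cross terms combine as $(\tilde z-z_0)\cdot Q(\tilde z-z_0)=\tilde z\cdot Q\tilde z-2\,\tilde z\cdot Qz_0+z_0\cdot Qz_0$, so the whole right-hand side equals $\tilde z\cdot Q\tilde z+\tilde w'\cdot\tilde z+\tilde\alpha$ with $\tilde w'=w'-2Qz_0$ and $\tilde\alpha=z_0\cdot Qz_0-w'\cdot z_0+\alpha$. Crucially the leading factor is now the purely linear $w\cdot\tilde z$ with no additive constant, the vector $w$ is unchanged and still nonzero, and $\tilde g$ is continuous because it is the composition of the continuous $g$ with a translation.

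At this point the identity $\tilde g(\tilde z)\,(w\cdot\tilde z)=\tilde z\cdot Q\tilde z+\tilde w'\cdot\tilde z+\tilde\alpha$ is exactly the hypothesis of \cref{le:lin}, so that lemma yields $\tilde g(\tilde z)=v\cdot\tilde z+c$ for some $v\in\RR^d$ and $c\in\RR$. Undoing the substitution gives $g(z)=\tilde g(z+z_0)=v\cdot z+(v\cdot z_0+c)$, which is again affine, completing the proof.

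I expect no serious obstacle: the argument is purely bookkeeping. The only points requiring a little care are using $w\neq 0$ to guarantee the shift $z_0$ exists, invoking the symmetry of $Q$ when merging the cross terms, and checking that the translated identity matches the hypotheses of \cref{le:lin} on the nose---in particular that the linear factor reduces exactly to $w\cdot\tilde z$ and that continuity is preserved under translation.
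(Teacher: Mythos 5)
Your proposal is correct and is essentially the paper's own proof: the paper likewise sets $\alt{g}(\alt{z})=g(\alt{z}-\beta w |w|^{-2})$, observes that the linear factor becomes $w\cdot\alt{z}$ while the right-hand side remains a quadratic-plus-affine expression in $\alt{z}$, and then invokes Lemma~\ref{le:lin}. Your explicit bookkeeping of $\tilde w'$ and $\tilde\alpha$ is a correct elaboration of the step the paper leaves implicit.
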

\begin{proof}
    Define $\alt{g}(\alt{z})=g(\alt{z}-\beta w |w|^{-2})$.
    Then 
    \begin{align}
    \begin{split}
        \alt{g}(\alt{z})(w\cdot \alt{z})
       & = g(\alt{z}-\beta w |w|^{-2})(w\cdot (\alt{z}-\beta w |w|^{-2})+\beta)
       \\
        &=
        (\alt{z}-\beta w |w|^{-2})\cdot Q (\alt{z}-\beta w |w|^{-2}) + w' \cdot (\alt{z}-\beta w |w|^{-2}) +\alpha
        \\
        &=\alt{z}\cdot Q \alt{z}+\alt{w}\cdot \alt{z}+\alt{\alpha} 
        \end{split}
    \end{align}
    for some $\alt{w}$ and $\alt{\alpha}$.
    So, we conclude from Lemma~\ref{le:lin}
    that $\alt{g}$ is affine and thus the same is true for $g$.
\end{proof}

We now discuss a small extension of Lemma~\ref{le:workhorse} by showing that we can complete the
squares in \eqref{eq:log_density}
which then allows us to obtain a relation that is very similar to 
\eqref{eq:log_density_no_mean}.
\begin{lemma}\label{le:complete_square}
Assume the general setup as introduced above and also
$\Theta^{(i)}\neq \Theta$. Then  there is a 
vector $\mu'^{(i)}$ and a constant $c$
such that
\begin{align}\label{eq:complete_square}
 (z-\mu^{(i)})^\top  \Theta^{(i)}(z - \mu^{(i)}) - z^\top  \Theta^{(0)} z
 = (z-\mu'^{(i)})^\top  (\Theta^{(i)}-\Theta^{(0)})(z - \mu'^{(i)})+c.
\end{align}
\end{lemma}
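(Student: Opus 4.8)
The plan is to reduce the claimed identity to an ordinary completion of squares, where the single nontrivial point is to check that a certain linear system is solvable even though $\Theta^{(i)}-\Theta^{(0)}$ is singular. Writing $M=\Theta^{(i)}-\Theta^{(0)}$ and using $\Theta^{(i)}=M+\Theta^{(0)}$, I would first expand the left-hand side; after the pure $\Theta^{(0)}$ quadratic terms cancel one is left with
\begin{align}
 (z-\mu^{(i)})^\top \Theta^{(i)}(z-\mu^{(i)}) - z^\top\Theta^{(0)}z
 = z^\top M z - 2 z^\top \Theta^{(i)}\mu^{(i)} + (\mu^{(i)})^\top\Theta^{(i)}\mu^{(i)}.
\end{align}
At this point I would reuse the computation $\Theta^{(i)}\mu^{(i)}=\eta^{(i)}\rowint{i}$ already carried out inside the proof of Lemma~\ref{le:workhorse}, so the linear term becomes $-2\eta^{(i)}(\rowint{i})^\top z$ and the final term is a constant independent of $z$.

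I would then match this against the target expression $(z-\mu'^{(i)})^\top M(z-\mu'^{(i)})+c = z^\top M z - 2 z^\top M\mu'^{(i)} + (\mu'^{(i)})^\top M\mu'^{(i)} + c$. The quadratic parts coincide for free, and the $z$-independent pieces can be absorbed into $c$, so the identity holds exactly when the linear parts agree, i.e.\ when $M\mu'^{(i)} = \eta^{(i)}\rowint{i}$. Thus it suffices to produce one solution $\mu'^{(i)}$ of this (possibly underdetermined) system and then set $c$ equal to the resulting difference of constants.

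The hard part --- indeed the only substantive step --- is solvability: I must show $\eta^{(i)}\rowint{i}$ lies in the range of the singular matrix $M$. Here I would invoke Lemma~\ref{le:precision}, which gives $M=\rowint{i}\otimes\rowint{i}-\rowobs{i}\otimes\rowobs{i}$, so that the range of $M$ is contained in $\mathrm{span}\{\rowint{i},\rowobs{i}\}$ and it is enough to check $\rowint{i}\in\mathrm{range}(M)$. I would split into two cases. If $\rowint{i}$ and $\rowobs{i}$ are linearly independent, the functionals $\langle\rowint{i},\cdot\rangle$ and $\langle\rowobs{i},\cdot\rangle$ are independent, so I can pick $v$ with $\langle\rowint{i},v\rangle=1$ and $\langle\rowobs{i},v\rangle=0$, whence $Mv=\rowint{i}$. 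If instead $\rowobs{i}=\alpha\rowint{i}$ for a scalar $\alpha$, then $M=(1-\alpha^2)\,\rowint{i}\otimes\rowint{i}$; the hypothesis $\Theta^{(i)}\neq\Theta^{(0)}$ forces $M\neq0$, hence $\alpha^2\neq1$ and $\rowint{i}\neq0$, and choosing $v$ with $\langle\rowint{i},v\rangle=(1-\alpha^2)^{-1}$ again gives $Mv=\rowint{i}$. In both cases $\rowint{i}$, and therefore $\eta^{(i)}\rowint{i}$, lies in $\mathrm{range}(M)$, so a solution $\mu'^{(i)}$ exists and the proof concludes. It is worth noting that the assumption $\Theta^{(i)}\neq\Theta^{(0)}$ is used precisely to exclude the degenerate $M=0$ case, in which no $\mu'^{(i)}$ could reproduce a nonzero linear term.
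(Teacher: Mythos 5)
Your proof is correct and follows essentially the same route as the paper's: both reduce the identity to the solvability of $(\Theta^{(i)}-\Theta^{(0)})\mu'^{(i)}=\eta^{(i)}\rowint{i}$, compute $\Theta^{(i)}\mu^{(i)}=\eta^{(i)}\rowint{i}$, invoke Lemma~\ref{le:precision} for the rank-two structure, and split into the collinear and non-collinear cases. Your handling of the collinear case is in fact slightly more explicit than the paper's, since using $\Theta^{(i)}\neq\Theta^{(0)}$ to rule out $\alpha^2=1$ cleanly excludes both $\alpha=1$ and $\alpha=-1$.
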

\begin{proof}
Using the symmetry of the precision matrices and expanding all the terms, we can express the difference of the left-hand side and the right-hand side of 
\eqref{eq:complete_square} as
\begin{align}
\begin{split}
    &(z-\mu^{(i)})^\top  \Theta^{(i)}(z - \mu^{(i)}) - z^\top  \Theta^{(0)} z
 - (z-\mu'^{(i)})^\top  (\Theta^{(i)}-\Theta^{(0)})(z - \mu'^{(i)})-c
 \\
 &= -2 z^\top \Theta^{(i)} \mu^{(i)}
 + 2 z^\top (\Theta^{(i)}-\Theta^{(0)}) \mu'^{(i)}
 +  (\mu^{(i)})^\top  \Theta^{(i)} \mu^{(i)}
 -(\mu'^{(i)})^\top  (\Theta^{(i)}-\Theta^{(0)})\mu'^{(i)}-c.
 \end{split}
\end{align}
Hence, we can set $c$ at the end such the constant terms cancel and all
we have to show is that there exists  $\mu'^{(i)}$ such that
\begin{align}
    (\Theta^{(i)}-\Theta) \mu'^{(i)}    =\Theta^{(i)} \mu^{(i)}.
\end{align}
This is clearly not  true for arbitrary $\mu^{(i)}$ because 
$\Theta^{(i)}-\Theta$ has only rank 2. However, such a $\mu'^{(i)}$ does exist for $\mu^{(i)}=\eta^{(i)} (B^{(i)})^{-1}e_{\tgt{i}}$ (see equation \eqref{eq:mean}).
Indeed, we can simplify using \eqref{eq:precision_covariance} and 
Lemma~\ref{le:precision}
\begin{align}
    (\Theta^{(i)}-\Theta) \mu'^{(i)}
    &= 
    \rowint{i}((\rowint{i})^\top \mu'^{(i)}) - \rowobs{i}((\rowobs{i})^\top \mu'^{(i)})
    \\
    \label{eq:theta_mu}
    \Theta^{(i)} \mu^{(i)}
    &=
    \eta^{(i)}(B^{(i)})^{\top} B^{(i)} (B^{(i)})^{-1}e_{\tgt{i}}
    = \eta^{(i)}(B^{(i)})^{\top} e_{\tgt{i}}
    = \eta^{(i)} \rowint{i}.
\end{align}
Now there are two cases. When 
$\rowint{i}$ and $\rowobs{i}$ are collinear (but not identical, by assumption) we can choose 
$\mu'^{(i)})=\lambda \rowint{i}$ for a suitable $\lambda$.
Otherwise,  we can pick any vector $\mu'^{(i)}$ such that
$\mu'^{(i)}\rowobs{i}=0$, $\mu'^{(i)}\rowint{i}=\eta^{(i)}$.
This is always possible for non-collinear vectors (project
$\rowint{i}$ on the orthogonal complement of $\rowobs{i}$).
\end{proof}

Let us now first discuss a rough sketch of the proof of our main result. This allows us to present the main ideas without discussing all the technical details required for the full proof of the result. We restate the theorem for convenience.

\LinearIdent*

\begin{proof}[Proof Sketch]
For simplicity of the sketch we ignore shift interventions here.
We assume that we have two sets of parameters $((B^{(i)},\tgt{i})_{i\in\bar{I}}, f)$,
$(\alt{B}^{(i)}, \tgtalt{i})_{i\in \bar{I}}), \alt{f})$ generating the same observations. We can relabel 
the interventions $i$ and the variables $\alt{Z}_k$ such that $\tgtalt{i}=i$
and such that $1,2,\ldots$ is a valid causal order of the graph.
The general idea is to show by induction that $(h_1(z),\ldots, h_k(z))$ is a linear function of $z$.
Let us sketch how this is achieved. The key ingredients here are Lemma~\ref{le:workhorse} and Lemma~\ref{le:precision}. Applying
\eqref{eq:log_density_no_mean} from Lemma~\ref{le:workhorse} in combination with Lemma~\ref{le:precision} we obtain
\begin{align}
        \frac{1}{2} z^\top  (\Theta^{(k+1)}-\Theta^{(0)})z
    =
  \frac{1}{2} h(z)^\top ( \rowintalt{k+1}\otimes 
  \rowintalt{k+1}-\rowobsalt{k+1}\otimes\rowobsalt{k+1}) h(z)
            +c^{(k+1)} . 
\end{align}
 By our assumption that the variables $\alt{Z}^{(i)}$
follow the causal order of the underlying graph, only the entries $\rowintalt{k+1}_r$ for $r\leq k+1$ are non-zero and similarly for $\rowobsalt{k+1}$. Therefore, the
right-hand side is a quadratic form of $(h_1(z),\ldots, h_{k+1}(z))$. 
Now we apply our induction hypothesis which states that 
$(h_1(z), \ldots, h_k(z))$ is a linear function of $z$.
Then we find expanding the quadratic form on the right-hand side that there is a matrix $Q$, a vector $v$, a constant $\delta$ (which we here assume to be non-zero)
and another quadratic form $Q'$ such that
\begin{align}
\begin{split}
         \frac{1}{2} z^\top Q' z
    &=
  \frac{1}{2} z^\top Q z + (v\cdot z)h_{k+1}(z)  + \delta h_{k+1}^2(z)
            +c^{(k+1)}\\
            &=
         \frac{1}{2} z^\top Q z  + \delta \left(h_{k+1}(z) + \frac{(v\cdot z)}
         {2\delta}\right)^2 -\frac{(v\cdot z)^2}
         {4\delta}
            +c^{(k+1)}.
            \end{split}
\end{align}
Now we can apply Corollary~\ref{co:quad} to conclude
that $h_{k+1}(z)$ is an affine function of
$z$. Adding the shifts and considering also the case $\delta=0$ adds several technical difficulties,
which we handle in the full proof.
\end{proof}

After all those preparations and presenting the proof sketch, we will finally prove our main theorem in full generality.

\begin{proof}[Full proof of Theorem~\ref{th:linear_ident}]
As the proof has to consider different cases and is a bit technical we structure it in a series of steps.

\paragraph{Labeling assumptions}
We can assume without loss of generality (by relabeling variables and interventions) that the variables $\alt{Z}_i$ are ordered such that their natural order $i=1,2,\ldots$ is a valid topological order of the underlying DAG
and that intervention $1\leq i\leq n$ targets node $Z_i$, i.e., $\tgtalt{i}=i$. We make no assumption on the 
ordering of the $Z_i$ or targets of $t_i$ which are not necessarily assumed to be pairwise different.
Recall that we defined (and showed existence of) $h=\alt{f}^{-1} f$ in Lemma~\ref{le:workhorse} (including the fact that their latent dimensions agree).

\paragraph{Induction claim}
We now prove by induction that $h_k(z)$ is linear for $k \ge 0$.
The base case $k = 0$ is trivially true. 
For the induction step, assume that this is the case 
for $j\leq k$, i.e., $h_j(z)=m_j \cdot z$ for some $m_j\in \RR^d$. 
We  define $M_k=(m_1,\ldots, m_k)^\top$ so that we can write concisely
\begin{align}\label{eq:Mk_rel}
    (h_1(z),\ldots, h_k(z))^\top= M_k z.
\end{align}
Note that since $h$ is surjective $M_k$ has full rank. 
To prove the induction step, we need to show that there exists $m_{k+1}\in \RR^d$ such that
$h_{k+1}(z)=m_{k+1}\cdot z$.

\paragraph{Precision matrix decomposition}
For the DAG $\alt{G}$ underlying $\alt{Z}_i$, let $\mathrm{PA}_i$ denote the indices of the set of parents of $Z_i$ and let $\overline{\mathrm{PA}}_i = \{i\} \cup \mathrm{PA}_i$.
Recall  the notation 
$\rowobsalt{i}=\alt{B}^\top e_i$ (using $\tgtalt{i}=i$) and $\rowintalt{i}=
(\alt{B}^{(i)})^\top e_i$. 
Now $\rowobsalt{i}_j\neq 0$ only holds if $j\in \overline{\mathrm{PA}}_i$
and we ordered the variables such that $\overline{\mathrm{PA}}_i\subset [i]$.
In particular,  $\rowobsalt{i}_j = 0$
and $\rowintalt{i}_j=0$
for $j>i$ (interventions do not create new parents).
By Lemma~\ref{le:precision} we have
\begin{align}
    (\alt{\Theta}^{(i)}-\alt{\Theta}^{(0)}) = 
    \rowintalt{i}\otimes \rowintalt{i}
    -\rowobsalt{i}\otimes \rowobsalt{i}
\end{align}
   We find that
   \begin{align}
      (\alt{\Theta}^{(k+1)}-\alt{\Theta}^{(0)})_{rs}
      =
       \rowintalt{k+1}_r\rowintalt{k+1}_s
    -\rowobsalt{k+1}_r\rowobsalt{k+1}_s =0
   \end{align}
   if $r>k+1$ or $s>k+1$. In particular, there exists a symmetric $A\in \RR^{(k+1)\times (k+1)}$ such that
   \begin{align}\label{eq:def_A}
       h(z)^\top (\alt{\Theta}^{(i)}-\alt{\Theta}^{(0)}) h(z)
       =\sum_{r,s=1}^{k+1} h_r(z) A_{rs} h_s(z).
   \end{align}
   Now we decompose $A$ as 
   \begin{align}\label{eq:decomp_A}
       A=\begin{pmatrix} A' & b\\ b^\top &\delta \end{pmatrix}
   \end{align}
   for some $A'\in \RR^{k\times k}$, $b\in \RR^k$ and $\delta \in \RR$.
   Using the induction hypothesis we find
   \begin{align}\label{eq:h_ind_basis}
   \begin{split}
        h(z)^\top (\tilde{\Theta}^{(k+1)}-\tilde{\Theta}^{(0)}) h(z)
       &=\sum_{r,s=1}^{k+1} h_r(z) A_{rs} h_s(z)
       \\
       &=
       M_k z  \cdot A' M_k z + 2 h_{k+1}(z) b^\top M_k z+ \delta h_{k+1}(z)^2.
    \end{split}
   \end{align}
    We use the notation $v_{:r}\in \RR^r$ for the vector
    $(v_1, \ldots, v_r)^\top$.  Then we can write
    \begin{align}\label{eq:h_ind_linear}
         \eta^{(k+1)} \rowintalt{k+1}\cdot h(z)
         =  \eta^{(k+1)} \rowintalt{k+1}_{:k} \cdot M_k z
         + \eta^{(k+1)} \rowintalt{k+1}_{k+1} h_{k+1}(z)
    \end{align}
Now we have to consider different cases.
\paragraph{Case $\delta\neq 0$}
   We assume first that $\delta \neq 0$. 
   In this case
   we find using the last two displays
   \begin{align}\label{eq:h_ind}
   \begin{split}
    & h(z)^\top (\tilde{\Theta}^{(k+1)}-\tilde{\Theta}^{(0)}) h(z) - 2\eta^{(k+1)} \rowintalt{k+1} h(z)+c^{(k+1)}
     \\
    & =
     z  \cdot M_k^\top A' M_k z + \delta \left(h_{k+1}(z)+\frac{b^\top M_k z-\eta^{(k+1)}\rowintalt{k+1}_{k+1}}{\delta}\right)^2
      -\frac{1}{\delta}  z\cdot M_k^\top b b^\top M_kz
     \\
     &\;+\frac{2}{\delta} \eta^{(k+1)}\rowintalt{k+1}_{k+1}b^\top M_k z-\frac{1}{\delta}(\eta^{(k+1)}\rowintalt{k+1}_{k+1})^2-2\eta^{(k+1)}\rowintalt{k+1}_{:k} \cdot M_k z+c^{(k+1)}.
      \end{split}
   \end{align}
   Next we set 
   \begin{align}
       g(z)=h_{k+1}(z)+\delta^{-1}{b\cdot M_k z}-\delta^{-1}\eta^{(k+1)}\rowintalt{k+1}_{k+1}.
   \end{align} 
    Looking carefully at \eqref{eq:h_ind}
   we find that there is a symmetric matrix $\alt{Q}$, a vector $\alt{w}$, and a constant $\alt{c}$ such that 
   \begin{align}
       h(z)^\top (\tilde{\Theta}^{(k+1)}-\tilde{\Theta}^{(0)}) h(z) - 2\eta^{(k+1)} \rowintalt{k+1} h(z)
       = \delta g(z)^2 + z\cdot \alt{Q}z+\alt{w}z+\alt{c}.
   \end{align}
   We claim that $g$ is continuous and surjective.
   Continuity follows directly from continuity of $h$. To show that $g$ is surjective we can ignore the constant shift, and we note that
   $\delta^{-1}{v^\top M_k z}= \delta^{-1}\sum_{r=1}^k v_r h_r(z)$ and thus surjectivity of $h$ implies that 
   $g$ is surjective (for every $c$ pick $z$ such that $h_{k+1}(z)=c$ and $h_r(z)=0$ for $r\leq k$).
  
   Using \eqref{eq:log_density} from Lemma~\ref{le:workhorse}
   we conclude that
   \begin{align}
       \delta g(z)^2 &= z^\top (\Theta^{(k+1)}-\Theta^{(0)}) z
    -2\eta^{(k+1)}\rowint{k+1}z   - z\cdot \alt{Q}z-\alt{w}z-\alt{c}\\
    &=z\cdot Qz +w\cdot z +\alpha
   \end{align}
   for all $z$ and some symmetric $Q\in \RR^{d\times d}$, $w\in \RR^d$.
   Corollary~\ref{co:quad} then implies that 
   \begin{align}
    h_{k+1}(z)+\delta^{-1}{vM_k z}+\delta^{-1}\eta^{(k+1)}\rowintalt{k+1}_{k+1} = g(z)= v\cdot z  +c  
   \end{align}
   for some $v\in \RR^d$, $c\in \RR$.
   Thus $h_{k+1}$ is an affine function, i.e., 
   $h_{k+1}(z)=m_{k+1}z+\alpha$.
   But 
   \begin{align}
     0=  \EE \alt{Z}_{k+1}=
       \EE Z\cdot  m_{k+1}+\alpha=\alpha
   \end{align}
   and therefore $h_{k+1}$ is linear.

   \paragraph{Case $\delta=0$, $b\neq 0$ }
    Now we consider the case that $0=\delta$.
Using \eqref{eq:h_ind_basis}, \eqref{eq:h_ind_linear}, 
and \eqref{eq:log_density} we find similarly 
to above
\begin{align}
\begin{split}
     M_k z  \cdot A M_k z + 2 h_{k+1}(z) b^\top M_k z &-2\eta^{(k+1)} \rowintalt{k+1}_{:k} \cdot M_k z
         -2 \eta^{(k+1)} \rowintalt{k+1}_{k+1} h_{k+1}(z)
         +\alt{c}^{(k+1)}
       \\
       &= 
         z^\top (\Theta^{(k+1)}-\Theta^{(0)}) z
    -2\eta^{(k+1)}\rowint{k+1}z 
\end{split}
\end{align}
Collecting all terms we find that $h_{k+1}$ satisfies a relation of the type
\begin{align}
    h_{k+1}(z)(w\cdot z+\beta) = z\cdot Qz+w'\cdot z+\alpha
\end{align}
for suitable $Q$, $\beta$, $w$, $w'$, and $\alpha$.
Note in particular that $w^\top=2 b^\top M_k\neq 0$ because $M_k$ has full rank.
If $w\neq 0$ then Corollary~\ref{co:lin} implies that
$h_k(z)$ is affine and, as argued above, we conclude that 
$h_k$ is linear.

\paragraph{Case $\delta=0$, $b=0$}
It remains to address the case $b=0$ and $\delta=0$. Going back to the definition of $b$
and $\delta$
(see \eqref{eq:def_A} and \eqref{eq:decomp_A}) 
we find 
\begin{align}
    (b^\top, \delta)^\top = \rowintalt{k+1}_{:(k+1)}
    \rowintalt{k+1}_{k+1}
    -\rowobsalt{k+1}_{:(k+1)}
    \rowobsalt{k+1}_{k+1}
\end{align}
Now $\delta =0 $ implies $\rowobsalt{k+1}_{k+1}
=\rowintalt{k+1}_{k+1}$ (because they are both positive).
But then we conclude 
$ \rowintalt{k+1}= \rowobsalt{k+1}$ from $b=0$
and $ \rowintalt{k+1}_r= \rowobsalt{k+1}_r=0$ for $r>k+1$.
This implies
\begin{align}
\alt{\Theta}^{(k+1)}-\alt{\Theta}^{(0)}=
 \rowintalt{k+1}\otimes  \rowintalt{k+1}-
  \rowobsalt{k+1}\otimes  \rowobsalt{k+1}=0
\end{align}
That is, this implies that intervention $k+1$ is a pure shift intervention, i.e.,
$B=B^{(k+1)}$ and $\eta^{(k+1)}\neq 0$ (otherwise 
  we do not actually intervene).
  In this case, we conclude from Lemma~\ref{le:workhorse}
  and Lemma~\ref{le:complete_square} (if $\Theta^{(k+1)}\neq \Theta$)
  \begin{align}
     \eta^{(k+1)} \rowintalt{k+1} h(z)
     + c^{(k+1)}
     = (z-\mu'^{(k+1)})^\top (\Theta^{(k+1)}-\Theta)(z-\mu'^{(k+1)})+c.
  \end{align}
  But then $ \eta^{(k+1)}\rowintalt{k+1} \nabla h(\mu'^{(k+1)})=0$, this implies that $\nabla h$ is not invertible which is a contradiction to $h$ being a diffeomorphism. Thus, we conclude that also
  $\Theta^{(k+1)}=\Theta$, i.e., intervention $k+1$
is also a pure shift intervention for the $Z$ variables. 
But then we find
\begin{align}
       \rowintalt{k+1}_{k+1} h_{k+1}(z)
      + \rowintalt{k+1}_{:k} M_k z
      =
      (\alt{\eta}^{(k+1)})^{-1}(\eta^{(k+1)}  \rowint{k+1} z+c^{(k+1)}).
\end{align}
So we finally conclude that also in this case $h_{k+1}$ is
a linear function, this ends the proof.
\end{proof}

\section{From Linear Identifiability to Full Identifiability}\label{app:full_identifiability}

In this section, we provide the proof of Theorem~\ref{th:main} which is a combination of our linear identifiability result \cref{th:linear_ident}
proved in the previous section and the main result of 
\cite{seigal2022linear} which shows identifiability for linear mixing $f$. However, we need to carefully address their normalization choices and in addition consider shifts.

We emphasize that the full identifiability results could also be derived from our proof of Theorem~\ref{th:linear_ident} by carefully considering ranks of quadratic forms and showing that we can inductively identify source nodes.
We do not add these arguments as our reasoning is already quite complex and because for linear mixing functions this is not substantially different from the original proof in \cite{seigal2022linear}.

For the convenience of the reader, we will now restate the main results
of \cite{seigal2022linear}. For our work it is convenient to slightly deviate from their conventions, i.e., we  define the causal order of a graph by $i\prec j$  iff $i\in \mathrm{an}(j)$, in particular $i\to j$ implies $i\prec j$.

We rephrase their results using our notation. Let $S(G)$ denote the set of permutations of the vertices of $G$ that preserve the causal ordering, i.e., all permutations $\rho$ such that
$\rho(i)<\rho(j)$ if $i\prec j$.
They consider linear mixing functions as specified next.
\begin{assumption}\label{as:5}
    Assume $f(z)=Lz$ for some matrix $L\in \RR^{\dimx \times d}$
    with trivial kernel and pseudoinverse $H=L^+$ such that
    the largest absolute value of each row of $H$ is one and the leftmost is positive.
\end{assumption}
Then the following result holds.
\begin{theorem}[Linear setting,
Theorems 1 and 2 in~\cite{seigal2022linear}]
\label{th:linear_setting}
    Suppose latent variables $Z^{(i)}$ are generated following Assumptions~\ref{as:2} and
    \ref{as:3} where $\eta^{(i)}=0$ (i.e., no shift) with DAG $G$ and
    all $A^{(i)}$ are lower triangular. Suppose in addition that Assumption~\ref{as:4} holds
    and the mixing function satisfies 
    Assumption~\ref{as:5}. Then we can identify intervention targets and the partial order $\prec_G$ of the underlying DAG $G$ up to permutation of the node labels.
    If we in addition assume that all interventions are perfect the problem is identifiable in the following sense.
    For every representation $(\alt{B}^{(i)}, \alt{H})$ 
    such that all $\alt{B}^{(i)}$ are lower triangular and
    $\alt{L}(\alt{Z}^{(i)})\indistribution X^{(i)}$ where $\alt{L}=\alt{H}^+$
    there is a permutation $\sigma\in S(G)$ such that
    \begin{align}
        \alt{H}=P_\sigma^\top H, \quad
        \alt{B}^{(i)} = P_\sigma^\top B^{(i)} P_\sigma.
    \end{align}
\end{theorem}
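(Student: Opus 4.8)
The plan is to work entirely with the second-order statistics of the observed Gaussians and to reduce the claim to a statement about an unknown linear change of coordinates on the latents. Since $X^{(i)} = L Z^{(i)}$ is a (degenerate) Gaussian supported on the common subspace $\mathrm{col}(L)$, from each observed distribution one reads off the restricted precision matrix $K^{(i)} := (L\Sigma^{(i)}L^\top)^+ = H^\top \Theta^{(i)} H$ with $H=L^+$, so the observable objects are $K^{(i)}$ and $\mathrm{col}(L)$. First I would invoke the linear case of Theorem~\ref{th:linear_ident} (equivalently, Lemma~\ref{le:workhorse} with $f$ linear): any competing representation $(\alt{H},\alt{B}^{(i)})$ producing the same data satisfies $\alt{L}=LT^{-1}$, $\alt{H}=TH$, $\alt{Z}^{(i)}=TZ^{(i)}$ for a single invertible $T\in\mathrm{GL}_d$, hence $\alt{\Theta}^{(i)} = T^{-\top}\Theta^{(i)}T^{-1}$. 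Thus it suffices to show $T=P_\sigma^\top\Lambda$ for an order-preserving $\sigma\in S(G)$ and a diagonal $\Lambda$, and then that the normalization of Assumption~\ref{as:5} forces $\Lambda=\id$ while triangularity pins $\sigma\in S(G)$.

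The engine is the rank-$\le 2$ identity of Lemma~\ref{le:precision}, $\Theta^{(i)}-\Theta^{(0)} = \rowint{i}\otimes\rowint{i}-\rowobs{i}\otimes\rowobs{i}$, whose column span $V_i:=\mathrm{span}(\rowint{i},\rowobs{i})$ lies inside $\mathrm{span}\{e_j : j\in\overline{\mathrm{PA}}_{\tgt{i}}\}$; since $H$ has full column rank, the observable subspaces $H^\top V_i = \mathrm{col}(K^{(i)}-K^{(0)})$ have the same dimensions and inclusions as the $V_i$. For perfect interventions $\rowint{i}=\lambda_i e_{\tgt{i}}$, so when $\tgt{i}$ is a source ($\overline{\mathrm{PA}}_{\tgt{i}}=\{\tgt{i}\}$) the difference is exactly rank $1$ with latent direction $e_{\tgt{i}}$, whereas a non-source target forces $\rowobs{i}=B^\top e_{\tgt{i}}$ to have a nonzero parent component, making $\rowint{i},\rowobs{i}$ independent and the rank exactly $2$. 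I would therefore identify the source nodes as precisely the rank-$1$ interventions, recover their one-dimensional observable directions, and then peel: for each remaining intervention the part of $H^\top V_i$ not contained in the span of the already-recovered ancestor directions is the new direction of $\tgt{i}$, and the ancestor directions it does contain reveal the parents. Iterating in topological order recovers every latent coordinate direction, i.e.\ the rows of $H$, up to an order-preserving permutation and a per-row scaling.

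To finish the perfect-intervention statement, the permutation produced by the peeling places sources first and is order-preserving, so it lies in $S(G)$; the remaining per-row scale is then removed by Assumption~\ref{as:5} (each row of $H$ has unit absolute value and positive leading entry), which forces $\Lambda=\id$ and hence $T=P_\sigma^\top$, giving $\alt{H}=P_\sigma^\top H$. Substituting into $\alt{\Theta}^{(i)} = P_\sigma^\top\Theta^{(i)}P_\sigma$ and using that $P_\sigma^\top B^{(i)}P_\sigma$ is again lower triangular with positive diagonal (because $\sigma$ preserves the causal order), uniqueness of the lower-triangular positive-diagonal Cholesky factor yields $\alt{B}^{(i)}=P_\sigma^\top B^{(i)}P_\sigma$. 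For the first, weaker claim — recovering only $\pi_G$ and allowing imperfect interventions — I would run the peeling at the level of the subspaces $H^\top V_i$ and the nested flag they generate, since ancestry is encoded by the inclusion pattern of these subspaces and is invariant under the injective map $H^\top$.

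The main obstacle is the robustness of the peeling against degeneracies, and it is sharper in the imperfect case. For perfect interventions the rank is pinned to $1$ at sources and $2$ elsewhere, so the source test and the induced flag are clean. But a general soft intervention that merely rescales the noise at a non-source makes $\rowint{i}$ and $\rowobs{i}$ collinear, collapsing the difference to rank $1$, so rank alone no longer detects sources. Handling this is the crux: one must recover the order purely from the inclusion lattice of $\{H^\top V_i\}$ — identifying the minimal common direction, then building the flag $H^\top\mathrm{span}\{e_1,\dots,e_k\}$ upward — and argue that these inclusions determine $\pi_G$ uniquely up to relabeling, independently of the intervention strengths. A secondary subtlety is excluding accidental coincidences across interventions that would make two genuinely distinct recovered directions collinear; this is exactly where the coverage hypothesis (Assumption~\ref{as:4}) and the non-degeneracy built into the intervention model are used.
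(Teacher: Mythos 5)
A preliminary remark: the paper does not prove Theorem~\ref{th:linear_setting} at all --- it is imported from \cite{seigal2022linear} (their Theorems 1 and 2), restated in the paper's notation and used as a black box in Appendix~\ref{app:full_identifiability}. There is therefore no in-paper proof to compare against, and I am judging your argument as a free-standing reconstruction of that result.

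Your reductions are sound: reading off $K^{(i)}=H^\top\Theta^{(i)}H$ from the observed covariances, passing to a single invertible $T$ with $\alt{H}=TH$, the rank-$\le 2$ structure from Lemma~\ref{le:precision}, and the rank-one test for source targets under perfect interventions. The genuine gap is the peeling step, which is where the whole content of the theorem lives. Fix a non-source target $\tgt{i}$, write $u=H^\top e_{\tgt{i}}$ and $v=H^\top\rowobs{i}$, and let $W$ be the span of the already recovered ancestor directions. Since $\rowobs{i}=D_{\tgt{i}\tgt{i}}^{-1/2}(e_{\tgt{i}}-A^\top e_{\tgt{i}})$ with $A^\top e_{\tgt{i}}$ supported on the parents, we have $v\equiv D_{\tgt{i}\tgt{i}}^{-1/2}\,u$ modulo $W$, hence \emph{every} vector of $H^\top V_i=\mathrm{span}(u,v)$ is congruent to a multiple of $u$ modulo $W$. ``The part of $H^\top V_i$ not contained in $W$'' is therefore only a coset: it pins down $u$ modulo $W$, i.e.\ you recover $H^\top(e_{\tgt{i}}+c\,A^\top e_{\tgt{i}})$ for an unknown $c$, not the row of $H$ that the theorem asserts is identifiable. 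Nor can this be repaired by exploiting the signed decomposition $K^{(i)}-K^{(0)}=\lambda_i^2\,uu^\top-vv^\top$: an indefinite rank-two symmetric matrix admits a one-parameter (hyperbolic) family of decompositions of the form $pp^\top-qq^\top$, and by the congruence above every member of that family satisfies ``$q$ is a multiple of $p$ modulo $W$'', so the constraint selects nothing. Breaking this ambiguity is precisely the work done in \cite{seigal2022linear}; it requires combining information across environments (their generalized RQ decomposition of $H$), not a one-intervention-at-a-time subspace argument. The same difficulty is sharper for the first claim (partial order under imperfect interventions), where you yourself flag the rank collapse at soft interventions as ``the crux'' and leave it unresolved. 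As written, both halves of the statement remain unproved.
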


\begin{remk}
A few remarks are in order.
\begin{enumerate}
    \item Compared to their statement in Theorem~2 we replaced $P_\sigma$ by $P_\sigma^\top$ because we used the transposed convention for the permutation matrices.
    \item Note that their result per se doesn't mention identifiability up to scaling but that's because part of their assumptions involve normalizing $f$. Since we don't normalize the linear map $f$, we add scaling in the theorem statement.
    \item For the case of imperfect interventions, their result talks about identifiability of the transitive closure $\overline{G}$ of the graph $G$, but this is the same as identifiability of the partial order $\prec_G$ that we state here.
    \item We assume that interventions are not only acting on the noise but do change the corresponding row of $A$. Note that 
    this is equivalent to their genericity assumption which states that
    \begin{align}
       \rowint{i}&=(B^{(i)})^\top e_{\tgt{i}} = (\id-A^{(i)})_{\tgt{i}\cdot}(D^{(i)})^{-\frac12}_{\tgt{i}\tgt{i}}
        \\
        \rowobs{i}&=(B^{(0)})^\top e_{\tgt{i}} = (\id-A^{(0)})_{\tgt{i}\cdot}(D^{(0)})^{-\frac12}_{\tgt{i}\tgt{i}}
    \end{align}
    are linearly independent. Indeed, this holds iff $A^{(0)}_{\tgt{i}\cdot} \neq A^{(i)}_{\tgt{i}\cdot}$
    (by acyclicity $A^{(i)}_{\tgt{i}\tgt{i}}=0$).
\end{enumerate}
\end{remk}

Let us provide a brief  intuition why this result holds. The key ingredient is the same as in our proof of linear identifiability above, namely, Lemma~\ref{le:precision} which reads
\begin{align}
        \Theta^{(i)}-\Theta^{(0)}
        = \rowint{i}\otimes \rowint{i} - \rowobs{i}\otimes \rowobs{i}.
\end{align}
This expression has rank 2 if $\tgt{i}$ is not a source node in the causal graph and the intervention is not
a pure noise intervention because $\rowobs{i}$ and $\rowint{i}$ are linearly independent as explained in the remark above. On the other hand, this expression has rank one for source nodes because 
$\rowobs{i}\propto e_{\tgt{i}}$ in this case and the same is true for $\rowint{i}$.
This allows us to identify interventions on source nodes.  Moreover, $\Theta^{(i)}-\Theta^{(k)}$ has rank 2 if intervention $i$ and $k$ target different source nodes and rank 1 if they target the same source node. Thus we can also identify the set of interventions targeting the same source node. One can complete the proof by an induction argument
based on removing source nodes from the graph and studying the effect on the precision matrix of the latent variables.

We are now ready to prove our main theorems.

\Main*

\begin{proof}[Proof of Theorem~\ref{th:main}]
Suppose there are two representations
$((B^{(i)}, \eta^{(i)}, \tgt{i})_{i\in I}, f)$
and 
$((\alt{B}^{(i)}, \alt{\eta}^{(i)}, \tgtalt{i})_{i\in I}, \alt{f})$ of the distributions $X^{(i)}, i \ge 0$.
As explained in the proof of Theorem~\ref{th:linear_ident} we can assume by relabeling the nodes that the matrices $\alt{B}^{(i)}$ are 
lower triangular and the corresponding DAG $\alt{G}$ has the property that for every edge $i\to j$
we have $i<j$.
We now apply Theorem~\ref{th:linear_ident} 
and find that there is an invertible linear map $T$ such that
$\alt{f}=f\circ T$ and $T^{-1}Z^{(i)}\indistribution \alt{Z}^{(i)}$.
One minor difference to the setting in \cite{seigal2022linear} is that we in addition consider shift interventions. They are simpler to analyze than a change of variance, but we can avoid additional  arguments with the following trick. If $\Sigma^{(i)}=\Sigma^{(0)}$ for some $i$ we replace 
$Z^{(i)}=N(\mu^{(i)},\Sigma^{(i)})$ by $Z'^{(i)}=N(0,\Sigma^{(i)}+\mu^{(i)}(\mu^{(i)})^\top)$, otherwise
we set $Z'^{(i)}=N(0,\Sigma^{(i)})$
and similarly for $\alt{Z}$. Then $T^{-1}(Z^{(i)})\indistribution \alt{Z}^{(i)}$
implies $T^{-1}(Z'^{(i)})\indistribution \alt{Z}'^{(i)}$. 
Using \eqref{eq:expression_mu} and \eqref{eq:precision_covariance} it can be shown that
this distribution corresponds to a change of variance of the noise, i.e., a change of $D^{(i)}_{\tgt{i}\tgt{i}}$. This can also be seen directly from the relation 
$Z'^{(i)}\indistribution(B^{(i)})^{-1}(\eps + \eta^{(i)} \eps' e_{\tgt(i)}) $, where $\eps'\sim N(0,1)$
and independent of $\eps$.
We can now apply 
Theorem~\ref{th:linear_setting} to the primed distributions.
We first find a unique invertible $\Lambda \in \diag(d)$
such that $\Lambda T$ has largest absolute value entry $1$
in every row (and in case of ties the leftmost entry is 1). We can also find a permutation $\rho\in S_d$ such that
$\bar{B}^{(i)}=P_\rho B^{(i)}
\Lambda^{-1} P_\rho^\top$ is lower triangular for all $i$ (note the distinction between $\bar{B}^{(i)}$ and $\alt{B}^{(i)}$).
Indeed, this is possible since the underlying DAG $G$  is acyclic by assumption, interventions do not add edges, and $\Lambda$ is diagonal.
Then we find 
\begin{align}
   ( T^{-1} \Lambda^{-1} P_\rho^\top)  (\bar{B}^{(i)})^{-1}  \eps
   &\indistribution  ( T^{-1} \Lambda^{-1} P_\rho^\top)  (\bar{B}^{(i)})^{-1} P_\rho \eps\\
   &=T^{-1} (B^{(i)})^{-1}\eps\\
   &\indistribution (\alt{B}^{(i)})^{-1} \eps.
\end{align}
Here the last step follows from the fact that
$T^{-1}Z^{(i)}\indistribution \alt{Z}^{(i)}$
which implies that the centered distributions
agree.

Now we can apply Theorem~\ref{th:linear_setting}
(with $B$ replaced by $\alt{B}$ and identity mixing) and find that for the alternative representation in terms of $\bar{B}$
\begin{align}
   P_\rho \Lambda  T  = P_\sigma^\top \id, 
     \quad 
   \bar{B}^{(i)}  =  P_\sigma^\top\alt{B}^{(i)} P_\sigma
\end{align}
for some $\rho \in S(\alt{G})$ because by assumption $\bar{B}^{(i)}$ and $\alt{B}$
are lower triangular. This implies that
\begin{align}
    T &= \Lambda^{-1} P_\rho^\top P_\sigma^\top,
    \\
    B^{(i)} &= P_\rho^\top \bar{B}^{(i)}P_\rho \Lambda 
    =P_\rho^\top P_\sigma^\top \alt{B}^{(i)} P_\sigma P_\rho\Lambda.
\end{align}
To summarize, there is a permutation $\omega=(\rho\circ \sigma)^{-1}$
(note that $P_\sigma P_\rho=
P_{\rho \circ \sigma}$)
and a diagonal matrix $\Lambda$
such that 
\begin{align}
   B^{(i)} =
   P_\omega \alt{B}^{(i)} P_\omega^\top \Lambda, \quad T=
   \Lambda^{-1} P_\omega, \quad
   Z^{(i)}=T\alt{Z}^{(i)}=\Lambda^{-1} P_\omega\alt{Z}^{(i)},
\end{align}
We also find the relation $\omega(\tgt{i})=\tgtalt{i}$
from here because $P_\omega e_i = e_{\omega^{-1}(i)}$.
Equating the means of $T^{-1}Z^{(i)}$ and $\alt{Z}^{(i)}$
we find 
\begin{align}
 \alt{\eta}^{(i)}   (\alt{B}^{(i)})^{-1}e_{\tgtalt{i}}=
\eta^{(i)}T^{-1}(B^{(i)})^{-1}e_{\tgt{i}}.
\end{align}
Multiplying by $\alt{B}^{(i)}$ we find
\begin{align}
   \alt{\eta}^{(i)} e_{\tgtalt{i}}=
   \eta^{(i)}\alt{B}^{(i)}T^{-1}(B^{(i)})^{-1}e_{\tgt{i}}
   =\eta^{(i)}P_\omega^\top e_{\tgt{i}}
\end{align}
and we conclude that $\alt{\eta}^{(i)}={\eta}^{(i)}$.

\end{proof}

\Partial*

\begin{proof}[Proof of Theorem~\ref{th:partial}]
    Suppose there are two representations
$((B^{(i)}, \eta^{(i)}, \tgt{i})_{i\in I}, f)$
and 
$((\alt{B}^{(i)}, \alt{\eta}^{(i)}, \tgtalt{i})_{i\in I}, \alt{f})$ of the observational distribution.
    As shown in Theorem~\ref{th:linear_ident} 
    there is a linear map such that
    $T^{-1}Z^{(i)}=\alt{Z}^{(i)}$.
    Then the same is true for the transformed variables as explained above. 
     Viewing $\alt{Z}^{(i)}$ as new observations obtained through a linear $T$ mixing we conclude 
     from Theorem~\ref{th:linear_setting}
     that $\prec_G$ and the intervention targets are  identifiable up to relabeling.
\end{proof}

\section{A comparison to related works}
\label{sec:comparisons}

In this section, we will discuss the relation of our results to the most relevant prior works and put them into context.

\paragraph{\cite{seigal2022linear}}
This work considers linear mixing functions $f$, and we directly generalize their work to non-linear $f$. In particular, their techniques are linear-algebraic and do not generalize to non-linear $f$ (see more technical details in proof intuition in \cref{sec:main_results});
we handle this using a mix of statistical and geometric techniques. Notably, their finding that the linearly mixed latent variables are recoverable after observing one intervention per latent node is closely related to an earlier result in domain generalization, which showed that a number of environments equal to the number of ``non-causal'' latent dimensions can ensure recovery of the remaining latents \citep{rosenfeld2021risks}, and later work showing that this requirement can be reduced further under linear mixing with additional modeling structure \citep{chen2022iterative, rosenfeld2022domain}. This connection is not coincidental---that analysis was for invariant prediction with latent variables, which was originally presented as a method of causal inference using distributions resulting from unique interventions \citep{peters2015causal}. Later this approach was applied for the purpose of robust prediction in nonlinear settings \citep{heinze2018invariant}, but the identifiability of the latent causal structure under general nonlinear mixing via interventional distributions remained an open question until this work.

\paragraph{\cite{varici2023score}} This work also considers linear mixing $f$, but allow for non-linearity in the latent variables. They use score functions to learn the model and they raise as an open question the setting of non-linear mixing, which we study in this work.
While the models are not directly comparable, our model is more akin to real-world settings since it's not likely that high-level latent variables are linearly related to the observational distribution, e.g. pixels in an image are not linearly related to high-level concepts.
Moreover, Gaussian priors and deep neural networks are extensively used in practice, and our theory as well as experimental methodology apply to them. 

\paragraph{\cite{liu2022identifying}} 
While they do not discuss interventions, this work considers the setting of multi-environment CRL with Gaussian priors and their results can be applied to interventional learning. When applied to interventional data as in our setting (as also shown in \cite[Appendix D]{seigal2022linear}), they require additional restrictive assumptions such as $d = \dimx$, a bijective mixing $f$ and $e \le d$ where $e$ is the number of edges in the underlying causal graph whereas we make no such restrictions on $d, f$ and $e$ (therefore, the maximum value of $e$ can be as large as $\approx d^2 /2$ in our setting). And when these assumptions are satisfied, they require $2d$ interventions whereas we show that $d$ interventions are sufficient as well as necessary.

\paragraph{\cite{ahuja2022interventional}}
This work
also considers the setup of interventional causal representation learning. However, their setting is different in various ways and we now outline the key differences.
\begin{enumerate}
\item Their main results  assume  that the mixing $f$ is an \textit{injective polynomial} of finite degree $p$ (see Assumption 2 in their paper). 
    The injectivity requires their coefficient matrix to be full rank, which in turn implies $\dimx \ge \sum_{r = 0}^p \binom{r + d - 1}{d - 1} = \binom{p + d}{d}$ (see the discussion below Assumption 2 in their work). This means that we cannot set the degree of the decoder polynomial to be arbitrarily large for fixed output dimension $\dimx$ (which is required for universal approximation). In contrast, we only require $\dimx \ge d$.
    For a general discussion of the relation between approximability 
    and identifiability we refer to \cref{sec:approximation}.
    \item All their results for general nonlinear mixing functions rely on deterministic do-interventions which is a type of hard intervention that assigns a constant value to the target. However, we focus on randomized soft interventions (and also allow shift interventions which have found applications \cite{rothenhausler2021anchor, norman2019exploring}), which as also pointed out by \cite{seigal2022linear, varici2021scalable} is less restricted.
    Moreover, they require multiple interventions for each latent variable (as they use an $\eps$-net argument)
    to show approximate identifiability, while the structural assumptions on the causal model allow us to show full  identifiability with just one intervention for each latent variable. Thus, their setting is not directly comparable to ours and neither is strictly more general than the other. In particular, their proof techniques and experimental methodology don't translate to our setting, and we use completely different ideas for our proofs and experiments.
\end{enumerate}

\section{Counterexamples and Discussions}
\label{sec:counterexamples}
In this section, we first present several counterexamples to relaxed versions of our main results in Section~\ref{sec:discussions} and then provide the missing proofs in Section~\ref{app:counter}.
\subsection{Main counterexamples}
\label{sec:discussions}
In this section, we will discuss the optimality and limitations of various assumptions of our main results.
In particular, we consider the number of interventions, the distribution of the latent variables, and the types of interventions separately. 

\paragraph{Number of interventions}
Our main results all require $d$ interventions, i.e., one intervention per node.
This is necessary even in the simpler setting of linear $f$ as was shown in \cite[Proposition 4]{seigal2022linear}, directly implying it is also necessary for the more general class of non-linear $f$ that we handle here.
Therefore, the number of interventions in our main theorems \ref{th:main}, \ref{th:partial} is both necessary and sufficient. 
Going a step further, it is natural to ask whether Theorem~\ref{th:linear_ident} remains true for less
than $d$ interventions. However, as we show next, $d-2$ interventions are not sufficient. 

\begin{fact}
\label{fact:linear_ident_optimality}
Suppose we are given  distributions $X^{(i)}$
generated by $((B^{(i)}, \eta^{(i)}, \tgt{i})_{i\in\bar{I}}, f)$ satisfying Assumption~\ref{as:1}-\ref{as:3}.
If the number of interventions satisfies $|I|\leq d-2$
it is not possible to identify $f$ up to linear maps.
Consider, e.g., $d=2$ and $Z=N(0,\id)$ and no interventions.
Then $h(Z)\indistribution Z$ for (nonlinear) radius dependent rotations as defined in \cite{hyvarinen1999nonlinear,buchholz2022function}.
\end{fact}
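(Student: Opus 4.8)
The plan is to exhibit a single valid instance of the model with exactly $d-2$ interventions together with a genuinely nonlinear diffeomorphism $h\colon\RR^d\to\RR^d$ that preserves the law of every $Z^{(i)}$; replacing $f$ by $f\circ h^{-1}$ then yields a second admissible model producing identical observations but not related to the first by any linear map. Since fewer interventions only make the problem easier, one counterexample with $d-2$ interventions establishes the claim for all $|I|\le d-2$.

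First I would record the reduction. Suppose $h$ is a diffeomorphism of $\RR^d$ with $h(Z^{(i)})\indistribution Z^{(i)}$ for every $i\in\bar{I}$. Put $\alt{Z}^{(i)}=h(Z^{(i)})$ and $\alt f=f\circ h^{-1}$, keeping $(B^{(i)},\eta^{(i)},\tgt{i})$ unchanged. Then $\alt{Z}^{(i)}\indistribution Z^{(i)}$, so the alternative model still obeys Assumptions~\ref{as:1}--\ref{as:3}, while $\alt f(\alt{Z}^{(i)})=f(h^{-1}(h(Z^{(i)})))=f(Z^{(i)})=X^{(i)}$, so it generates the same observed distributions. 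Were $f$ identifiable up to linear maps as in Theorem~\ref{th:linear_ident}, there would be an invertible linear $T$ with $f\circ h^{-1}=\alt f=f\circ T^{-1}$; injectivity of $f$ then gives $h=T$, forcing $h$ to be linear. Hence it suffices to construct a nonlinear such $h$.

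For the construction I would take the empty DAG with $B=\id$, so $Z\sim N(0,\id)$, and use $d-2$ perfect variance interventions targeting the distinct nodes $3,\dots,d$ (the mixing $f$ being any injective differentiable embedding into $\RR^{\dimx}$). Because the graph has no edges and no intervention touches node $1$ or $2$, in every $Z^{(i)}$ the pair $(Z_1,Z_2)$ is a standard bivariate Gaussian and is independent of $(Z_3,\dots,Z_d)$. I then let $h$ act as a radius-dependent rotation on the first two coordinates and as the identity elsewhere:
\begin{align*}
    h(z)=\bigl(R_{\phi(r)}(z_1,z_2),\,z_3,\dots,z_d\bigr),\qquad r=\sqrt{z_1^2+z_2^2},
\end{align*}
where $R_\theta$ is the planar rotation by angle $\theta$ and $\phi$ is a smooth, nonconstant function of the radius, as in \cite{hyvarinen1999nonlinear,buchholz2022function}. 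Such an $h$ is a diffeomorphism, is nonlinear whenever $\phi$ is nonconstant, and in polar coordinates acts by $(r,\psi)\mapsto(r,\psi+\phi(r))$, hence preserves both the radius and Lebesgue measure.

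Finally I would verify the invariance. Since the density of a standard bivariate Gaussian depends only on $r$ and $h$ preserves both $r$ and Lebesgue measure, the pushforward of $(Z_1,Z_2)$ under the planar part of $h$ is again a standard bivariate Gaussian; as the coordinates $3,\dots,d$ are untouched and independent of $(Z_1,Z_2)$ in each $Z^{(i)}$, the product structure yields $h(Z^{(i)})\indistribution Z^{(i)}$ for all $i\in\bar{I}$, completing the counterexample. I expect the only delicate point to be checking that the radius-dependent rotation is genuinely smooth at the origin and hence a bona fide diffeomorphism---precisely the content of the cited constructions---while the measure-preservation and the reduction step are routine.
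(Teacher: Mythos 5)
Your proof is correct and follows essentially the same route as the paper: the paper's justification for this fact is precisely the radius-dependent-rotation example for $d=2$ with no interventions, and your construction is the natural extension of that example to general $d$, rotating the two un-intervened Gaussian coordinates and leaving the rest fixed. The explicit reduction step (replacing $f$ by $f\circ h^{-1}$ and using injectivity of $f$ to rule out any linear $T$) spells out what the paper leaves implicit, but it is the same argument.
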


Let us next consider the case with a total of
$d$ environments which corresponds to $d-1$ interventional distributions plus one observational distribution. We can provide a non-identifiability result for a general class of heterogeneous latent variable models that satisfy an algebraic property.
\begin{restatable}{lemma}{PrecDInterventions}\label{le:d-1_interventions}
Assume that $d=2$ and $Z^{(0)}\sim N(0, \Sigma^{(0)})$, $Z^{(1)}\sim N(0,\Sigma^{(1)})$
and $\Sigma^{(1)} \succ \Sigma^{(0)}$ or
$\Sigma^{(1)} \prec \Sigma^{(0)}$ in L\"{o}ewner order (i.e., the difference is positive definite).
Then there is a non-linear map $h$ such that
$h(Z^{(i)})\indistribution Z^{(i)}$ for $i=0,1$.
\end{restatable}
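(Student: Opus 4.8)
The plan is to construct $h$ as the time-$t$ flow of a vector field on $\RR^2$ that simultaneously preserves the measure $\mu_0 = N(0,\Sigma^{(0)})$ and the likelihood ratio of the two Gaussians. First I would record the reduction: for a diffeomorphism $h$, the requirement $h(Z^{(i)})\indistribution Z^{(i)}$ for $i=0,1$ is equivalent, via the density change-of-variables identity underlying Lemma~\ref{le:workhorse}, to $p^{(i)}(z)=p^{(i)}(h(z))\lvert\det J_h(z)\rvert$ for both $i$. Setting $\phi(z):=\ln p^{(0)}(z)-\ln p^{(1)}(z)=-\tfrac12 z^\top(\Theta^{(0)}-\Theta^{(1)})z+\mathrm{const}$ and dividing the two density identities, this is in turn equivalent to the two conditions (a) $h_\ast\mu_0=\mu_0$ and (b) $\phi\circ h=\phi$; condition (b) together with $p^{(1)}=p^{(0)}e^{-\phi}$ then automatically upgrades (a) to $h_\ast\mu_1=\mu_1$. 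Thus it suffices to produce a nonlinear diffeomorphism preserving both $\mu_0$ and the function $\phi$.

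The crucial place where the hypothesis enters is that $\Sigma^{(1)}\succ\Sigma^{(0)}$ (resp.\ $\prec$) is equivalent to $Q:=\Theta^{(0)}-\Theta^{(1)}$ being positive (resp.\ negative) definite, so $\phi$ is, up to an additive constant, a \emph{definite} quadratic form; hence its level sets $\{\phi=c\}$ form a family of concentric, and in particular compact, ellipses foliating $\RR^2\setminus\{0\}$. I would then define $X(z):=p^{(0)}(z)^{-1}\,J\nabla\phi(z)$, where $J=\left(\begin{smallmatrix}0&-1\\1&0\end{smallmatrix}\right)$, and verify the two defining properties: $X\cdot\nabla\phi=p^{(0)\,-1}(J\nabla\phi)\cdot\nabla\phi=0$, so $X$ is tangent to the level ellipses of $\phi$ and its flow preserves $\phi$; and $\operatorname{div}(p^{(0)}X)=\operatorname{div}(J\nabla\phi)=-\partial_1\partial_2\phi+\partial_2\partial_1\phi=0$, so the flow preserves the measure $p^{(0)}\,dz=d\mu_0$. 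Taking $h:=\Phi^t_X$ (the time-$t$ flow of $X$) then yields a map satisfying (a) and (b), hence $h(Z^{(i)})\indistribution Z^{(i)}$ for $i=0,1$.

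It remains to argue that $h$ is a genuine, nonlinear diffeomorphism, and this is the step I expect to be the main obstacle. Completeness of the flow (so that $h$ is a global diffeomorphism) follows because each trajectory of $X$ stays on a single compact level ellipse of $\phi$ and therefore cannot escape to infinity in finite time — this is precisely where the definiteness of $Q$, i.e.\ the L\"oewner hypothesis, is used, and it is what the construction would fail to guarantee for an indefinite difference. For nonlinearity, I would observe that the linear maps preserving both $\mu_0$ and $\phi$ are exactly the intersection of $\{L:L\Sigma^{(0)}L^\top=\Sigma^{(0)}\}$ with $\{L:L^\top Q L=Q\}$, which is a \emph{finite} group whenever $\Sigma^{(1)}$ is not a scalar multiple of $\Sigma^{(0)}$; since $t\mapsto\Phi^t_X$ is a continuous family through the identity and $X\not\equiv0$ (as $\nabla\phi=-Qz\neq0$ for $z\neq0$), for small $t\neq0$ the map $\Phi^t_X$ is not the identity and so cannot lie in this finite set, hence is nonlinear. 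In the degenerate case $\Sigma^{(1)}=c\,\Sigma^{(0)}$ the same $X$ reduces to a radius-dependent rotation in whitened coordinates, which is nonlinear by the classical argument of \cite{hyvarinen1999nonlinear,buchholz2022function}.
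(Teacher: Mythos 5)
Your proposal is correct and follows essentially the same route as the paper: both construct $h$ as the time-$t$ flow of a vector field proportional to $p_0^{-1}$ times a $90^\circ$ rotation of $\nabla(\ln p_0 - \ln p_1)$, so that the flow is tangent to the level sets of the log-likelihood ratio and preserves both Gaussian measures via $\Div(p_i X)=0$, with the L\"oewner hypothesis ensuring those level sets are compact ellipses. The only substantive difference is how nonlinearity (and smoothness at the origin) is certified: the paper first normalizes the level sets to circles and inserts a radial cutoff $f(|z|)$ supported away from $0$, making $h$ the identity near the origin but not globally, whereas you keep the globally smooth field and argue via finiteness of the group of linear maps preserving both Gaussians plus a continuity-of-the-flow argument; both are valid.
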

We provide a proof of this lemma in Appendix~\ref{app:counter}.
Note that this non-identifiability lemma requires the assumption on $\Sigma^{(i)}$. In Lemma~\ref{le:precision} in  Appendix~\ref{app:linear_ident} we have seen that for the interventions considered in this work the relation
$\Sigma^{(i)}> \Sigma^{(0)}$ or $\Sigma^{(i)}<\Sigma^{(0)}$
generally does not hold. This is some weak indication that
for Theorem~\ref{th:linear_ident}, $d-1$ interventions might be sufficient. We leave this for future work.

As an additional note, in the special case when $f$ is a permutation matrix, we are in the setting of traditional causal discovery and here, $d - 1$ interventions are necessary and sufficient \cite{castelletti2022network, eberhardt2012number, seigal2022linear}. 

\paragraph{Type of intervention}
Even when we restrict ourselves to only linear mixing functions as considered as in \cite{seigal2022linear}, we need perfect interventions in Theorem~\ref{th:main}. 
Otherwise, only the weaker identifiability guarantees in Theorem~\ref{th:partial} can be obtained. 
This is shown in \cite{seigal2022linear}for linear $f$. Therefore, in the more general setting of nonlinear $f$, we cannot hope to obtain \cref{th:main} with imperfect interventions, showing that this assumption is needed.

Next, we clarify that for the identifiability result in Theorem~\ref{th:partial} , the condition that the interventions are not pure noise interventions is also necessary.
\begin{fact}
\label{fact:shift_interventions}
Recall that we defined $B^{(i)}=(D^{(i)})^{-\frac12} (\id - A^{(i)})$.
    Suppose $X^{(i)}$ is generated by $((B^{(i)}, \eta^{(i)}, \tgt{i})_{i\in\bar{I}}, f)$ satisfying Assumptions~\ref{as:1}-\ref{as:4} where all interventions are pure noise interventions.
    By definition, this implies $A^{(i)}=A$ for all $i$.
    Consider any matrix $\alt{A}$  such that the corresponding DAG is acyclic. Then 
    $((\alt{B}^{(i)}, \eta^{(i)}, \tgt{i})_{i\in\bar{I}}, \alt{f})$ with 
    $\alt{B}^{(i)}= (D^{(i)})^{-\frac12} (\id - \alt{A})$ for all $i$ and
    \begin{align}
    \begin{split}
        \alt{f}&= f\circ (B^{-1}\alt{B})=f\circ(\id - A)^{-1} (D^{(0)})^{-\frac12}(D^{(0)})^{\frac12}(\id - \alt{A})= f\circ (\id - {A})^{-1}(\id - \alt{A})
       \end{split}
    \end{align}
    generates the same distributions $X^{(i)}$. 
    Indeed,
    \begin{align}
    \begin{split}
        \alt{f}(\alt{Z}^{(i)})
       & = f\circ (B^{-1}\alt{B})(\alt{B}^{(i)})^{-1} (\eps + \eta^{(i)}e_{\tgt{i}})
        \\
        &= f((\id - A)^{-1} (\id - \alt{A})(\id - \alt{A})^{-1}(D^{(i)})^{\frac12}(\eps + \eta^{(i)}e_{\tgt{i}}))
        \\
        &= f((\id - A)^{-1}(D^{(i)})^{\frac12}(\eps + \eta^{(i)}e_{\tgt{i}}))
        \\
        &= f((B^{(i)})^{-1}(\eps + \eta^{(i)}e_{\tgt{i}}))
        \\&=
        f(Z^{(i)}).
        \end{split}
    \end{align}
    This implies that any underlying causal graph could generate the observations.
    In other words, for pure noise interventions we  can identify the scale of the noise variables, but we obtain no intervention about the causal variables and structure (exactly because the interventions do not target the actual causal variables but the noise variables).
\end{fact}

Finally, we remark that in contrast to the case with linear mixing functions, we need to exclude non-stochastic hard interventions for linear identifiability.

\begin{restatable}{lemma}{DOInterventions}\label{le:do_interventions}
    Consider $d=2$ and  $Z^{(0)}=N(0, \id)$ and the 
    non-stochastic hard interventions $\mathrm{do}(Z_1=0)$
    and $\mathrm{do}(Z_2=0)$ resulting in the degenerate 
    Gaussian distributions $Z^{(1)}=N(0, e_2\otimes e_2)$
    and $Z^{(2)}=N(0, e_1\otimes e_1)$.
    Then there is a nonlinear $h$ such that
    $h(Z^{(i)})\indistribution Z^{(i)}$ for all $i$.
\end{restatable}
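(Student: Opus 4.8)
The plan is to exhibit a diffeomorphism $h$ of $\RR^2$ that preserves $N(0,\id)$ and fixes both coordinate axes pointwise; this immediately gives $h(Z^{(i)})\indistribution Z^{(i)}$ for all three $i$, since $Z^{(1)}$ and $Z^{(2)}$ are supported on the two axes (on which $h$ acts as the identity), while $Z^{(0)}=N(0,\id)$ is preserved by construction. The first thing to note is that the naive candidate from \cref{fact:linear_ident_optimality}, a radius-dependent rotation, cannot work here: such a rotation sends the ray at angle $\theta$ to angle $\theta+\phi(r)$, and preserving both the $z_1$- and the $z_2$-axis forces $\phi(r)\in\{0,\pi\}$ for every $r$, hence $\phi$ constant and $h$ linear. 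So I need a genuinely different, non-norm-preserving construction.

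My approach is to build $h$ as the time-$1$ flow of a vector field $V$ that (i) preserves the Gaussian measure and (ii) vanishes on both axes. For (i) I use the fact that a flow preserves $\rho\,dz$, with $\rho(z)\propto e^{-\norm{z}^2/2}$, precisely when $\operatorname{div}(\rho V)=0$. I therefore set $V=\rho^{-1}\nabla^{\perp}H$ for a stream function $H$, where $\nabla^{\perp}H=(-\partial_{z_2}H,\partial_{z_1}H)$; then $\rho V=\nabla^{\perp}H$ is automatically divergence-free, so the flow preserves $N(0,\id)$ for any choice of $H$. For (ii) I need $\nabla H=0$ on the axes, which forces $V=0$ there.

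The concrete choice that reconciles everything is
\[
 H(z)=\chi(\norm{z})\,z_1^2 z_2^2,
\]
where $\chi$ is a smooth bump supported in an annulus $r_0\le\norm{z}\le r_1$ with $0<r_0<r_1<\infty$. Here the factor $z_1^2 z_2^2$ guarantees $\nabla H=0$ on both axes (each partial derivative retains a factor vanishing on each axis), so $V=0$ on the axes; the compact support makes $V$ a smooth, compactly supported field (the $e^{\norm{z}^2/2}$ factor is bounded on the annulus), so its flow is complete and the time-$1$ map $h$ is a global diffeomorphism equal to the identity outside the annulus. Since $H\not\equiv 0$, the field $V$ is nonzero somewhere, so $h\neq\id$; being the identity on a nonempty open set, $h$ cannot be linear, which gives the required nonlinearity. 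Checking that $h$ fixes the axes pointwise then yields $h(Z^{(1)})\indistribution Z^{(1)}$ and $h(Z^{(2)})\indistribution Z^{(2)}$, while $h(Z^{(0)})\indistribution Z^{(0)}$ follows from the divergence-free (Liouville) computation.

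I expect the main obstacle to be the conceptual one just flagged: recognizing that the rotation family is inadequate and that one instead needs a Gaussian-volume-preserving flow that degenerates to the right order on the axes. Once the stream-function ansatz is in place, the only technical points left to verify are completeness of the flow (handled by compact support) and smoothness of $H$ at the origin (automatic, since $\chi$ vanishes near $0$), both of which are routine.
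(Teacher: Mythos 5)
Your proposal is correct and follows essentially the same route as the paper: both construct $h$ as the time-one flow of a vector field $X$ with $\Div(p_0 X)=0$ (so the standard Gaussian is preserved) and with $X$ vanishing on the supports of the degenerate interventional distributions (so the axes are fixed); the paper simply takes a divergence-free field compactly supported away from both axes and divides by $p_0$, whereas you realize the same idea via the stream function $\chi(\norm{z})z_1^2z_2^2$, which makes the field vanish exactly on the axes rather than in a neighborhood of them. Either variant works, and your remaining technical checks (completeness of the flow, nonlinearity from being the identity on an open set but not globally) match the level of detail the paper itself supplies.
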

Intuitively, to show this, we choose $h$ to be identity close to the supports $\{Z_1=0\}$ and $\{Z_2=0\}$ but some nonlinear 
measure preserving deformation in the four quadrants. The full argument can be found in Appendix~\ref{app:counter}. A similar counterexample was discussed in \cite{ahuja2022interventional}. However, here we in addition constrain the distribution of the latent variables, which adds a bit of complexity.
\paragraph{Distribution of the Noise Variables}
Our key additional assumption compared to the setting considered in \cite{seigal2022linear, ahuja2022interventional, varici2023score} is that we assume that the latent variables are Gaussian which allows us to put only very mild assumptions on the mixing function $f$. However, we do this in a manner so that we still preserve universal representation guarantees.

\cite{varici2023score} consider arbitrary noise distributions but restrict to linear mixing functions. We believe that our result can be generalized to more general distributions of the noise variables, e.g., exponential families, but those generalizations will  require different proof strategies, which we leave for future work.
However, the result will not be true for arbitrary noise distributions, as the following lemma shows.

\begin{restatable}{lemma}{UniformDist}\label{le:uniform_dist}
Consider $Z^{(0)}=(\eps_1,\eps_2)^\top$,
$Z^{(1)}=(\eps_1', \eps_2)$, 
and $Z^{(2)}=(\eps_1, \eps_2')$ where
$\eps_1,\eps_2 \sim \calU([-1,1])$ and $\eps_1',\eps_2'\sim \calU([-3, 3])$.
Also define
$\alt{Z}^{(0)}= (\eps_1, \eps_1+\eps_2)$,
$\alt{Z}^{(1)}= (\eps_1', \eps_1'+ \eps_2)$, 
and $\alt{Z}^{(2)}=(\eps_1, \eps_2')$.
Then there is $h$ such that
\begin{align}
    h(Z^{(i)}) \indistribution \alt{Z}^{(i)}.
\end{align}
\end{restatable}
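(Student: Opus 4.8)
The plan is to exhibit an explicit nonlinear $h:\RR^2\to\RR^2$ obtained by modifying a linear shear on a small region. First I would record the supports and densities. Writing $\phi(z_1,z_2)=(z_1,z_1+z_2)$ for the volume-preserving shear, note that $Z^{(0)}$ is uniform on the square $S=[-1,1]^2$ (density $1/4$), that $Z^{(1)}$ is uniform on the wide rectangle $W=[-3,3]\times[-1,1]$ (density $1/12$), and that $Z^{(2)}$ is uniform on the tall rectangle $R=[-1,1]\times[-3,3]$ (density $1/12$). Straight from the definitions one has $\alt{Z}^{(0)}=\phi(Z^{(0)})$ and $\alt{Z}^{(1)}=\phi(Z^{(1)})$, so the shear already matches interventions $0$ and $1$ exactly, with $\phi(S)=P_0$ and $\phi(W)=P_1$ the parallelograms $P_0=\{|y_1|\le1,\ |y_2-y_1|\le1\}$ and $P_1=\{|y_1|\le3,\ |y_2-y_1|\le1\}$.

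The obstruction is intervention $2$: since $\alt{Z}^{(2)}=(\eps_1,\eps_2')$ is uniform on $R$, whereas $\phi$ would send $Z^{(2)}$ to $(\eps_1,\eps_1+\eps_2')$, the map $h$ cannot be the shear on all of $R$. This essentially forces the construction. Because $\alt{Z}^{(0)}$ lives on $P_0$ while $Z^{(0)}$ lives on $S$, any admissible $h$ must satisfy $h(S)=P_0$ and be Lebesgue-measure-preserving there; one checks $P_0\subset P_1\cap R$, so the choice $h|_S=\phi$ is simultaneously consistent with all three interventions restricted to $S$. It then remains to define $h$ on the parts of the supports lying outside $S$. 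On the horizontal strips $H_\pm=\pm[1,3]\times[-1,1]$, which are seen only by $Z^{(1)}$, I keep $h=\phi$, and $\phi$ maps $H_+\cup H_-$ onto $P_1\setminus P_0$. The only remaining freedom is on the vertical strips $V_\pm=[-1,1]\times(\pm[1,3])$, seen only by $Z^{(2)}$, which must be sent onto $R\setminus P_0$.

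Accordingly I would set $h=\phi$ on $W$ and define $h$ on $V_+$ (resp. $V_-$) to be an area-preserving bijection onto the upper region $U=\{|y_1|\le1,\ y_1+1\le y_2\le3\}$ (resp. the lower region $L=\{|y_1|\le1,\ -3\le y_2\le y_1-1\}$), chosen to agree with $\phi$ on the shared boundary $\{z_2=\pm1,\ |z_1|\le1\}$ so that $h$ is continuous. A short area count gives $|U|=|L|=4=|V_\pm|$ and $R=P_0\cup U\cup L$ up to a null set, while $P_1,U,L$ are pairwise disjoint up to their boundaries; hence $h$ is a measure-preserving bijection of the cross $W\cup R$ onto $P_1\cup R$. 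Pushing each uniform law through $h$ region by region then yields $h(Z^{(0)})\indistribution\calU(P_0)\indistribution\alt{Z}^{(0)}$, $h(Z^{(1)})\indistribution\calU(P_1)\indistribution\alt{Z}^{(1)}$, and $h(Z^{(2)})\indistribution\calU(R)\indistribution\alt{Z}^{(2)}$. Note that $h$ is genuinely nonlinear: a linear area-preserving map sends the rectangle $V_+$ to a parallelogram, never to the trapezoid $U$, so the required bending must live precisely in the strips.

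The main obstacle is this last step: producing the area-preserving maps $V_\pm\to U,L$ with the prescribed slanted boundary values. This is less a calculation than an existence question, which I would settle either by an explicit formula (straightening the slanted edge while redistributing mass in the vertical direction) or, more cleanly, by invoking the standard existence of an area-preserving homeomorphism between planar domains of equal area with a prescribed boundary correspondence (a Moser / Dacorogna--Moser type result). Everything else — the support and density bookkeeping and the three pushforward verifications — is routine.
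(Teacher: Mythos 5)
Your proposal is correct and follows essentially the same route as the paper's proof: fix $h$ to be the shear $\phi$ on the horizontal strip $\{|z_2|\le 1\}$ (hence on all of $W$), observe that this already matches interventions $0$ and $1$, and then send the vertical strips $V_\pm$ onto the complementary trapezoids of the tall rectangle (your $U$ and $L$ are exactly the paper's $Q_2$ and $Q_1$) by an area-preserving map obtained from a Moser-type theorem. The paper merely makes the last step slightly more explicit, first building a smooth interpolation between the shear and the identity via a cutoff in $z_2$ and then correcting its Jacobian with Moser's theorem supported in $Q_1\cup Q_2$ — which is precisely one of the two finishing options you describe.
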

A proof of this lemma can be found in Appendix~\ref{app:counter}. 
The result shows that even with $d$ perfect interventions and linear SCMs it is not possible to identify the underlying DAG (empty graph for $Z$ and
$\alt{Z}_1\to \alt{Z}_2$) and we also cannot identify $f$ up to linear maps. While our example relies on distributions with bounded support, we conjecture that full support is not sufficient for identifiability.

\paragraph{Structural Assumptions}
A final assumption used in our results is the restriction to linear SCMs.  
This is used for most works on causal representation learning
(see Section~\ref{sec:related}), even when they restrict to linear mixing functions.
Although this may potentially be a nontrivial restriction, the advantage is that the simplicity of the latent space will enable us to meaningfully probe the latent variables, intervene and reason about them.
Nevertheless, it's an interesting direction to study to what generality this can be relaxed.
Indeed, even for the arguably simpler problem of causal structure learning (i.e., when $Z^{(i)}$ is observed)
there are many open questions when moving beyond additive noise models. We leave this for future work.

\subsection{Technical constructions}\label{app:counter}

In this section, we provide the proofs for the counterexamples in Section~\ref{sec:discussions}.
For the first two counterexamples we construct functions
$h$ such that $h(Z^{(i)})\indistribution Z^{(i)}$
by setting $h=\Phi_t$ where $\Phi_t$ is defined
as the flow of a suitable vectorfield $X$, i.e., 
\begin{align}
    \partial_t \Phi_t(z)=X(\Phi_t(z)).
\end{align}
This is a common technique 
in differential geometry and was introduced to construct counterexamples to identifiability ICA with volume preserving functions in \cite{buchholz2022function,buchholz2023some}.
To find a suitable vectorfield $X$ we rely on the fact that  the density $p_t$ of $\mathbb{P}_t=(\Phi_t)_\ast \mathbb{P}$ satisfies the continuity equation
\begin{align}
  \partial_t p_t(z) +  \Div (p_t(z)X(z))=0.
\end{align}
In particular $\mathbb{P}_t=\mathbb{P}$ holds iff
\begin{align}
    \Div (p_t(z)X(z))=0.
\end{align}
Therefore it is sufficient to find a vectorfield $X$ such that
$\Div (p^{(i)} X)=0$ for all environments $i$ to construct a suitable $h=\Phi_1$.

\PrecDInterventions*
\begin{proof}[Proof of Lemma~\ref{le:d-1_interventions}]
Let us first discuss the high-level idea of the proof.
The general strategy is to construct a vectorfield 
$X$ whose flow preserves $Z^{(i)}$ for $i=0,1$.
This holds if and only if the densities $p_0$, $p_1$ of
$Z^{(0)}$, $Z^{(1)}$ satisfy
\begin{align}\label{eq:div_eq}
    \Div (p_0 X) = \Div (p_1 X)=0
\end{align}
Using $\Div (f X) = \nabla f \cdot X + f\Div X$ we find
\begin{align}
    X\cdot \nabla(\ln p_0(z)-\ln p_1(z))
    = \frac{X \cdot \nabla p_0(z)}{p_0(z)}
    - \frac{X \cdot \nabla p_1(z)}{p_1(z)}
    - \frac{p_0(z)\Div X }{p_0(z)}
    + \frac{p_1(z)\Div X }{p_1(z)}=0.
\end{align}
We conclude that any $X$ satisfying \eqref{eq:div_eq} must be orthogonal to 
$\nabla(\ln p_0(z)-\ln p_1(z))$ and thus parallel to the level lines of $\ln p_0(z)-\ln p_1(z)$. 
This already fixes the direction of $X$ and the magnitude
can be inferred from \eqref{eq:div_eq}.
To simplify the calculations we can first linearly transform the data so that the directions of $X$, i.e., equivalently the level lines of $\ln p_0(z)-\ln p_1(z)$ have a simple form.
We assume that $\Sigma_0\prec \Sigma_1$.
 Clearly, it is sufficient to show the result for any linear transformation of the Gaussian distributions
 $Z^{(i)}$.
 Note that generally for $G\sim N(\mu,\Sigma)$
 we have $AG \sim N(A\mu, A\Sigma A^\top)$
 Applying $\Sigma_0^{-\frac12}$
 we can reduce the problem to
 $\id \prec \Sigma_1'=\Sigma_0^{-\frac12}\Sigma_1
 \Sigma_0^{-\frac12}$. Diagonalizing $\Sigma_1'$
 by $U\in \mathrm{SO}(d)$ it is sufficient to consider
 $\id\prec \Lambda = U\Sigma_1' U^\top$
 where $\Lambda=\diag(\lambda_1, \lambda_2)$.
 Finally, we can rescale $z_1$ and $z_2$ such that the resulting covariance matrices $\Lambda^0=\diag(\lambda^0_1, \lambda^0_2)$, $\Lambda^1=\diag(\lambda^1_1, \lambda^1_2)$ satisfy
 \begin{align}
     \frac{1}{\lambda^1_1}- \frac{1}{\lambda^0_1}
     = 
     \frac{1}{\lambda^1_2}- \frac{1}{\lambda^0_2} = 1.    
 \end{align}
 Thus, it is sufficient to show the claim for such covariances $\Lambda^0$, $\Lambda^1$ and the result for arbitrary covariances follows by applying suitable linear transformation.
 
 For such covariances we find for some constant $c$
 \begin{align}\label{eq:log_diff}
     \ln(p_0(z)) - \ln(p_1(z)) =
     -\frac{z_1^2}{2\lambda^0_1}
     -\frac{z_2^2}{2\lambda^0_2}
     + \frac{z_1^2}{2\lambda^1_1}
     +\frac{z_2^2}{2\lambda^1_2}+c
     =z_1^2+z_2^2+c.
 \end{align}
 The level lines are circles. Thus, we consider the vector field 
 \begin{align}
     X_0(z)= \begin{pmatrix}
     -z_2 \\ z_1
     \end{pmatrix}.
 \end{align}
 We see directly that $\Div X_0=0$.
 We consider the Ansatz $X(z)=f(|z|)X_0(z)/p_0(z)$.
 We now fix the radial function $f:\RR_+\to \RR$ such that it has compact support away from 0.
 We find using $\Div X_0=0$
 \begin{align}
     \Div (X(z)p_0(z)) &=
 \Div (X(z)p_0(z))\\
 &= X_0(z)\cdot  (\nabla |z|) f'(z)\\
 &= \begin{pmatrix}
     -z_2 \\ z_1
     \end{pmatrix} \cdot \frac{z}{|z|^2} f'(z)\\
     &=0.
 \end{align}
 Note, that close to 0 where $|z|$ is not differentiable the vector field vanishes by our construction of $f$.
 We find similarly using \eqref{eq:log_diff}
 \begin{align}
      \Div (X(z)p_1(z))
      &= X_0(z) \cdot \nabla \left(f(|z|)\frac{p_1(z)}{p_0(z)}\right)\\
      &= X_0(z) \cdot \nabla \left(f(|z|)e^{-|z|^2-c}\right)\\
      &=X_0(z) \cdot \nabla \tilde{f}(|z|)\\
      &=0.
 \end{align}
Finally we remark that the flow $\Phi_t$ of the vectorfield 
$X$ generates a family of functions $h$
as desired, i.e., for all $t$ we have
$(\Phi_t)_\ast Z^{(i)}=Z^{(i)}$ and $\Phi_t$ is clearly nonlinear as it is the identity close to 0 but not globally.
\end{proof}

The proof of the next lemma is similar but simpler.

\DOInterventions*
\begin{proof}[Proof of Lemma~\ref{le:do_interventions}]
    Pick any smooth divergence free vectorfield $X_0$ with compact support in $[\eps,\infty)^2$ for some $\eps>0$. 
    Then the vector field $X=X_0/p_0$ satisfies
    \begin{align}
        \Div X p_0=\Div X_0=0.
    \end{align}
    Thus the flow $\Phi_t$ preserves the distribution of 
    $Z^{(0)}$. But it also preserved the 
    interventional distributions $Z^{(i)}$ for $i=1,2$
    because $X(z)=0$ and thus $\Phi_t(z)=z$
    for $z$ close to the supports of $Z^{(i)}$.
\end{proof}
Finally we prove Lemma~\ref{le:uniform_dist} that shows that we cannot completely drop the assumption on the distribution of the noise variables $\eps$.
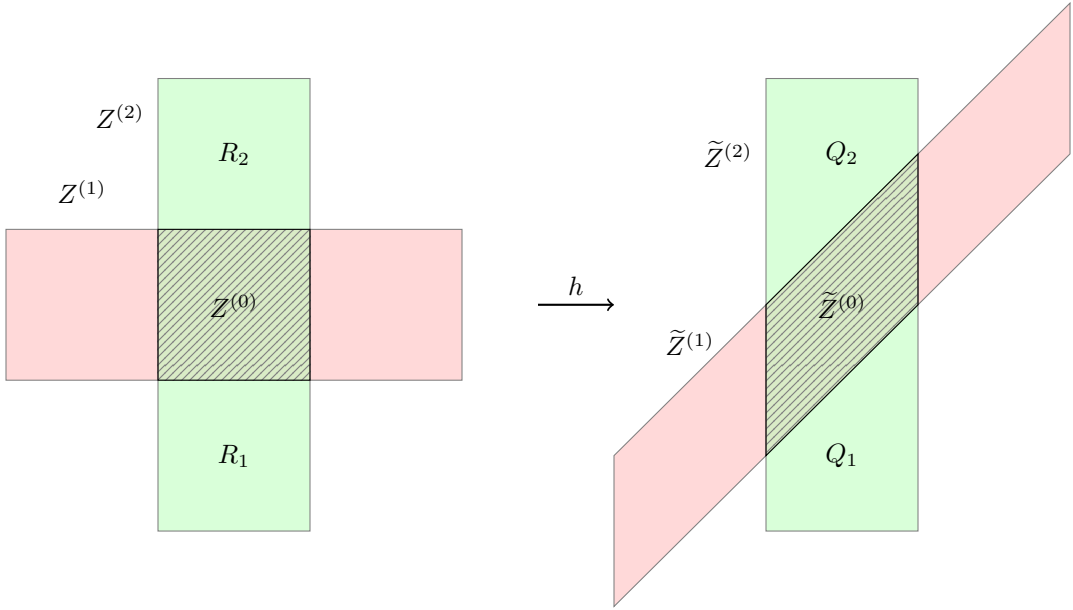
\begin{figure}
\centering
\begin{tikzpicture}

\draw[fill=red!30, opacity=0.5] (-3,-1) rectangle (3,1);

\draw[fill=green!30, opacity=0.5] (-1,-3) rectangle (1,3);

\draw[pattern=north east lines, pattern color=black!50] (-1,-1) rectangle (1,1);
\node at (-2,1.5) {$Z^{(1)}$};
\node at (0,0){$Z^{(0)}$};
\node at (-1.5,2.5) {$Z^{(2)}$};

\node at (0,2) {$R_2$};

\node at (0,-2) {$R_1$};

\draw[fill=red!30, opacity=0.5] (5,-4) -- (5,-2) -- (11,4) -- (11,2) -- cycle;
\draw[fill=green!30, opacity=0.5] (7,-3) -- (7,3) -- (9,3) -- (9,-3) -- cycle;

\draw[pattern=north east lines, pattern color=black!50] (7,-2) -- (7,0) -- (9,2) -- (9,0) --cycle;

\node at (8,0) {$\alt{Z}^{(0)}$};
\node at (6,-.5)  {$\alt{Z}^{(1)}$};
\node at (6.5,2) {$\alt{Z}^{(2)}$};

\node at (8,2) {$Q_2$};

\node at (8,-2) {$Q_1$};
\draw[->, thick] (4,0) -- (5,0) node[midway, above] {$h$};
\end{tikzpicture}
\caption{Sketch of the setting in Lemma~\ref{le:uniform_dist}. 
The map $h$ agrees with $(z_1,z_2)\to (z_1,z_1+z_2)$ on the red rectangle
and maps $R_i$ to $Q_i$ such that the uniform measure is preserved.}\label{fig:uniform}
\end{figure}

\UniformDist*
\begin{proof}[Proof of Lemma~\ref{le:uniform_dist}]
A sketch of the setting can be found in Figure~\ref{fig:uniform}.
    We define $h_0(z)=(z_1, z_1+z_2)$.
    If $h(z)=h_0(z)$ for $-1\leq z_2\leq 1$
    we find
    $h(Z^{(i)})\indistribution \alt{Z}^{(i)}$
    for $i=0$ and $i=1$. It remains to modify $h_0$
    such that $h$ preserves the uniform measure on
    $[-1, 1]\times [-3,3]$. 
    We do this in two steps, first we construct a modification $\alt{h}$ of the map $h_0$ such that $[-1,1]\times [-3, 3]$ is mapped 
    bijectively to itself but $\alt{h}=h_0$
    for $|z_2|\leq 1$ (so that $\alt{h}(Z^{(i)})\indistribution \alt{Z}^{(i)}$
    for $i=0$ and $i=1$ holds)
    and then we apply a general result to make the map measure preserving. We start with the first step.
    Let
    $\psi:\RR\to [0, 1]$ be differentiable such that
    $\psi(t)=1$ for $-1\leq z\leq 1$ and
    $\psi(t)=0$ for $-5/2\leq z\leq 5/2$ 
    with $|\psi'(t)|<1$. Then  
    \begin{align}
        \tilde{h}(z)
        = \psi(z_2) z + (1-\psi(z_2)) 
        h_0(z)= \begin{pmatrix}
            z_1
            \\
            z_2 + \psi(z_2)z_1
        \end{pmatrix}
    \end{align}
    is injective on $[-1, 1]\times [-3,3]$
    (note that the second coordinate is increasing in 
    $z_2$) and maps $[-1, 1]\times [-3,3]$
    bijectively to itself and agrees with $h_0$ for $|z_2|\leq 1$. However, it is not necessarily volume preserving. But applying Moser's theorem
    (see \cite{moser})
    to the image of the rectangles $R_1=[-1,1]\times[-3,-1] $
    and $R_2=[-1,1]\times [1, 3] $
    which are the 
    quadrilaterals $Q_1=\tilde{h}(R_1)$ with endpoints
    $(-1, -3)$, $(1, -3)$, $(1, 0)$, $(-1, -2)$
    and $Q_2=\tilde{h}(R_2)$ with endpoints $(1, 3)$, $(-1, 3)$, $(-1, 0)$, $(1, 2)$
    with the same size as $R_1$ and $R_2$
    we infer that there is  $\varphi$ supported in 
    $Q_1\cup Q_2$ such that $h=\varphi\circ\tilde{h}$
    satisfies $h(Z^{(2)})\indistribution \alt{Z}^{(2)}
    \indistribution \mathcal{U}([-1,1]\times [-3, 3])$.
\end{proof}

\section{Identifiability and Approximation}
\label{sec:approximation}
In this section, we explain that approximation properties of function classes cannot be leveraged to obtain stronger identifiability results. 
The general setup is that we have two function classes
$\mc{G}\subset\mc{F}$ and we assume that
$\mc{G}$ is dense in $\mc{F}$ (in the topology of uniform convergence on $\Omega$), i.e., for
$f\in \mc{F}$ and any $\eps>0$ there is $g\in \mc{G}$ such that
\begin{align}
    \sup_{z\in \Omega} |f(z)-g(z)|\leq \eps.
\end{align}
We now investigate the relationship of identifiability 
results for  mixings in either $\mc{G}$ or in $\mc{F}$.

While this point is completely general,  we will 
discuss this for concreteness in the context of nonlinear ICA and considering polynomial mixing functions for $\mc{G}$. We   also assume that $\mc{F}$ are all diffeomorphisms (onto their image).  
Suppose we have latents $Z\sim \mathcal{U}([-1,1]^d)$
and observe a mixture $X=h(Z)$.
We use the shorthand $\Omega=[-1,1]^d$ from now on.
Let us suppose that $h\in \mc{G}$. Then the following result holds.
\begin{lemma}
Suppose $\alt{h}(Z)\indistribution X \indistribution h(Z)$ where $Z\sim \mc{U}(\Omega)$, $h,\alt{h}\in \mc{G}$ and $h$ satisfies 
an injectivity condition as defined in Assumption~2 in \cite{ahuja2022interventional}.
Then $h$ and $\alt{h}$ agree up to permutation of the variables.
\end{lemma}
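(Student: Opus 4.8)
The plan is to reduce the statement to the observational identifiability theorem for polynomial mixing established in \cite{ahuja2022interventional}, whose hypotheses the present setup has been tailored to meet, and to make explicit the geometric mechanism that distinguishes the rigid class $\mc{G}$ from the flexible class $\mc{F}$. The starting point is to pass from the two mixings to the reparametrization between them. Since $Z\sim\mc{U}(\Omega)$ has full support on the compact cube $\Omega=[-1,1]^d$ and both $h,\alt{h}\in\mc{G}$ are injective polynomials (injectivity being part of satisfying the Assumption~2 condition), each is a homeomorphism onto its image, and the equality $h(Z)\indistribution\alt{h}(Z)$ forces the two images, which are the supports of $X$, to coincide, say $S=h(\Omega)=\alt{h}(\Omega)$. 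I would then set $g=\alt{h}^{-1}\circ h\colon\Omega\to\Omega$, a homeomorphism, and note that pushing $\mc{U}(\Omega)$ forward through $g$ returns $\mc{U}(\Omega)$, so $g$ preserves Lebesgue measure, i.e.\ $|\det J_g|=1$ almost everywhere. The target conclusion is exactly that $g$ is a signed coordinate permutation $g=P\Lambda$ with $P$ a permutation matrix and $\Lambda$ diagonal with entries $\pm1$, which is what ``agree up to permutation of the variables'' means for the symmetric support $[-1,1]$.

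Next I would extract the combinatorial skeleton of $g$ from the geometry of $S$. Because the injectivity condition makes $h$ and $\alt{h}$ polynomial embeddings (immersions with full-rank Jacobian), the boundary hypersurface $\partial S=h(\partial\Omega)=\alt{h}(\partial\Omega)$ inherits an intrinsic stratification: its smooth locus is the image of the open facets of $\Omega$, whereas its non-smooth locus of ridges and corners is the image of the lower-dimensional faces. This stratification is determined by the law of $X$ alone, so both $h$ and $\alt{h}$ carry the face lattice of the cube onto the same stratification of $\partial S$. Consequently $g$ must respect the face lattice of $\Omega$, sending facets to facets, edges to edges, and corners to corners, and hence realizes a symmetry of the cube. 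After composing $\alt{h}$ with the corresponding signed permutation it then suffices to treat the case where $g$ fixes each facet $\{z_i=\pm1\}$ setwise.

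The remaining and genuinely hard step is to upgrade this combinatorial symmetry to the statement that $g$ is \emph{exactly} affine, hence a signed permutation. This cannot come from measure preservation together with face-lattice invariance alone, since there exist nonlinear volume-preserving self-diffeomorphisms of the cube that fix every facet; indeed this is precisely why the analogous statement fails for the larger class $\mc{F}$ of general diffeomorphisms, and why density of $\mc{G}$ in $\mc{F}$ does not transfer identifiability. It is here that polynomiality is essential, and rather than reprove it I would invoke the observational identifiability theorem of \cite{ahuja2022interventional}: its full-rank polynomial hypothesis (Assumption~2), combined with the product and bounded support of the independent uniform latents, yields that $h$ and $\alt{h}$ coincide up to an affine map, after which linear ICA for the non-Gaussian uniform sources pins this down to a permutation and a coordinate rescaling. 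I expect this affineness argument to be the main obstacle, as it is the only place where the rigidity of the polynomial class is actually used.

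Finally I would resolve the residual scaling ambiguity. Linear ICA gives agreement up to a permutation and a diagonal rescaling, but both representations push the \emph{same} uniform law on $[-1,1]^d$ onto $S$, so each recovered coordinate must again be supported on $[-1,1]$; this forces every diagonal scale factor to have absolute value one, leaving only the sign flips already absorbed into $\Lambda$. Hence $h=\alt{h}\circ P\Lambda$ for a permutation matrix $P$ and a sign matrix $\Lambda$, which is the claimed identifiability up to permutation of the variables.
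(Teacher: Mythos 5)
Your proof is correct and ultimately takes the same route as the paper: the decisive step in both is to invoke the identifiability theorem of \cite{ahuja2022interventional} to get agreement up to an affine map, and then conclude via linear ICA / independence of the uniform supports. The boundary-stratification and face-lattice detour in your middle paragraphs is harmless but redundant, since it is entirely superseded once you appeal to that theorem.
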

\begin{proof}
    This is a consequence of Theorem~4 in \cite{ahuja2022interventional}. Essentially, they show in Theorem~1 that $h$ and $\tilde{h}$ agree up to a linear map and then use independence of supports to conclude (in our simpler setting here, one could also resort to the identifiability of linear ICA).
\end{proof}
The injectivity condition is a technical condition that ensures that the embedding $h$ is sufficiently diverse, but this is not essential for the discussion here. Let us nevertheless mention here that it is not clear whether this condition is sufficiently loose to ensure identifiability in a dense subspace of $
\mc{F}$ because it requires an output dimension depending on the degree of the polynomials.

We now investigate the implications for $\mc{F}$.
It is well known that ICA in general nonlinear function is not identifiable, i.e., for any $f\in \mc{F}$ there is a $\alt{f}\in \mc{F}$ such that $f(Z)\indistribution \alt{f}(Z)$. To find such an $\alt{f}$ one use, e.g., the
Darmois construction, radius dependent rotations or, more generally, measure preserving transformations $m$ such that $m(Z)\indistribution Z$.
Then we have the following trivial lemma.
\begin{lemma}
  For every $\eps>0$  there are functions $g, \alt{g}\in \mc{G}$ such that 
    \begin{align}\label{eq:eps_approx}
        \sup_\Omega |g-f|<\eps, 
        \quad
        \sup_\Omega |\alt{g}-\alt{f}|<\eps
    \end{align}
    and then the Wasserstein distance satisfies
    $W_2(f(Z), g(Z))<\eps$, $W_2(\alt{f}(Z),\alt{g}(Z))<\eps$.
\end{lemma}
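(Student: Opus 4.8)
The plan is to obtain both conclusions essentially for free: the first directly from the density hypothesis and the second from an elementary coupling argument. First I would invoke the standing assumption that $\mc{G}$ is dense in $\mc{F}$ in the topology of uniform convergence on $\Omega$. Applied to $f\in\mc{F}$ with tolerance $\eps$, this immediately produces some $g\in\mc{G}$ with $\sup_\Omega|g-f|<\eps$; applying it a second time to $\alt{f}$ yields $\alt{g}\in\mc{G}$ with $\sup_\Omega|\alt{g}-\alt{f}|<\eps$. This establishes \eqref{eq:eps_approx} with no further work, and note that the relation $f(Z)\indistribution\alt{f}(Z)$ from the surrounding discussion plays no role here — we only use $f,\alt{f}\in\mc{F}$.

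For the Wasserstein bound I would exploit that $f$ and $g$ share the same domain and can therefore be coupled through a common input. Recall that
\begin{align}
    W_2^2(f(Z),g(Z)) = \inf_{\gamma}\,\EE_{(u,v)\sim\gamma}\,|u-v|^2,
\end{align}
where the infimum runs over all couplings $\gamma$ of the laws of $f(Z)$ and $g(Z)$. Taking the particular coupling induced by pushing a single draw $Z\sim\mc{U}(\Omega)$ through both maps — i.e.\ the law of $(f(Z),g(Z))$, whose marginals are by construction exactly the laws of $f(Z)$ and $g(Z)$ — gives
\begin{align}
    W_2^2(f(Z),g(Z)) \leq \EE_{Z\sim\mc{U}(\Omega)}|f(Z)-g(Z)|^2 \leq \left(\sup_{z\in\Omega}|f(z)-g(z)|\right)^2 < \eps^2,
\end{align}
so that $W_2(f(Z),g(Z))<\eps$. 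The identical argument applied to the pair $\alt{f},\alt{g}$ gives $W_2(\alt{f}(Z),\alt{g}(Z))<\eps$, completing the proof.

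There is no substantive obstacle in this lemma; the only points requiring a modicum of care are recording that $Z\mapsto(f(Z),g(Z))$ is a legitimate coupling and observing that the pointwise supremum over $\Omega$ dominates the $L^2(\mc{U}(\Omega))$ norm precisely because $Z$ is supported in $\Omega$. This is exactly why the statement is flagged as trivial: the real content of this section lies in the \emph{tension} between the approximation guarantee \eqref{eq:eps_approx} and the non-identifiability of $\mc{F}$, not in the lemma itself.
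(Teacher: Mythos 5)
Your proof is correct and matches the paper's argument: the approximants come from the density of $\mc{G}$ in $\mc{F}$ (instantiated via Stone--Weierstrass in the paper's polynomial setting), and the Wasserstein bound follows from the coupling $Z\mapsto (f(Z),g(Z))$ together with \eqref{eq:eps_approx}. No gaps.
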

\begin{remk}
    For the definition and a discussion of the Wasserstein metric we refer to \cite{villani2008optimal}. Alternatively we could add a small amount of noise, i.e. $X=f(Z)+\nu$ and then consider the total variation distance.
\end{remk}
\begin{proof}
    The first part follows directly from the Stone-Weierstrass Theorem which shows that polynomials are dense in the continuous function on compact domains. 
    The second part follows from the coupling $Z\to (f(Z), g(Z))$ and \eqref{eq:eps_approx}.
\end{proof}
The main message of this Lemma is that whenever there are spurious solutions in the larger function class $\mc{F}$ we can equally well approximate the ground truth mixing $f$ and the spurious solution $\alt{f}$ by functions in $\mc{G}$. In this sense, the identifiability of ICA in $\mc{G}$ does not provide any guidance to resolve the ambiguity of ICA in $\mc{F}$. So identifiability results in $\mc{G}$ are not sufficient when there is no reason to believe that the ground truth mixing is exactly in $\mc{G}$.

Actually, one could view the approximation capacity of $\mc{G}$ also as a slight sign of warning as we will explain now. Suppose the ground truth mixing satisfies $g\in \mc{G}$. Since ICA in $\mc{F}$ is not identifiable we can find $\alt{f}\in \mc{F}$ which can be arbitrarily different from $g$ but such that $g(Z)=\alt{f}(Z)$. Then we can approximate $\alt{f}$ by $\alt{g}$ with arbitrarily small error.
Thus, we find almost spurious solutions, i.e., 
$\alt{g}\in \mc{G}$ such that $W_2(\alt{g}(Z), g(Z))<\eps$
for an arbitrary $\eps>0$ but $\alt{g}$ and $g$ correspond to very different data representations, in this sense the identifiability result is not robust.

When only considering the smaller space $\mc{G}$ this problem can be resolved by considering norms or seminorms on $\mc{G}$, for polynomials a natural choice is the degree of the polynomial. 
Indeed, suppose that we observe data generated using a mixing $g\in \mc{G}$. Then there might be spurious solutions $\alt{f}\in \mc{F}$ which can be  approximated by $\alt{g}\in \mc{G}$ but this approximation will generically have a larger norm (or degree for polynomials) than  $g$. 
Therefore, the minimum description length principle \cite{mdp} favors $g$ over $\alt{g}$. 

This reasoning can only be extended to the larger class $\mc{F}$ if the ground truth mixing $f$ (for the relevant applications)
can be better approximated (i.e., with smaller norm) by functions
in $\mc{G}$ than all spurious solutions $\alt{f}$. 
This is hard to verify in practice and difficult to formalize theoretically. Nevertheless, this motivates to look for identifiability results for function classes that are known to be useful representation learners and used in practice, in particular neural nets.
We emphasize that alternatively one can also directly consider norms 
on the larger space $\mc{F}$ penalizing, e.g., derivative norms, to perform model selection.

\section{Additional details on experimental methodology}
\label{sec:expts_theory}

In this appendix, we give more details about our experimental methodology. First, we derive the log-odds and prove \cref{lem: log_odds} in \cref{sec:log_odds}. Then, we use it to design our model and theoretically justify our contrastive approach in \cref{sec:bayes}, by connecting it to the Bayes optimal classifier. Then, we describe the NOTEARS regularizer in \cref{sec:notears}, describe the limitations of the contrastive approach in \cref{sec:limitations} and finally, in \cref{sec:vaes}, we describe the ingredients needed for an approach via Variational Autoencoders, the difficulties involved and how to potentially bypass them.

\subsection{Proof of \cref{lem: log_odds}}
\label{sec:log_odds}

\Logodds*

\begin{proof}
We will write the log likelihood of $X^{(i)} = f(Z^{(i)})$ using standard change of variables.
As we elaborate in \cref{app:notation}, we have
\begin{align}
Z^{(i)} = (B^{(i)})^{-1}\eps + \mu^{(i)}\text{ with }
B^{(i)} = (D^{(i)})^{-1/2}(I - A^{(i)}), \mu^{(i)} = \eta^{(i)}(B^{(i)})^{-1}e_{\tgt{i}}
\end{align}
Also, denote by $\Theta^{(i)} = (B^{(i)})^\top B^{(i)}$ the precision matrix of $Z^{(i)}$ (see \cref{app:notation} for full derivation).
By change of variables,
\begin{align*}
        \ln p_X^{(i)}(x) &= \ln |J_{f^{-1}}| + \ln p_Z^{(i)}(f^{-1}(x))\\
        &= \ln |J_{f^{-1}}| -\frac{n}{2}\ln(2\pi) - \frac{1}{2} \ln |\Sigma^{(i)}| - \frac{1}{2} (f^{-1}(x) - \mu^{(i)})^T \Theta^{(i)}(f^{-1}(x) - \mu^{(i)})
    \end{align*}
where $J_{f^{-1}}$ denotes the Jacobian matrix of partial derivatives.
Let $s^{(1)}, s^{(2)}, \ldots, s^{(d)}$ denote the rows of $B^{(0)}$.
We then compute the log-odds with respect to $X^{(0)}$, the base distribution without interventions (and where $\mu^{(0)} = 0$) to get
\begin{align*}
        &\ln p_X^{(i)}(x) - \ln p_X^{(0)}(x)\\
        &= \left(- \frac{1}{2} \ln \frac{|\Sigma^{(i)}|}{|\Sigma^{(0)}|} - \frac{1}{2} (\mu^{(i)})^\top\Theta^{(i)} \mu^{(i)}\right) - \frac{1}{2} (f^{-1}(x))^\top (\Theta^{(i)}- \Theta^{(0)})(f^{-1}(x)) + (f^{-1}(x))^\top \Theta^{(i)}\mu^{(i)}\\
       &= c_i - \frac{1}{2} (f^{-1}(x))^\top (\rowint{i}\otimes \rowint{i} - \rowobs{i}\otimes \rowobs{i})(f^{-1}(x)) + \eta^{(i)} \cdot (f^{-1}(x))^\top (B^{(i)})^\top  e_{\tgt{i}}\\
       &= c_i - \frac{1}{2} (f^{-1}(x))^\top (\lda_i^2 e_{\tgt{i}}e_{\tgt{i}}^\top - (\rowobs{i})(\rowobs{i})^\top)(f^{-1}(x)) + \eta^{(i)}\lda_i \cdot (f^{-1}(x))^\top e_{\tgt{i}}\\
       &= c_i - \frac{1}{2} (f^{-1}(x))^\top (\lda_i^2 e_{\tgt{i}}e_{\tgt{i}}^\top - (\rowobs{i})(\rowobs{i})^\top)(f^{-1}(x)) + \eta^{(i)} \lda_i \cdot (f^{-1}(x))_{\tgt{i}}\\
        &= c_i - \frac{1}{2} \lda_i^2(((f^{-1}(x))_{\tgt{i}})^2 + \eta^{(i)}\lda_i \cdot (f^{-1}(x))_{\tgt{i}}) + \frac{1}{2} \ip{f^{-1}(x)}{\rowobs{i}}^2
\end{align*}
for a constant $c_i$ independent of $z$.
For the second equality, we used \cref{le:precision}.
\end{proof}

\subsection{A theoretical justification for the log-odds model}
\label{sec:bayes}
In this section, we provide more details on our precise modeling choices for the contrastive approach, and show theoretically why it encourages the model to learn the true underlying parameters.

Recall that, motivated by \cref{lem: log_odds}, we model the log-odds as 
\begin{align}
    g_i(x, \alpha_i, \beta_i, \gamma_i, w^{(i)}, \theta) = \alpha_i-
    \beta_i h^2_{\tgt{i}}(x,\theta)
    +\gamma_i h_{\tgt{i}}(x,\theta) +\langle h(x,\theta), w^{(i)}\rangle^2 
\end{align}
where $h(\cdot, \theta)$ denotes a neural net parametrized by $\theta$, parameters $w^{(i)}$ are the rows of a matrix $W$ and $\alpha_i$, $\beta_i$, $\gamma_i$ are learnable parameters. 
Moreover, we have $g_0(x) = 0$ as we compute the log-odds with respect to $X^{(0)}$. Therefore, the cross entropy loss given by
 \begin{align}
    \mc{L}_{\mathrm{CE}}^{(i)}
    =
- \mathbb{E}_{j\sim \mc{U}(\{0, i\})}
 \mathbb{E}_{x\sim X^{(j)}}\ln \left( \frac{e^{\boldsymbol{1}_{j=i}g_i(x)}}{e^{g_i(x)}+1} \right).
\end{align}

Let $C(x) \in \{0, 1\}$ denote the label of the datapoint $x$ indicating whether it was an observational datapoint or an interventional datapoint respectively. 
By using the cross entropy loss, we are treating the model outputs as logits, therefore we can write down the posterior probability distribution using a logistic regression model as
\begin{align}
    \text{Pr}[C(x) = 1 | x] = \frac{\exp(g_i(x, \alpha_i, \beta_i, \gamma_i, w^{(i)}, \theta))}{1 + \exp(g_i(x, \alpha_i, \beta_i, \gamma_i, w^{(i)}, \theta))}
\end{align} 
Moreover, by using a sufficiently wide or deep neural network with universal approximation capacity and sufficiently many samples, our model will learn the Bayes optimal classifier, as given by \cref{lem: log_odds}. Therefore, we can equate the probabilities to arrive at
\begin{align}
    \frac{\exp(g_i(x, \alpha_i, \beta_i, \gamma_i, w^{(i)}, \theta))}{1 + \exp(g_i(x, \alpha_i, \beta_i, \gamma_i, w^{(i)}, \theta))} &= \frac{\exp(\ln p_X^{(i)}(x) - \ln p_X^{(0)}(x))}{1 + \exp(\ln p_X^{(i)}(x) - \ln p_X^{(0)}(x))}
\end{align}
implying
\begin{align}
    \alpha_i &-
    \beta_i h^2_{\tgt{i}}(x,\theta)
    +\gamma_i h_{\tgt{i}}(x,\theta) +\langle h(x,\theta), w^{(i)}\rangle^2\\
    &= c_i - \frac{1}{2} \lda_i^2(((f^{-1}(x))_{\tgt{i}})^2 + \eta^{(i)}\lda_i \cdot (f^{-1}(x))_{\tgt{i}}) + \frac{1}{2} \ip{f^{-1}(x)}{\rowobs{i}}^2
\end{align}

This suggests that in optimal settings, our model will learn (up to scaling)
\begin{align}
    \al_i = c_i, \quad\beta_i = \frac{1}{2}\lda_i^2, \quad\gam_i = \eta^{(i)}\lda_i, \quad w^{(i)} = s^{(i)}, \quad h(x, \theta) = f^{-1}(x) 
\end{align}
thereby inverting the nonlinearity and learning the underlying parameters.

\subsection{NOTEARS \cite{zheng2018dags}}
\label{sec:notears}

In our algorithm, using the parameters $w^{(i)}$ as rows, we learn the matrix $W$, which we saw in optimal settings will be $B = D^{-1/2}(\id - A)$. Note that the off-diagonal entries of $B$ form a DAG. Therefore, the graph encoded by $W_0$, which is defined to be $W$ with the main diagonal zeroed out, must also be a DAG.

However, in our algorithm, we don't explicitly enforce this acyclicity. There are two ways to get around this. One way is to assume a default causal ordering on the $Z_i$s, which is feasible as we don't directly observe $Z$. Then, we can simply enforce $W_0$ to be triangular and train via standard gradient descent and $W_0$ is guaranteed to be a DAG. However, this approach doesn't work because the interventional datasets are given in arbitrary order and so we are not able to match the datasets to the vertices in the correct order. So this merely defers the issue.

Instead, the other approach that we take in this work is to regularize the learnt $W_0$ to model a directed acyclic graph. Learning an underlying causal graph from data is a decades old problem that has been widely studied in the causal inference literature, see e.g. \cite{chickering2002optimal, zheng2018dags, rajendran2021structure, spirtes1991algorithm, bello2022dagma} and references therein. In particular, the work \cite{zheng2018dags} proposed an analytic expression to measure the DAGness of the causal graph, thereby making the problem continuous and more efficient to optimize over.

\begin{lemma}[\cite{zheng2018dags}]
    A matrix $W \in \RR^{d \times d}$ is a DAG if and only if
    \begin{align}
        \calR := \operatorname{tr}\exp (W \circ W) - d = 0
    \end{align}
    where $\circ$ is the matrix Hadamard product and $\exp$ is the matrix exponential. Moreover, $\calR$ has gradient
    \begin{align}
        \nabla \calR = \exp(W \circ W)^\top \circ 2W
    \end{align}
\end{lemma}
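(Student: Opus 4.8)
The plan is to prove the two claims separately: first the characterization of acyclicity, then the gradient formula, both by expanding the matrix exponential as a power series and exploiting non-negativity.

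For the characterization, I would set $M = W \circ W$, so that $M_{ij} = W_{ij}^2 \geq 0$ and, crucially, $M_{ij} > 0$ if and only if $W_{ij} \neq 0$. Thus $M$ is a non-negative weighted adjacency matrix encoding exactly the same directed graph as $W$. The key combinatorial fact is that $(M^k)_{ii}$ equals the sum of (non-negative) weights of all closed walks of length $k$ based at node $i$, so $\operatorname{tr}(M^k) = \sum_i (M^k)_{ii} \geq 0$. Expanding $\exp(M) = \sum_{k \geq 0} M^k / k!$, which converges absolutely in finite dimensions, and taking the trace gives
\begin{align}
  \calR = \operatorname{tr}\exp(M) - d = \sum_{k \geq 1} \frac{\operatorname{tr}(M^k)}{k!},
\end{align}
using $\operatorname{tr}(M^0) = \operatorname{tr}(\id) = d$. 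Every summand is non-negative, so $\calR \geq 0$, with equality if and only if $\operatorname{tr}(M^k) = 0$ for every $k \geq 1$. Since a directed graph is cyclic precisely when it admits a closed walk of some positive length, this vanishing is equivalent to $W$ encoding a DAG, which establishes the characterization.

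For the gradient, I would first differentiate $\operatorname{tr}\exp(M)$ with respect to $M$. Writing $\operatorname{tr}(M^k) = \sum_{i_1, \ldots, i_k} M_{i_1 i_2} \cdots M_{i_k i_1}$ and using cyclicity of the trace, each of the $k$ factors contributes equally, so $\partial \operatorname{tr}(M^k)/\partial M_{ab} = k (M^{k-1})_{ba}$. Summing the series term by term (again justified by absolute convergence) yields
\begin{align}
  \frac{\partial}{\partial M_{ab}} \operatorname{tr}\exp(M) = \sum_{k \geq 1} \frac{k}{k!} (M^{k-1})_{ba} = \sum_{j \geq 0} \frac{(M^j)_{ba}}{j!} = \big[\exp(M)^\top\big]_{ab}.
\end{align}
Then the chain rule through $M_{ab} = W_{ab}^2$, for which $\partial M_{ab}/\partial W_{ij} = 2 W_{ij}\,\delta_{ai}\delta_{bj}$, gives $\partial \calR/\partial W_{ij} = [\exp(M)^\top]_{ij} \cdot 2 W_{ij}$, which in matrix form is exactly $\nabla \calR = \exp(W \circ W)^\top \circ 2W$.

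I do not expect a serious obstacle here, as this is a direct computation. The only points requiring mild care are justifying term-by-term differentiation of the matrix-exponential series (immediate in finite dimensions) and the bookkeeping behind $\partial \operatorname{tr}(M^k)/\partial M_{ab} = k(M^{k-1})_{ba}$, where the factor $k$ arises from the cyclic symmetry of the trace. The real conceptual crux is simply the observation that passing to $W \circ W$ preserves the support of $W$ while forcing all weights to be non-negative, which is precisely what lets $\operatorname{tr}(M^k)$ detect cycles with a definite sign.
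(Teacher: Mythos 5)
Your proof is correct. The paper itself does not prove this lemma---it is imported verbatim from \cite{zheng2018dags}---and your argument (non-negativity of $W\circ W$ making $\operatorname{tr}((W\circ W)^k)$ a sum of weighted closed walks, plus the termwise differentiation $\nabla_M\operatorname{tr}(M^k)=k(M^{k-1})^\top$ and the chain rule through $M_{ab}=W_{ab}^2$) is exactly the standard derivation given in that reference, so there is nothing to compare beyond noting the two agree.
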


Therefore, we add $\calR_{NOTEARS}(W) = \text{tr}\exp (W_0 \circ W_0) - d$ as a regularization to our loss function. As in prior works, we could also consider the augmented Lagrangian formulation however it leads to additional training complexity.
There have been several follow-up works to NOTEARS such as \cite{yu2019dag} (see \cite{bello2022dagma} for an overview) that suggest different regularization schemes. We leave exploring such alternatives to future work.

\subsection{A Variational Autoencoder approach}\label{sec:vaes}

In this section, we describe the technical details involved with adapting Variational Autoencoders (VAEs) \cite{kingma2013auto, rezende2014stochastic} for our setting.
We highlight some difficulties that we will encounter and also suggest possible ways to work around them.

Indeed, most earlier approaches for causal representation learning have relied on maximum likelihood estimation via variational inference. In particular, they have relied on autoencoders or variational autoencoders \cite{khemakhem2020variational, ahuja2022interventional}.
In our setting, VAEs are a viable approach. More concretely, we can encode the distribution $X^{(0)}$ to $\eps$, then apply the $B^{(i)}$ parameter matrix before finally decoding to $X^{(i)}$. To handle the fact that we don't have paired interventional data as in \cite{brehmer2022weakly}, we could potentially modify the Evidence Lower Bound (ELBO) to include some divergence measure between the predicted and measured interventional data. Finally, this can be trained end-to-end as in traditional VAEs. We now describe this in more detail.

We will use VAEs to encode the observational distribution $X^{(0)}$ to $\eps$, so the encoder will formally model $B^{(0)} \circ (f^{-1}(.) - \mu^{(i)})$ where we use the notation from \cref{app:notation}. Note here that we cannot directly design the encoder to map $X$ to $Z$ because we do not know the prior distribution on $Z$. Indeed, the objective is to learn it.

Let $I$ denote the set of intervention targets. Assume the targets are chosen uniformly at random, which can be done in practice by subsampling each interventional distribution to have the same size.
Suppose we had paired counterfactual data $\calD = \cup_{i \in I}\calD_i$ of paired interventional datasets, i.e. $\calD_i$ consists of pairs $(X^{(0)}, X^{(i)})$ with corresponding probability density denoted $p^{(i)}(x^{(0)}, x^{(i)})$.
Following \cite{kingma2013auto, zhao2019infovae}, define an amortized inference distribution as $q(\eps | x^{(0)})$. Then, we can bound the expected log-likelihood as
\begin{align}
    \EE_{\calD} [\ln p^{(i)}(x^{(0)}, x^{(i)})]
    &= \EE_{i \sim \calU(I)}\EE_{\calD_i} [\ln p^{(i)}(x^{(0)}, x^{(i)})]\\
    &= \EE_{i \sim \calU(I)}\EE_{\calD_i} \ln\int_{\eps}  p^{(i)}(x^{(0)}, x^{(i)}, \eps) \,d\eps\\
    &= \EE_{i \sim \calU(I)}\EE_{\calD_i} \ln\int_{\eps} \frac{p^{(i)}(x^{(0)}, x^{(i)}, \eps) q(\eps | x^{(0)})}{q(\eps | x^{(0)})} \,d\eps\\
    &= \EE_{i \sim \calU(I)}\EE_{\calD_i} \ln\EE_{\eps \sim q(\eps | x^{(0)})}\frac{p^{(i)}(x^{(0)}, x^{(i)}, \eps)}{q(\eps | x^{(0)})}\\
    &\ge \EE_{i \sim \calU(I)}\EE_{\calD_i} \EE_{\eps \sim q(x^{(0)})}\ln\frac{p^{(i)}(x^{(0)}, x^{(i)}, \eps)}{q(\eps | x^{(0)})} \qquad\qquad \text {(Jensen's inequality)}\\
    &= \EE_{i \sim \calU(I)}\EE_{\calD_i} \EE_{\eps \sim q(x^{(0)})}\ln\frac{p^{(i)}(x^{(0)}, x^{(i)}|\eps) p(\eps)}{q(\eps | x^{(0)})} \\
    &= \EE_{i \sim \calU(I)}\EE_{\calD_i} \left(\EE_{\eps \sim q(x^{(0)})}\ln p^{(i)}(x^{(0)}, x^{(i)}|\eps) - \EE_{\eps \sim q(x^{(0)})}\ln\frac{q(\eps | x^{(0)})}{p(\eps)}\right) \\
    &= \EE_{i \sim \calU(I)}\EE_{(x^{(0)}, x^{(i)}) \sim \calD_i}
    \biggl[\EE_{\eps \sim q(\eps | x^{(0)})} [\ln p^{(i)}(x^{(0)}, x^{(i)} | \eps)]\\
    &\hspace{18em}- \textrm{KL}(q(\eps | x^{(0)})||N(0, I)) \biggr]
\end{align}
where the last term is the standard Evidence Lower Bound (ELBO).
This can then be trained end-to-end via the reparameterization trick.
A similar approach was taken in \cite{brehmer2022weakly} who had access to paired counterfactual data.

However, we do not have access to such paired interventional data, but instead we only observe the marginal distributions $X^{(i)}$. To this end, conditioned on $\eps$, we can split the term $\ln p^{(i)}(x^{(0)}, x^{(i)} | \eps) = \ln \alt{p}^{(0)}(x^{(0)} | \eps) + \ln \alt{p}^{(i)}(x^{(i)} | \eps)$ where $\alt{p}^{(i)}$ denotes the density of the intervened marginal of $p^{(i)}$, which corresponds to using the interventional decoder $f \circ ((B^{(i)})^{-1}(.) + \mu^{(i)})$. Nevertheless, $\EE_{\eps \sim q(x^{(0)})} [\ln \alt{p}^{(i)}(x^{(i)} | \eps)]$ is still not tractable in unpaired settings. As a heuristic, we could consider a modified ELBO by modifying this term to measure some sort of divergence metric between the true interventional data $X^{(i)}$ and the generated interventional data $\alt{p}^{(i)}(x^{(i)} | \eps)$, similar to \cite{zhao2019infovae}. We leave it for future work to explore this direction.

\section{Additional experimental results}\label{app:ablation}
In this section, we provide additional experimental results and discussions to explore the effect of shift strength of the interventions, data scale, and noise distribution.
\subsection{Limitations of the contrastive approach}
\label{sec:limitations}
\begin{figure}[!ht]
    \centering
      \includegraphics[width=\textwidth,
      ]{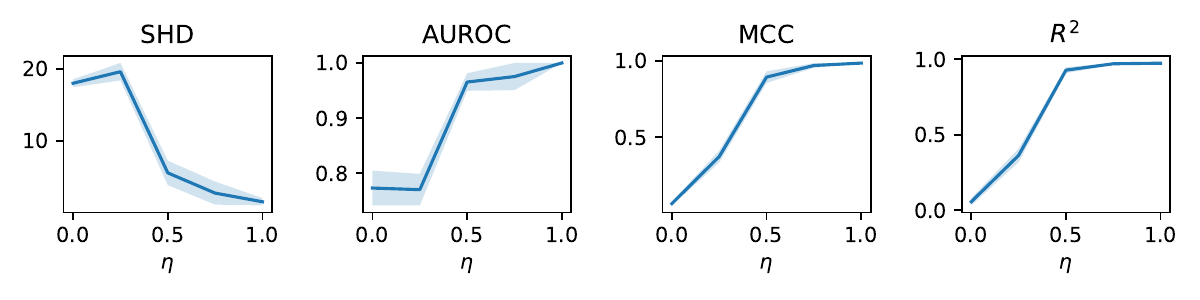}
    \caption{Dependence of the performance metrics on
    the  shift $\eta$. Other settings 
   are as for nonlinear mixing functions (see Table~\ref{tab:p2}) with $\mathrm{ER}(10,2)$
   graphs and $\dimx=100$.
    }
    \label{fig:shift}
\end{figure}

\begin{figure}[!ht]
    \centering
      \includegraphics[width=.5\textwidth,
      ]{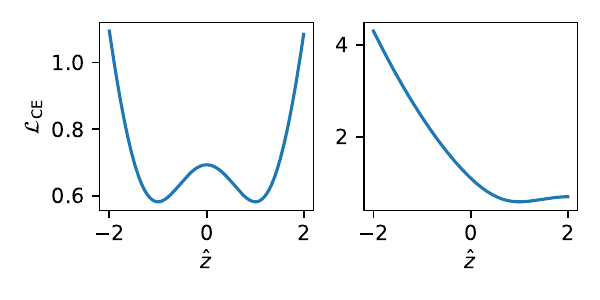}
    \caption{Cross entropy loss 
    for $a=1$, $c=0$, $z_0=1$
    and $b=0$ (left), $b=2$ (right)
    as a function of the estimated latent variable $\hat{z}$.
    }
    \label{fig:non_convex}
    \end{figure}
As mentioned in the main part of the paper, our contrastive algorithm struggles to recover the ground truth latent variables when considering interventions without shifts. 
We illustrate the dependence on the shift strength in Figure~\ref{fig:shift}.
In this section, we provide some evidence why this is the case.
Apart from setting the stage for future works, this also enables practitioners to be aware of potential pitfalls when applying our techniques.

\begin{figure}[!ht]
    \centering
      \includegraphics[width=.9\textwidth,
      ]{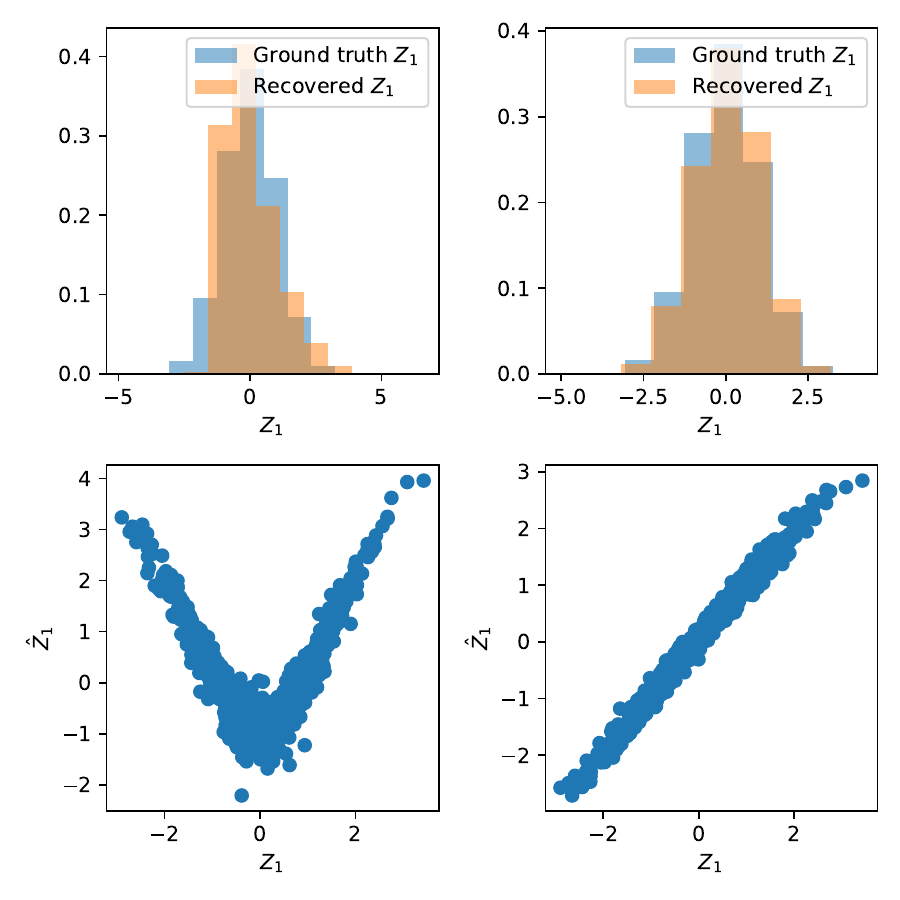}
    \caption{Density of standardized ground truth latents and recovered latents for $\eta=0$ (top left) and
    $\eta=1$ (top right), scatter plot of the ground truth latent variables $Z_1$ and recovered latent variables $\hat{Z}_1$ for 
    $\eta=0$ (bottom left) and $\eta=1$
    (bottom right). Results shown for $\mathrm{ER}(10, 2)$ graphs and $\dimx=100$, all further parameters as in Table~\ref{tab:p2}.    }
    \label{fig:hist}
    \end{figure}
We suspect that the main reason for this behavior is the non-convexity of the parametric output layer. 
Note that this is very different from the well known non-convexity of (overparametrized) neural networks.
While neural networks parametrize non-convex functions their final layer is typically a convex optimization problem, e.g., a least squares regression or a logistic regression on the features. Moreover, it is well understood that convergence to a global minimizer using gradient descent is possible under suitable conditions, see, e.g.,  \cite{belkin}.

This is different for our quadratic log-odds expression where gradient descent is not sufficient to find the global minimizer. To illustrate this we consider the case of a single latent, i.e., $d=1$, then the ground truth parametric form of the log-odds
in \ref{eq:log_odds} can be expressed as 
\begin{align}
    \ln p_X^{(1)}-\ln p_X^{(0)}(x)=g(x)=h (f^{-1}(x))=
    a(f^{-1}(x)-b)^2 +c
\end{align}
for some constants $a$, $b$, $c$
and $b=0$ if $\eta^{(1)}=0$, i.e., there is no shift. We moreover assume, for the sake of argument, that the final layer implementing $h(z)=a(z-b)^2 + c$  is fixed to the ground truth parametric form of the log-odds (then there is also no scaling ambiguity left).
Now, let us fix a point $x_0$
and define $z_0=f^{-1}(x_0)$
and
\begin{align}
    p = \mathbb{P}(i=1|X=x_0)
    = \frac{ p_X^{(1)}(x_0)}{
     p_X^{(1)}(x_0) +  p_X^{(0)}(x_0)
    }=\frac{e^{g(x_0)}}{1+e^{g(x_0)}}.
\end{align}
 Then the (sample conditional) cross entropy loss as a function
 of $\hat{z}=\hat{f}^{-1}(x_0)$
 for our learned function $\hat{f}$
 is given by
 \begin{align}
  \mc{L}_{\mathrm{CE}}=  - p \ln\left(\frac{e^{h(\hat{z})}}{1+e^{h(\hat{z})}}\right)
    - (1-p)
    \ln\left(\frac{1}{1+e^{h(\hat{z})}}\right)
    =-p h(\hat{z})+\ln(1+e^{h(\hat{z})})
 \end{align}
 We here drop the intervention index, since we focus on a one dimensional illustration.
We plot two examples of such loss functions for $b=0$ and $b\neq 0$
in Figure~\ref{fig:non_convex}.

To verify that this is indeed the reason for our failure to recover the ground truth latent variables, we investigate the relation between estimated latent variables and ground truth latent variables. The result can be found in Figure~\ref{fig:hist} and, as we see, when $\eta = 0$, the
distribution of the recovered latents is skewed. Moreover, we find that we essentially recover the latent variable up to a sign, i.e., $\hat{Z}=|Z|$ (there is a small offset because we enforce $\hat{Z}$ to be centered).
Note that this explains the tiny MCC scores in Table~\ref{tab:1}, since $\EE(Z|Z|)=0$
for symmetric distributions.
This observation might also be a potential starting point to improve our algorithm.
For non-zero shifts we recover the latent variables almost perfectly.

\subsection{Varsortability and data scale}

An important observation in causal discovery was that many benchmark settings exhibit specific correlation structures that can be exploited to infer causal directions.
The most prominent example is varsortability \cite{weichwald2021beware} which  measures how well the causal order agrees with 
the order induced by the magnitude of the variance. 
This is a measure between 0 and 1 and for values close to 0 and 1 the variances contain a lot of information about the causal order while close to $0.5$ this is not the case.
It was shown in \cite{weichwald2021beware} that 
typical parameter choices exhibit high varsortability $\approx 1$ which can be exploited by trivial algorithms.
A related notion is $R^2$-varsortability \cite{reisach2023simple} which is a similar notion where the variance of the variable is replaced by the residual variance after standardizing the variables and then regressing out all other variables, i.e., the unexplained variance. 

\begin{table}[!ht]
    \centering
\caption{Results for nonlinear synthetic data with $n=10000$ and standardized latent variables $Z$ (up to standardization the setting is the same as in Table~\ref{tab:2} in the paper).}
    \label{tab:standard}
\vspace{.25cm}
\begin{tabular}{llllll}
\toprule
Setting & Method & SHD $\downarrow$ & AUROC $\uparrow$ & MCC $\uparrow$ & $R^2$ $\uparrow$ \\
\midrule
\multirow{2}{*}{ER(5, 2),  $d'= 20 $} & Contrastive Learning & $2.6 \pm 0.4$ & $0.92 \pm 0.02$ & $0.96 \pm 0.01$ & $0.93 \pm 0.01$ \\
 & VAE & $10.0 \pm 0.0$ & $0.50 \pm 0.00$ & $0.11 \pm 0.01$ & $0.04 \pm 0.01$ \\
 \midrule
\multirow{2}{*}{ER(5, 2),  $d'= 100 $} & Contrastive Learning & $1.4 \pm 0.4$ & $0.98 \pm 0.02$ & $0.98 \pm 0.00$ & $0.97 \pm 0.01$ \\
 & VAE & $10.0 \pm 0.0$ & $0.50 \pm 0.00$ & $0.14 \pm 0.03$ & $0.10 \pm 0.04$ \\
 \midrule
\multirow{2}{*}{ER(10, 2),  $d'= 20 $} & Contrastive Learning & $9.6 \pm 2.1$ & $0.90 \pm 0.03$ & $0.90 \pm 0.01$ & $0.84 \pm 0.01$ \\
 & VAE & $18.6 \pm 0.9$ & $0.50 \pm 0.00$ & $0.27 \pm 0.04$ & $0.29 \pm 0.04$ \\
 \midrule
\multirow{2}{*}{ER(10, 2),  $d'= 100 $} & Contrastive Learning & $2.8 \pm 1.4$ & $0.99 \pm 0.01$ & $0.98 \pm 0.00$ & $0.96 \pm 0.00$ \\
 & VAE & $18.6 \pm 0.9$ & $0.50 \pm 0.00$ & $0.14 \pm 0.03$ & $0.11 \pm 0.04$ \\
\bottomrule
\end{tabular}
\end{table}
In our setting the varsortability of the latent variables $Z$ is quite high (see \cite{weichwald2021beware} for the full definition), e.g., 
for $d=10$ and $k=2$ average varsortability is $0.92\pm 0.02$. 
Note that it is not obvious how this can be exploited algorithmically. Indeed, we only observe a mixture $X=f(Z)$ and the scale of $Y$ is not  identifiable. 
Nevertheless, to rule out that we implicitly  use the scale of the variables we ran the same experiments except that we standardized the latent variables $Z$ of the observable distribution (and apply the same scaling to the interventional distributions).  The results can be found in 
Table~\ref{tab:standard} and are roughly similar to the result in Table~\ref{tab:2} for the
same settings without standardization.
Note that $R^2$-sortability is not substantially different from $.5$ (roughly $0.4$) in our settings, thus 
the $R^2$-sortability cannot be exploited to infer the causal structure, even when observing $Z$.

\subsection{Effect of noise distribution}

\begin{table}[!ht]
    \centering
    \caption{Noise dependence of contrastive algorithm for $\mathrm{ER}(5,2)$, $d'=20$, $n=10000$
    and nonlinear mixing
    (same setting as first row of Table~\ref{tab:2} in the paper).}\label{tab:noise}
    \vspace{.25cm}
\begin{tabular}{llllll}
\toprule
Noise distribution  & SHD $\downarrow$ & AUROC $\uparrow$ & MCC $\uparrow$ & $R^2$ $\uparrow$ \\
\midrule
Gaussian  & $1.8 \pm 0.5$ & $0.97 \pm 0.01$ & $0.97 \pm 0.00$ & $0.96 \pm 0.00$ \\
Laplace  & $3.0 \pm 0.6$ & $0.89 \pm 0.03$ & $0.93 \pm 0.01$ & $0.89 \pm 0.01$ \\
Gumbel  & $3.0 \pm 0.9$ & $0.92 \pm 0.02$ & $0.94 \pm 0.01$ & $0.91 \pm 0.01$ \\
Uniform  & $3.4 \pm 0.4$ & $0.87 \pm 0.03$ & $0.96 \pm 0.01$ & $0.93 \pm 0.01$ \\
Exponential & $3.8 \pm 0.5$ & $0.86 \pm 0.02$ & $0.89 \pm 0.02$ & $0.86 \pm 0.02$ \\
\bottomrule
\end{tabular}
\end{table}

Our theoretical results only cover the setting of Gaussian SCMs and we even show that identifiability does not necessarily hold for uniformly distributed noise (see Appendix~\ref{app:counter}). Nevertheless, we can evaluate the performance of our algorithm for linear SCMs with non-Gaussian noise distribution. 
We rerun one of the settings in Table~\ref{tab:2} for different noise distributions. The results can be found in Table~\ref{tab:noise}, and they show a slightly worse but still reasonable performance than for Gaussian data. This is in line with the observation that using a Gaussian likelihood for non-Gaussian data 
gives good results in, e.g., causal discovery.

\section{Hyperparameters, architectures and further additional details on experiments}\label{sec:expts_pract}

In this section, we give additional details on the experiments. We use PyTorch \cite{paszke2019pytorch} for all our experiments.

\paragraph{Data generation}
To generate the latent DAG and sample the latent variables we use the sempler package \cite{gamella2022characterization}.
We always sample $\mathrm{ER}(d,k)$ graphs and the non-zero weights are sampled from the distribution $w_{ij}\sim\mc{U}(\pm[0.25, 1.0])$.
To generate
linear synthetic data, we sample all entries of the mixing matrix i.i.d.\ 
from a standard Gaussian distribution.
For 
non-linear synthetic data  $f$ is given by the architecture in \cref{tab:synth_datagen} with weights sampled from $\mc{U}([-\frac{1}{\sqrt{in_{feat}}}, \frac{1}{\sqrt{in_{feat}}}])$.
For image data, similar to \cite{ahuja2022interventional}, the latents $Z$ are the coordinates of balls in an image (but we sample them differently as per our setting) and a $64 \times 64 \times 3$ RGB image is rendered using Pygame \cite{shinners2011pygame} which encompasses the nonlinearity.
Other parameter choices such as dimensions, DAGs, variance parameters, and shifts are already outlined in \cref{sec:expts} but for clarity we also outline them in Tables~\ref{tab:p1}, \ref{tab:p2}, and
\ref{tab:p3}.
\begin{table}[!ht]
\caption{Parameters used for linear synthetic data.}
\label{tab:p1}
\centering
\begin{tabular}{l l}
\toprule
$d$ & 5 \\
$\dimx$ & 10\\
$k$ & 3/2\\
$n$ & $\{2500, 5000, 10000, 25000, 50000\}$\\
\vspace{.1em}
$\sigma_{\mathrm{obs}}^2$ & $\mc{U}([2,4])$\\
\vspace{.1em}
$\sigma_{\mathrm{int}}^2$ & $\mc{U}([6,8])$\\
\vspace{.1em}
$\eta^{(i)}$ & 0\\
runs & 5\\
mixing & linear\\
\bottomrule
\end{tabular}
\end{table}
\begin{table}[!ht]
\parbox{.45\linewidth}{
\caption{Parameters used for nonlinear synthetic data.}
\label{tab:p2}
\centering
\begin{tabular}{l l}
\toprule
$d$ & $\{5, 10\}$ \\
$\dimx$ & $\{20, 100\}$\\
$k$ & $\{1, 2\}$\\
$n$ & $10000$\\
\vspace{.1em}
$\sigma_{\mathrm{obs}}^2$ & $\mc{U}([1,2])$\\
\vspace{.1em}
$\sigma_{\mathrm{int}}^2$ & $\mc{U}([1,2])$\\
\vspace{.1em}
$\eta^{(i)}$ & $\mc{U}(\pm[1,2])$\\
runs & 5\\
mixing & 3 layer mlp\\
\bottomrule
\end{tabular}
}
\hfill
\parbox{.45\linewidth}{
\caption{Parameters used for image data.}
\label{tab:p3}
\centering
\begin{tabular}{l l}
\toprule
$d$ & $\{4, 6\}$ \\
$\dimx$ & $3\cdot 64^2$\\
$k$ & $\{1, 2\}$\\
$n$ & $25000$\\
\vspace{.1em}
$\sigma_{\mathrm{obs}}^2$ & $\mc{U}([0.01,0.01])$\\
\vspace{.1em}
$\sigma_{\mathrm{int}}^2$ & $\mc{U}([0.01,0.02])$\\
\vspace{.1em}
$\eta^{(i)}$ & $\mc{U}(\pm[0.1,0.2])$\\
runs & 5\\
mixing & image rendering\\
\bottomrule
\end{tabular}
}
\end{table}

\begin{table}[!ht]
\centering
\caption{Synthetic data generation}
\label{tab:synth_datagen}
\begin{tabular}{c l}
\toprule
Architecture & Layer Sequence\\
\midrule
\emph{MLP Embedding} & Input: $z \in \RR^d$  \\
 \midrule
  & FC $512$, LeakyReLU($0.2$)  \\
  & FC $512$, LeakyReLU($0.2$)  \\
  & FC $512$, LeakyReLU($0.2$)  \\
  & FC $\dimx$ $\longleftarrow$ Output $x$\\
\bottomrule
\end{tabular}
\end{table}

\paragraph{Models}

For synthetic data, the model architecture for $h(x, \theta)$ is a one hidden-layer MLP with 512 hidden units and LeakyReLU activation functions.
For the parametric final layer, we decompose the matrix $W$ as $W=D(\id - A)$
where $D$ is a diagonal matrix and $A$
is a matrix with a masked diagonal and they are both learned. This factorization shall improve stability. We initialize $A=0$ and $D=\id$. Also, the sparsity penalty and the DAGness penalty are only applied to $A$. 
For the baseline VAE we use the same architecture for both encoder and decoder.
The code for the linear baseline is from \cite{seigal2022linear}.

For image data, the model architectures for $h(x, \theta)$  are small convolutional networks. For the contrastive approach the architecture can be found in \cref{tab:image_contrastive}. For the VAE model, we use the encoder architecture as in \cref{tab:image_vae_encoder} and the decoder architecture as in \cref{tab:image_vae_decoder} respectively.

\begin{table}[!ht]
\centering
\caption{Contrastive model architecture for image data.}
\label{tab:image_contrastive}
\begin{tabular}{c l}
\toprule
Architecture & Layer Sequence\\
\midrule
\emph{Conv Embedding} & Input: $x \in \RR^{64 \times 64 \times 3}$  \\
 \midrule
  & Conv (3, 10, kernel size = 5, stride = 3), ReLU()\\
  & MaxPool (kernel size = 2, stride = 2)\\
  & FC 64, LeakyReLU()\\
  & FC $d$ $\longleftarrow$ Output $\hat{z}$\\
\bottomrule
\end{tabular}
\end{table}

\begin{table}[!ht]
\centering
\caption{VAE Encoder architecture for image data}
\label{tab:image_vae_encoder}
\begin{tabular}{c l}
\toprule
Architecture & Layer Sequence\\
\midrule
\emph{Conv Encoder} & Input: $x \in \RR^{64 \times 64 \times 3}$  \\
 \midrule
  & Conv(3, 16, kernel size = 4, stride = 2, padding = 1), LeakyReLU()\\
  & Conv(16, 32, kernel size = 4, stride = 2, padding = 1), LeakyReLU()\\
  & Conv(32, 32, kernel size = 4, stride = 2, padding = 1), LeakyReLU()\\
  & Conv(32, 64, kernel size = 4, stride = 2, padding = 1), LeakyReLU()\\
  & FC $2d$ $\longleftarrow$ Output (mean, logvar)\\
\bottomrule
\end{tabular}
\end{table}

\begin{table}[!ht]
\caption{VAE Decoder architecture for image data}
\label{tab:image_vae_decoder}
\centering
\begin{tabular}{c l}
\toprule
Architecture & Layer Sequence\\
\midrule
\emph{Deconv Decoder} & Input: $\hat{z} \in \RR^d$  \\
 \midrule
  & FC $(3 \times 3) \times d$\\
  & DeConv (d, 32, kernel size = 4, stride = 2, padding = 0), LeakyReLU()\\
  & DeConv (32, 16, kernel size = 4, stride = 2, padding = 1), LeakyReLU()\\
  & DeConv (16, 8, kernel size = 4, stride = 2, padding = 1), LeakyReLU()\\
  & DeConv (8, 3, kernel size = 4, stride = 2, padding = 1), LeakyReLU()\\
  & Sigmoid() $\longleftarrow$ Output $\hat{x}$\\
\bottomrule
\end{tabular}
\end{table}

\paragraph{Training}
For training we use the hyperparameters
outlined in Table~\ref{tab:t1}. 
We use a $80$--$20$ split for train and validation/test set respectively. 
After subsampling each dataset to the same size, we copy the observation samples for each interventional dataset in order to have an equal number of observational and interventional samples so they can be naturally paired during the contrastive learning.  
We select the model with the smallest validation loss on the validation set, where we only use the cross entropy loss for validation.
For the VAE baseline we use the standard VAE validation loss for model 
selection.

\begin{table}[!ht]
\caption{Hyperparameters used for training.}
\label{tab:t1}
\centering
\begin{tabular}{l l}
\toprule
$\tau_1$ (see Eq.~\eqref{eq:loss_final})
& $10^{-5}$\\
$\tau_2$ (see Eq.~\eqref{eq:loss_final})
& $10^{-4}$\\
learning rate & $5\cdot 10^{-4}$\\
batch size & 512 \\
epochs & 200 (synthetic data), 100 (image data)\\
optimizer & Adam \cite{kingma2014adam}
\\
learning rate scheduler & CosineAnnealing \cite{cosine}\\
\bottomrule
\end{tabular}
\end{table}

\paragraph{Compute and runtime}
The experiments using synthetic data were run on 2 CPUs with 16Gb RAM on a compute cluster. For image data we added a GPU to increase the speed.
The runtime per epoch for the synthetic data and 10000 samples was roughly 10s. 
The total run-time of the reported experiments was about 100h (i.e. 200 CPU hours) and 20h on a GPU, but hyperparameter selection and preliminary experiments required about 10 times more CPU compute.

\paragraph{Mean Correlation Coefficient (MCC)}

Mean Correlation Coefficient (MCC) is a metric that has been utilized in prior works \cite{khemakhem2020ice, kivva2022identifiability} to quantify identifiability. 
MCC measures linear correlations up to permutation of the components. To compute the best permutation, a linear sum assignment problem is solved and finally, the correlation coefficients are computed and averaged over. 
A high MCC indicates that the true latents have been recovered.
In this work, we compute the transformation using half of the samples and then report the MCC using the other half, i.e. the out-of-distribution samples.

\paragraph{Further experimental attempts and dead Ends }

We will briefly summarize a few additional settings we tried in our experiments.
\begin{itemize}
    \item We fixed $D=\id$ in the parametrization of $W$. This fixes the scaling of the latent variables in some non-trivial way. This resulted in very similar results.
    \item We tried to combine the contrastive algorithm with a VAE approach. Since the ground truth latent variables $Z$ are highly correlated, they are not suitable for an autoencoder that typically assumes a factorizing prior. Therefore,  we  map the estimated latent variables for the observational distribution to the noise distribution $\eps$ which factorizes. Note that for the ground truth latent variables the relation 
    $\eps = B^{(0)}Z^{(0)}$ holds 
    so we consider
   $\hat{\eps}=W\hat{Z}$. We use $\hat{\eps}$ as the latent space of an autoencoder on which we then stack a decoder. 
   We train using the observational samples and the ELBO for the VAE part and with the contrastive loss (and interventional and observational samples) for the $\hat{Z}$ embedding.
   This resulted in a slightly 
    worse recovery of the latent variables (as measured by the MCC score) but much worse recovery of the graph. We suspect that this originates from the fact that the matrix $W$ is used both in the parametrization of the log-odds and to map $\hat{Z}$ to $\eps$ which might make the learning more error-prone.
     Note that this is different from the suggestion in Section~\ref{sec:vaes}.
    \item Choosing a larger learning rate for the parametric part than for the non-parametric part seemed to help initially, but using the same sufficiently small learning rate for the entire model was more robust.
    \item We tried larger architectures for both synthetic and image data. They turned out to be more difficult to train without offering any benefit.
    \item  For image data we often observed overfitting when using smaller sample sizes. The overfitting persisted even when we used a pre-trained ResNet \cite{he2016deep} with frozen layers. We suspect that for image data the performance (in particular the sample complexity) can be further improved by carefully tuning regularization and model size.
\end{itemize}

\end{document}